\documentclass[11pt]{article}
\usepackage{enumerate}
\usepackage{pdfsync}
\usepackage[OT1]{fontenc}

\usepackage[usenames]{color}
\usepackage{smile}
\usepackage[colorlinks,
            linkcolor=red,
            anchorcolor=blue,
            citecolor=blue
            ]{hyperref}
\usepackage{fullpage}
\usepackage{hyperref}
\usepackage[protrusion=true,expansion=true]{microtype}
\usepackage{pbox}
\usepackage{setspace}
\usepackage{tabularx}
\usepackage{float}
\usepackage{wrapfig,lipsum}
\usepackage{enumitem}
\usepackage{microtype}
\usepackage{graphicx}
\usepackage{subfigure}
\usepackage{enumerate}
\usepackage{enumitem}
\usepackage{pgfplots}
\usetikzlibrary{arrows,shapes,snakes,automata,backgrounds,petri}
\usepackage{booktabs} 

\newcommand{\regret}{\mathrm{Regret}}

\newcommand{\KL}{\operatorname{KL}}
\newcommand{\kl}[2]{\ensuremath{{\mathsf{KL}}\left(#1\|#2\right)}}
\newcommand{\polylog}{{\mathrm{polylog}}}
\allowdisplaybreaks
\usepackage{colortbl}
\definecolor{LightCyan}{rgb}{0.8, 0.9, 1}
\definecolor{LightGray}{gray}{0.9}

\def \alg {\mathtt{Alg}}
\def \algname {\text{SMOSS}}

\usepackage{enumitem}
\usepackage{colortbl}
\definecolor{LightCyan}{rgb}{0.8, 0.9, 1}

\usepackage{xcolor}

\ifdefined\final
\usepackage[disable]{todonotes}
\else
\usepackage[textsize=tiny]{todonotes}
\fi
\newcommand{\aopt}{{\cA^{\text{opt}}}}
\newcommand{\asubopt}{{\cA^{\text{sub}}}}
\newcommand{\eventmiss}{{\cE_{\text{miss}}}}
\newcommand{\eventsubopt}{{\cE_{\text{sub}}}}
\newcommand{\eventopt}{{\cE_{\text{opt}}}}
\newcommand{\class}[2]{{\Xi}_{#1, #2}}

\makeatletter
\newcommand*{\rom}[1]{\expandafter\@slowromancap\romannumeral #1@}
\makeatother

\title{\huge Bandits with Multiple Optimal Arms: Minimax Regret and Non-Adaptivity}

\author{
    Kaixuan Ji\thanks{Equal contribution} \thanks{Department of Computer Science, University of California, Los Angeles, CA 90095, USA; e-mail: {\tt kaixuanji@cs.ucla.edu}} 
    ~~
    Qiwei Di\footnotemark[1] \thanks{Department of Computer Science, University of California, Los Angeles, CA 90095, USA; e-mail: {\tt qiwei2000@cs.ucla.edu}}
    ~~
    Qingyue Zhao\footnotemark[1] \thanks{Department of Computer Science, University of California, Los Angeles, CA 90095, USA; e-mail: {\tt zhaoqy24@cs.ucla.edu}}
    ~~
    Heyang Zhao\thanks{Department of Computer Science, University of California, Los Angeles, CA 90095, USA; e-mail: {\tt hyzhao@cs.ucla.edu}} 
    ~~
    Quanquan Gu\thanks{Department of Computer Science, University of California, Los Angeles, CA 90095, USA; e-mail: {\tt qgu@cs.ucla.edu}}
}
\date{}
\begin{document}
\maketitle

\begin{abstract}

We study multi-armed bandits (MAB) with multiple optimal arms, motivated by the fact that many practical decision making problems admit multiple correct answers. 
For $K$-armed bandits with $A$ optimal arms, we first provide a sharper analysis of previous sub-sampling algorithms~\citep{de2021bandits,zhu2020regret}, establishing a $\tilde{O}\Big(\frac{K-A}{\sqrt{KA}}\sqrt{T} \Big)$ minimax regret, where $T$ is the total number of interactions and $\tilde O(\cdot)$ drops all constant and logarithmic factors, improving the previous $\tilde{O}(\sqrt{KT/A})$ regret. We then provide a matching lower bound up to logarithmic factors, indicating that our established rate is nearly minimax-optimal.
We further show that the knowledge of $A$ up to $\tilde{O}(1)$ factors is necessary to achieve near-optimal regret, as near-optimal algorithms for one number of optimal arms must incur substantially larger regret than optimal regret for a smaller number.
Overall, our results provide a comprehensive minimax characterization of $K$-armed bandits with $A$ over the entire range of $1 \leq A \leq K-1$.




\end{abstract}

\section{Introduction}

Multi-armed bandits (MAB) provide a basic model of sequential decision making under uncertainty, whose implication covers a wide range of practical problems~\citep{bouneffouf2020survey,pryzant2023automatic,panda2025adaptive}. In an MAB, a learner repeatedly selects an action from an action set, observes a noisy reward, and aims at maximizing the cumulative reward via balancing exploration and exploitation. Classical results have characterized the difficulty of the problem by a worst case regret $\Theta(\sqrt{KT})$ when $T \geq K$, where $K$ is the number of arms and $T$ is the total number of interactions~\citep{auer2002finite,audibert2009minimax,lattimore2018refining}.


However, in practice, many decision making problems, including code generation~\citep{jimenez2024swe,jain2025livecodebench}, math problem solving~\citep{hendrycks2021measuring} and good recommendation~\citep{kano2019good}, admit multiple correct or satisfactory solutions. 
Such problems motivate the study of bandit learning with multiple correct answers~\citep{degenne2019pure}, on which the benefit of the abundance of optimal arms is not captured by the classical $\Theta(\sqrt{KT})$ regret. 
Prior work exploits this benefit through sub-sampling: select approximately $K/A$ arms and run a standard bandit algorithm on the selected subset~\citep{zhu2020regret,de2021bandits}, yielding an expected regret of $\tilde{O}(\sqrt{KT/A})$, where $A$ is the number of optimal arms. 
However, this rate does not fully capture the benefit when $A$ is close to $K$. Indeed, for $A=K-1$, if we randomly sample two arms and apply any minimax-optimal bandit algorithm, with probability only $O(1/K)$ one of the two arms is sub-optimal, leading to a regret of $O(\sqrt{T}/K)$. This discrepancy raises the question: 
\begin{center}
    \emph{What is the minimax regret of multi-armed bandit with multiple optimal arms?}
\end{center}
In this paper, we solve this question up to logarithmic factors. We provide a refined analysis of previous sub-sampling algorithms, and a novel near-matching lower bound for the regime $A>K/4$ that was not covered by prior work. Formally, writing the class of $K$-armed bandit with $A$ optimal arms as $\class{K}{A}$, our upper bound and lower bound together gives
\begin{align*}
    \inf_{\alg}\sup_{ \nu \in \class{K}{A}} \regret_{\alg, \nu}(T) = \tilde{\Theta}\bigg(\frac{K-A}{\sqrt{KA}}\sqrt{T}\bigg),
\end{align*}
where $\regret_{\alg, \nu}(T)$ denotes expected regret under algorithm $\alg$ and bandit instance $\nu$. This characterization recovers the previous rate when $K-A = \Omega(K)$ and sharpens the previous regret upper bound by a factor of $\tilde{\Theta}((K-A)/K)$ when $K-A = o(K)$.

The minimax characterization also raises an adaptation question that whether a learner can attain the appropriate rate without knowing $A$~\citep{zhu2020regret,de2021bandits}. \citet{de2021bandits} showed that, for well separated $B=o(A)$, if $A+B \leq K$, then no strategy can simultaneously achieve minimax-optimal regret on both $\class{K}{A}$ and $\class{K}{B}$. In this paper, we provide a novel analysis that removes the constraint on $A+B \leq K$, showing that any strategy achieving near-optimal regret on $\class{K}{A}$ must pay an additional multiplicative factor of $\tilde{\Omega}(\sqrt{A/B})$ for regret on $\class{K}{B}$, thus providing a near-complete picture of non-adaptivity. Our contributions are summarized as follows
\begin{itemize}[leftmargin=*]
    \item We provide a refined analysis of previous algorithms for bandit with multiple optimal arms. Together with our novel lower bound for the regime $A > K/4$, we establish the minimax regret for $K$-armed bandit with $A$ optimal arms, up to logarithmic factors, for all $K$, $T \geq K$ and $A \in [1,K-1]$.
    \item We show that, for any well separated $A > B$, any strategy achieving near-optimal regret on $\class{K}{A}$ will demonstrate a multiplicative loss of $\tilde{\Omega}(\sqrt{A/B})$ compared to the minimax-optimal rate on $\class{K}{B}$, necessitating the knowledge of $A$ to achieve the near-optimal regret.
    \item Technically, standard two-point lower-bound arguments based on disjoint sets of optimal arms become ineffective when the number of optimal arms is close to $K$. We overcome this obstacle by considering a larger collection of symmetric instances and comparing them with a common all-equal reference instance.
    Together with a multiplicative change-of-measure argument, this approach enables us to establish worst-case hardness results even when the instances necessarily share optimal arms.
    
\end{itemize}
\noindent\textbf{Notation.} The set $\cA$ are assumed to be finite throughout the paper.
For nonnegative sequences $\{x_n\}$ and $\{y_n\}$, we write $x_n = O(y_n)$ if $\limsup_{n\to\infty}{x_n}/{y_n} < \infty$, $x_n = o(y_n)$ if $\limsup_{n\to\infty}{x_n}/{y_n} =0$, $y_n = \Omega(x_n)$ (interchangeably written as $y_n \gtrsim x_n$) if $x_n = O(y_n)$, and $y_n = \Theta(x_n)$ if $x_n = O(y_n)$ and $x_n = \Omega(y_n)$. We further employ $\tilde{O}(\cdot), \tilde{\Omega}(\cdot)$, and $\tilde{\Theta}(\cdot)$ to hide $\polylog$ factors. 
For a pair of probability measures $P \ll Q$ on the same space, we use $\mathrm{TV}(\PP, \QQ)$ to denote their total variation, and $\kl{P}{Q} \coloneqq \int \log({\ud P}/{\ud Q})\ud P$ to denote their KL divergence. 
We denote $[N] \coloneqq \{1, \cdots, N\}$ for any positive integer $N$. Boldfaced lowercase letters, like $\xb$ are reserved for vectors, whose $i$-th entry is represented as $x_i$.

\section{Related Work}

\paragraph{Bandits with Multiple Optimal Arms.}  MABs with multiple optimal arms has been widely investigated in both regret minimization and pure exploration settings. 
In the pure exploration setting, the existence of multiple correct answer largely reduces the sample complexity~\citep{de2021bandits,katz2020true}, and the asymptotical optimal instance-dependent rate for fixed confidence best arm identification was established when number of optimal arm $A$ is unknown~\citep{degenne2019pure} and known~\citep{truong2026optimal}. 
In the regret minimization setting, when restricting algorithms to be uniformly fast convergent ~\citep[Definition 1]{garivier2019explore}, the instance dependent regret grows with $\sum_{a\in \cA \setminus \cA^*} \Delta_a^{-1} \log T$ asymptotically as $T \to \infty$, linearly with the number of suboptimal arms $K-A$~\citep{lai1985asymptotically,burnetas1996optimal}. The uniform convergence requirement is relaxed in a subsequent line of work that considered bandits with large or even infinite number of arms~\citep{chaudhuri2017pac,chaudhuri2018quantile,aziz2018pure,de2021bandits,zhu2020regret}. Specializing back to our setting, their results give an instance-dependent regret of $K\log T \log(\Delta^{-1}) (A\Delta)^{-1}$~\citep{de2021bandits} and correspondingly, $\tilde{O}(\sqrt{KT/A})$ worst case regret, which is near-optimal when $K-A = \Omega(K)$~\citep{zhu2020regret}, via an algorithmic sub-sampling~\citep{wang2008algorithms,bayati2020unreasonable,kalvit2020finite}. Despite these advances, the minimax characterization of regret is restricted to the regime $K - A = \Omega(K)$, since the standard lower bound techniques like Le Cam's method are ineffective at $K-A = o(K)$. In offline setting, \citet{ji2026optimal} provided a near-optimal characterization for all $A \in (0,K)$, when restricting to uniform behavior policy, through a multiplicative change-of-measure appeared in~\citet{degenne2019pure,garivier2021nonasymptotic}. However, the corresponding result in online setting is not yet clear.

\paragraph{Adaptivity in Bandits.}
Adaptivity asks whether a learner can match the regret attainable when a problem parameter is known, without receiving that parameter as input.
Some examples of affirmative results of adaptivity include adapting to unknown misspecification in contextual bandit~\citep{foster2020adapting}, unknown weak corruption budgets~\citep{wei2022model,liu2024corruption}, and unknown noise variance~\citep{zhao2023variance}. Despite these results, other adaptation problems demonstrate intrinsic hardness, leading to a number of hardness results. Examples in this category include adapting to unknown corruption tolerance in strong corruption~\citep{bogunovic2021stochastic,he2022nearly}, unknown smoothness classes in continuum-armed bandit~\citep{locatelli2018adaptivity,hadiji2019polynomial} and nonparametric contextual bandit~\citep{gur2022smoothness}, unknown reward range~\citep{hadiji2023adaptation}, unknown reward noise distribution~\citep{ashutosh2021bandit,genalti2024adaptive}, and unknown smallest realizable function class within a nested function class~\citep{zhu2022pareto,marinov2021pareto}.
For the proportion of optimal arms that is studied in this paper, \citet{zhu2020regret} prove non-adaptivity and characterize Pareto-optimal gap-independent rates, and \citet{de2021bandits} establish a gap-dependent non-adaptivity lower bound in the reservoir model.
Compared to their proof, our hardness result also cover regimes with $A+B>K$, where $A$ and $B$ are the respective numbers of optimal arms in two $K$-armed instance classes.
Any two instances, one from each class, then share an optimal arm, so the earlier constructions in~\citet{zhu2020regret,de2021bandits} based on disjoint optima cannot establish non-adaptivity in this regime.

\section{Problem Setup}\label{sec:setup}

We consider $K$-armed multi-armed bandit (MAB) with $A$ optimal arms and denote it  with  a tuple $(\cA, r, T)$, where $ K := |\cA| < \infty$ is the number of actions, $r: \cA \to [0, 1]$ is the reward function unknown to the learner,  and $T \geq 1$ is the total number of interactions. At each round $t \in [T]$, the learner selects an action $a_t \in \cA$ according to some strategy, and observes a noisy reward $r_t = r(a_t) + \varepsilon_t$, where $\varepsilon_t$ is $1$-sub-Gaussian~\citep[Definition~5.2]{lattimore2020bandit}. The learner's goal is to minimize the regret:
\begin{align*}
    \regret(T) = \sum_{t=1}^T r^* - \sum_{t=1}^T \EE [r(a_t)],
\end{align*}
where $r^* = \max_{a \in \cA} r(a)$ and the expectation is taken with respect to the algorithm and the noises. In this paper, we consider the case where the set of optimal arms $\aopt := \{a \in \cA: r(a) = \max_{a' \in \cA} r(a')\}$ has a known cardinality $A \in [1, K-1]$. We denote $\asubopt = \cA \setminus \aopt$. Our goal is to characterize the minimax optimality of the instance class
\begin{align}
    \inf_{\alg}\sup_{ \nu \in \class{K}{A}} \regret_{\alg, \nu}(T), \label{eq:minimax-regret}
\end{align}
for given $K,A,T$, where $\class{K}{A}:=\{\nu: | \aopt(\nu) | = A\}$.

\section{Algorithms and Upper Bounds}\label{sec:upper-bound}

In this section, we provide the algorithm for learning a $K$-armed bandit with $A$ optimal arms. The core algorithm design, which first samples a portion of arms and then run a bandit algorithm on the sampled set, follows the previous algorithm design for bandit with many optimal arms~\citep{zhu2020regret,de2021bandits}. In this paper, we provide a sharper analysis, achieving a near-optimal rate on the entire range $A \in [1, K-1]$.
We first summarize the algorithm in Algorithm~\ref{algorithm:few-optimal-arms}. At a high level, the algorithm first randomly samples a subset $\cB \subseteq \cA$ with cardinality 
\begin{align}
    |\cB| = L = \begin{cases}  \min \bigg\{ K-A+1, \bigg\lceil\frac{\log(2T)}{\log(K(K-A)^{-1})} \bigg\rceil \bigg\} & \text{if }A \le K / 2 
    \\  \min \{K-A+1, \lceil \log_2 T + 1 \rceil \} & \text{otherwise}
    \end{cases}. \label{eq:subsample-size} 
\end{align}
Then Algorithm~\ref{algorithm:few-optimal-arms} runs a minimax-optimal algorithm on $\cB$. The following theorem characterizes the regret upper bound of Algorithm~\ref{algorithm:few-optimal-arms}.

\begin{theorem}\label{thm:upper-bound}
The regret of Algorithm~\ref{algorithm:few-optimal-arms} with $L$ set in \eqref{eq:subsample-size} on any $K$-armed bandit instance with horizon $T \geq K$ and exactly $A \in [1,K-1]$ optimal arms admits an upper bound
\begin{align*}
    \regret(T) = \tilde{O}\bigg(\frac{K-A}{\sqrt{KA}}\sqrt{T}\bigg).
\end{align*}
\end{theorem}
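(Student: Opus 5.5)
We bound the regret by conditioning on the random subsample $\cB$ of size $L$, drawn uniformly without replacement from $\cA$. Let $\eventmiss$ be the event $\cB\cap\aopt=\emptyset$. On $\eventmiss^c$, the inner minimax-optimal algorithm (e.g.\ MOSS) run on the $L$ arms of $\cB$ competes with an optimal arm of the full instance, so its regret is at most $C\sqrt{LT}$. On $\eventmiss$ we use the trivial bound $T$, since rewards lie in $[0,1]$. A cleaner route, which we will follow, uses that $\cB$ contains at most $|\cB\cap\asubopt|$ suboptimal arms. Regret bounds of the form $\sqrt{mT}$ apply when only $m$ arms are suboptimal: among $L$ arms with at least one optimal, the minimax regret of MOSS-type algorithms scales like $\sqrt{m T\log(\cdot)}$, since only suboptimal pulls contribute. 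Concretely, we will use the gap-decomposition bound $\sum_{a\in\cB\cap\asubopt}\min\{\Delta_a N_a\}\lesssim\sqrt{|\cB\cap\asubopt|\,T\log T}$, which holds for UCB-type algorithms with an anytime analysis. Taking expectations over $\cB$ gives
\begin{align*}
\regret(T)\lesssim T\,\PP(\eventmiss)+\EE\Big[\sqrt{|\cB\cap\asubopt|\,T\log T}\Big]\le T\,\PP(\eventmiss)+\sqrt{\EE|\cB\cap\asubopt|\,T\log T},
\end{align*}
where the last step is Jensen.

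\textbf{Step 1: the miss probability.} We have $\PP(\eventmiss)=\binom{K-A}{L}/\binom{K}{L}\le((K-A)/K)^L$. If $L=K-A+1$, then $\PP(\eventmiss)=0$ by pigeonhole. Otherwise, for $A\le K/2$ the choice $L\ge\log(2T)/\log(K/(K-A))$ gives $\PP(\eventmiss)\le 1/(2T)$. For $A>K/2$ we have $(K-A)/K<1/2$, so $L\ge\log_2 T+1$ again gives $\PP(\eventmiss)\le 1/(2T)$. Hence the first term is $O(1)$.

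\textbf{Step 2: expected number of sampled suboptimal arms.} By linearity, $\EE|\cB\cap\asubopt|=L(K-A)/K$. The second term is therefore $\tilde O(\sqrt{L(K-A)T/K})$, and it remains to compare this with $\frac{K-A}{\sqrt{KA}}\sqrt T$.

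If $A>K/2$, then $L=\tilde O(1)$ and $K-A\ge1$, so $\sqrt{(K-A)/K}\le (K-A)/\sqrt{K\cdot K/2}\cdot\sqrt{K/(K-A)}$. This is not quite the target: we need $\sqrt{L(K-A)/K}\lesssim (K-A)/\sqrt{KA}$, i.e.\ $L\lesssim (K-A)/A$. That fails when $K-A\ll A$.

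This is the main obstacle. The Jensen step is too lossy, because $|\cB\cap\asubopt|$ is usually $0$, and then the regret is $0$, not $\sqrt{0}$. So we keep the expectation outside and use the inequality $\sqrt{m}\le m$ for integers $m\ge0$. This gives $\EE\sqrt{|\cB\cap\asubopt|}\le L(K-A)/K=\tilde O((K-A)/K)$, which matches $(K-A)/\sqrt{KA}\asymp (K-A)/K$ for $A>K/2$.

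If $A\le K/2$, then $\log(K/(K-A))\ge A/K$ and also $\gtrsim A/K$, so $L=\tilde O(K/A)$ (capped by $K-A+1$). Jensen then gives $\sqrt{L(K-A)T/K}=\tilde O(\sqrt{(K-A)T/A})$. Since $K-A\asymp K$ here, this equals $\tilde\Theta\big(\frac{K-A}{\sqrt{KA}}\sqrt T\big)$.

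The remaining technical point is to verify that the inner algorithm's regret on $\cB$ is indeed bounded by $\sqrt{m T\log T}$ with $m$ the number of suboptimal arms in $\cB$, including the case $m=0$. We will use the standard UCB argument: each suboptimal arm $a$ contributes at most $\min\{\Delta_a T_a,\ \log T/\Delta_a\}$, plus a lower-order term of $O(\Delta_a)$. Then we apply Cauchy–Schwarz over only the $m$ suboptimal arms.
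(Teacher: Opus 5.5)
Your argument is correct, but it takes a different route from the paper, relying on a stronger property of the inner algorithm. On $\eventmiss^\complement$ you bound the inner regret by the instance-adaptive quantity $\sqrt{mT\log T}+O(m)$, where $m=|\cB\cap\asubopt|$. This needs a separate proof via the gap-based UCB analysis plus Cauchy--Schwarz over the $m$ suboptimal arms. You then finish with Jensen when $A\le K/2$ and with $\sqrt m\le m$ when $A>K/2$.

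The paper treats the inner algorithm as a black box with the worst-case guarantee $C\sqrt{LT}$. It uses only the fact that regret is exactly zero on $\eventopt=\{m=0\}$. This gives regret at most $C\sqrt{LT}\,\PP[m\ge 1]+T\,\PP[\eventmiss]$, with $\PP[m\ge 1]\le 1\wedge L(K-A)/K$ by a union bound, which covers both regimes of $A$ in one line.

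The paper's route is more general. It applies to any minimax-optimal inner algorithm, such as MOSS, as Algorithm~\ref{algorithm:few-optimal-arms} permits, and such an algorithm's regret need not adapt to the number of suboptimal arms. Your route is tied to UCB-type inner algorithms, and you should state that restriction explicitly. What it buys is only a slightly better logarithmic dependence on $L$: $\sqrt{T\log T}\,L(K-A)/K$ instead of $\sqrt{LT}\,L(K-A)/K$.

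The ``main obstacle'' you identified also disappears without the adaptive bound. Since $\ind\{m\ge 1\}\le m$, your $\sqrt m\le m$ step goes through verbatim with $C\sqrt{LT}\,\ind\{m\ge 1\}$ in place of $\sqrt{mT\log T}$. This suffices because $L=\tilde O(1)$ when $A>K/2$.
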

Although we analyze the same algorithmic framework as \citet{zhu2020regret}, we establish a strictly superior regret bound via a sharper theoretical analysis. Specifically, our result shaves off a factor of $(K-A)/K$ from the previous $\widetilde{O}(\sqrt{KT/A})$ bound. This improvement is particularly pronounced in bandit instances with a large proportion of optimal arms.
\begin{remark}
\label{rmk:high-prob}
Theorem~\ref{thm:upper-bound} provides an improved guarantee solely in expectation. One may naturally ask whether this improvement can also be attained with high probability. In Appendix~\ref{app:high-prob}, we show that such an extension is impossible. We prove that any algorithm satisfying the regret bound of Theorem~\ref{thm:upper-bound} must suffer $\widetilde{\Omega}(\sqrt{T})$ regret with non-vanishing probability. 
This finding highlights an inherent separation between expected and high-probability regret in the setting of bandits with multiple optima. 
\end{remark}

\subsection{Proof Overview of Theorem~\ref{thm:upper-bound}}

We outline the proof of Theorem~\ref{thm:upper-bound}, deferring the full details to Appendix~\ref{app:proof-upper-bound}. We distinguish three cases according to the composition of the randomly selected subset $\cB$:
\begin{enumerate}[leftmargin=*]
    \item The subset $\cB$ contains no optimal arms, corresponding to the event $\eventmiss := \{\cB \cap \aopt = \varnothing\}$. In this case, we use the trivial regret bound of $T$.
    \item The subset $\cB$ contains both optimal and suboptimal arms, denoted by event $\eventsubopt := \{\cB \cap \asubopt \neq \varnothing\} \cap \{\cB \cap \aopt \neq \varnothing\}$. In this case, running MOSS for $T$ rounds incurs expected regret $O(\sqrt{LT})$.
    \item The subset $\cB$ contains only optimal arms, corresponding to the event $\eventopt := \{\cB \subseteq \aopt\}$. In this case, the regret is zero.
\end{enumerate}

\paragraph{Previous Analysis. }Previously, \citet{de2021bandits,zhu2020regret} distinguish between missing all optimal arms and including at least one. Choosing $L$ of order $K/A$, up to suitable logarithmic factors, makes $\PP[\eventmiss]$ negligible. On the complementary event $\eventsubopt \cup \eventopt$, the expected regret is bounded by $O(\sqrt{LT})$, yielding an overall bound of $\tilde{O}(\sqrt{KT/A})$. This rate is optimal up to logarithmic factors when $K-A=\Omega(K)$.

\paragraph{Our Approach. }The previous analysis is not sharp when $A$ is close to $K$, because it does not exploit the zero regret incurred on $\eventopt$. In this regime, each sampled arm is suboptimal with probability only $(K-A)/K$, and the subset size is $L=\tilde{O}(1)$. A union bound therefore gives
\begin{align*}
\PP[\eventsubopt]
\leq \PP[\eventmiss \cup \eventsubopt]
\leq L\frac{K-A}{K}
= \tilde{O}\bigg(\frac{K-A}{K}\bigg).
\end{align*}
Combining this probability bound with the conditional expected regret bound $\tilde O(\sqrt{T})$ on $\eventsubopt$ yields
\begin{align*}
\regret(T) \lesssim \PP[\eventmiss]T +\PP[\eventsubopt]\sqrt{LT} = \tilde{O}\bigg(\frac{K-A}{K}\sqrt{T}\bigg),
\end{align*}
where the contribution from $\eventmiss$ is made negligible by the choice of $L$. This establishes the desired rate when $A$ is close to $K$.







\begin{algorithm}[t]
\caption{$\algname$~\citep{zhu2020regret,de2021bandits}}\label{algorithm:few-optimal-arms}
    \begin{algorithmic}[1]
    \REQUIRE Action set $\cA$, number of total rounds $T$, number of optimal arm $A$ if $A \le K / 2$.
    \STATE Set $L$ as in~\eqref{eq:subsample-size} and sample a subset $\cB \subseteq \cA$ of $L$ distinct arms uniformly without replacement.
    \STATE Run any minimax-optimal $L$-armed bandit algorithm, such as MOSS~\citep{audibert2009minimax}, on $\cB$ for $T$ rounds.
    \end{algorithmic}
\end{algorithm}


\section{Lower Bounds}\label{sec:lower-bound}

In this section, we provide a lower bound of~\eqref{eq:minimax-regret}. We begin with the regime in which the number of optimal arms is small relative to $K$. The lower bound in this regime is suggested by previous work~\citep{zhu2020regret} and follows from a standard Le Cam two-point argument (see, e.g., \citealt[Theorem~15.2]{lattimore2020bandit}). 

\begin{theorem}\label{thm:lower-few-optimal-arms}
Given any $K$, $1 \leq A \leq K/4$ and $T \geq (K-A)/A$, for any algorithm, there exists a $K$-armed bandit with $A$ optimal arms, on which the algorithm suffers from $\tilde{\Omega}(\sqrt{A^{-1}KT})$ regret.
\end{theorem}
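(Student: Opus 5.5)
The plan is to reduce to the classical $M$-armed minimax lower bound via a group (block) construction. Set $M = \lfloor K/A \rfloor \ge 4$ and partition $\cA$ into $M$ disjoint groups $G_1,\dots,G_M$ of size $A$. Any leftover arms (fewer than $A$) are given reward $0$ throughout; they only add regret, so they can be ignored. For a gap $\Delta \in (0,1/4]$ to be tuned later, define the base instance $\nu_0$ in which every arm of every group has mean $1/2$, with Gaussian $N(\cdot,1)$ noise. For each $j \in [M]$, let $\nu_j$ raise the mean of every arm in $G_j$ to $1/2+\Delta$ and leave everything else unchanged. Then $\nu_j$ has exactly $A$ optimal arms, namely $G_j$. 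Every arm outside $G_j$ is suboptimal by at least $\Delta$, so $\nu_j \in \class{K}{A}$.

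\textbf{Step 1 (change of measure).} Let $N_j(T)$ be the number of pulls of arms in $G_j$. By the divergence decomposition \citep[Lemma~15.1]{lattimore2020bandit}, $\kl{\PP_{\nu_0}}{\PP_{\nu_j}} = \tfrac{\Delta^2}{2}\,\EE_{\nu_0}[N_j(T)]$, since only arms in $G_j$ differ. Since $\sum_j \EE_{\nu_0}[N_j(T)] \le T$, there are at least $M/2$ indices $j$ with $\EE_{\nu_0}[N_j(T)] \le 2T/M$.

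\textbf{Step 2 (regret transfer).} For such a $j$, Pinsker's inequality applied to the $[0,T]$-bounded variable $N_j(T)$ gives
\[
\EE_{\nu_j}[N_j(T)] \le \EE_{\nu_0}[N_j(T)] + T\sqrt{\tfrac{1}{2}\kl{\PP_{\nu_0}}{\PP_{\nu_j}}} \le \tfrac{2T}{M} + \tfrac{T\Delta}{2}\sqrt{2T/M}.
\]
Choose $\Delta = \tfrac{1}{2}\sqrt{M/T}$. This lies in $(0,1/4]$ when $T \ge 4M$, and the condition $T \gtrsim (K-A)/A \asymp M$ in the statement covers this up to a constant; for smaller $T$ the claim is trivial after adjusting constants. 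With this choice, $\EE_{\nu_j}[N_j(T)] \le T/2 + \tfrac{T}{2\sqrt2}\cdot\ldots$ is bounded by $cT$ for some $c<1$; one can shrink the constant in $\Delta$ to ensure this. The regret on $\nu_j$ is then at least $\Delta\,(T - \EE_{\nu_j}[N_j(T)]) \gtrsim \Delta T = \Omega(\sqrt{MT}) = \Omega(\sqrt{KT/A})$, using $M \ge K/(2A)$.

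Alternatively, the whole argument amounts to observing that an algorithm on $\nu$ induces an $M$-armed bandit algorithm: pulling any arm in group $G_i$ is equivalent to pulling "meta-arm" $i$, since all arms in a group have identical distributions. One can then invoke the standard $\Omega(\sqrt{MT})$ minimax bound \citep[Theorem~15.2]{lattimore2020bandit} directly. I would present it this way for brevity. The only subtle point is that the meta-arm simulation requires the reduced instance class to be exactly the one-good-arm Gaussian class used in that theorem's proof, which is indeed the case. The hypothesis $A \le K/4$ ensures $M \ge 4 \ge 2$ and that leftover arms are at most a constant fraction, which guarantees $M \asymp K/A$.

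The main obstacle is the bookkeeping in this simulation argument and the handling of leftover arms and the range condition on $T$; the information-theoretic content is standard. No logarithmic factors are lost.
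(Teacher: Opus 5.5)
Your argument is correct, but it takes a different route from the paper's proof.

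\textbf{The paper's route.} The paper runs a Le Cam two-point argument with both instances inside $\class{K}{A}$. Instance $\nu_0$ puts reward $\Delta$ on an arbitrary $A$-set $\cS_0$ and $0$ elsewhere. Instance $\nu_1$ additionally raises the $A$ least-sampled arms $\cS_1\subseteq\cS_0^\complement$ to $2\Delta$. The Bretagnolle--Huber inequality applied to the event $\{N_{\cS_1}(T)\geq T/2\}$ then forces regret $\Omega(\Delta T)$ on one of the two, with $\Delta\asymp\sqrt{(K-A)/(AT)}$.

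\textbf{Your route.} You use a multiple-hypothesis scheme with a common reference instance. Your $\nu_0$ has all $MA$ block arms tied, so it is not in $\class{K}{A}$. You then take $M$ block alternatives and apply Pinsker to the bounded statistic $N_j(T)$. Your constants work: with $\Delta=\frac{1}{2}\sqrt{M/T}$ the Pinsker term equals $T/(2\sqrt2)$, and $2T/M\leq T/2$ since $M\geq 4$. Hence $\EE_{\nu_j}[N_j(T)]\leq 0.86\,T$, and the regret is $\Omega(\sqrt{MT})=\Omega(\sqrt{KT/A})$.

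\textbf{Comparison.} The paper's version needs no partition or leftover arms, and it keeps both hypotheses in the class. Yours makes the reduction to an $M$-armed problem transparent. It also uses exactly the all-equal-reference device the paper later relies on for Theorems~\ref{thm:lower-many-optimal-arms} and~\ref{thm:non-adaptivity-A-large}, where two-point arguments break down.

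\textbf{Two small corrections.} First, the regime $(K-A)/A\leq T<4M$ is not literally trivial, but capping $\Delta=\min\{1/4,\frac{1}{2}\sqrt{M/T}\}$ handles it. With the cap, the Pinsker term stays below $T/(2\sqrt2)$. Since $T\geq M-1\geq 3M/4$, the resulting $\Omega(T)$ regret is of order $\sqrt{MT}$.

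Second, your meta-arm shortcut misdescribes \citet[Theorem~15.2]{lattimore2020bandit}. Its proof is itself a two-point argument whose second instance has arms at both $\Delta$ and $2\Delta$; it is not a one-good-arm instance. What the lift actually needs is that each hard meta-instance has a \emph{unique} best meta-arm, so that the lifted instance has exactly $A$ optimal arms. Those particular instances do have this property, but the theorem's statement alone does not guarantee it, so you cannot invoke it as a black box. Lifting that proof with blocks of size $A$ essentially reproduces the paper's argument.
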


We next consider the regime in which $A$ is comparable to $K$, where the proof techniques of previous lower bounds~\citep{de2021bandits,zhu2020regret} is not effective. The main difficulty arises when $A$ is close to $K$: the overlap between the sets of optimal arms makes a standard two-point argument ineffective. The lower bound for this regime is stated in the following theorem. 

\begin{theorem}\label{thm:lower-many-optimal-arms}
Given any $K$, $A > K/4$ and $T \geq 4/\log K$, for any algorithm, there exists a $K$-armed bandit with $A$ optimal arms, on which the algorithm suffers from $\tilde{\Omega}((K-A)K^{-1}\sqrt{T})$ regret.
\end{theorem}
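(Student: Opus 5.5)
We prove Theorem~\ref{thm:lower-many-optimal-arms} by comparing many symmetric instances against one common reference. Let $M:=K-A$ denote the number of suboptimal arms; for $A>K/4$ we have $M<3K/4$. The reference instance $\nu_0$ has all $K$ arms with Gaussian rewards of mean $1/2$ and unit variance. For every subset $S\subseteq\cA$ with $|S|=M$, the instance $\nu_S$ lowers the mean of each arm in $S$ to $1/2-\Delta$. Each $\nu_S$ has exactly $A$ optimal arms, so $\nu_S\in\class{K}{A}$. The gap $\Delta$ will be chosen of order $1/\sqrt{T}$ up to logarithmic factors. We then lower bound the average regret over a uniformly random $S$, which lower bounds the worst case.

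\emph{Step 1: rewrite the average regret.} Let $N_a$ be the number of pulls of arm $a$. The regret on $\nu_S$ is $\Delta\,\EE_S[\sum_{a\in S}N_a]$. Averaging over $S$ gives
\[
\frac{1}{\binom{K}{M}}\sum_S \regret_S=\Delta\sum_{a}\frac{1}{\binom{K}{M}}\sum_{S\ni a}\EE_S[N_a].
\]
Under $\nu_0$ the analogous quantity is exact: since $\PP(a\in S)=M/K$, it equals $\Delta\frac{M}{K}\sum_a\EE_0[N_a]=\Delta\frac{M}{K}T$. With $\Delta\asymp1/\sqrt T$ this is $\frac{K-A}{K}\sqrt T$, the target rate. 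So it suffices to show that $\EE_S[N_a]$ is not much smaller than $\EE_0[N_a]$ on average over $S\ni a$.

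\emph{Step 2: multiplicative change of measure.} Additive comparisons via Pinsker or TV are useless here. The KL divergence between $\nu_0$ and $\nu_S$ is $\frac{\Delta^2}{2}\EE_0[\sum_{a\in S}N_a]$, which can be of order $T\Delta^2\cdot M/K$ on average and is not small. Instead, we work path-wise. For any event $E$ in the trajectory space, the likelihood ratio is
\[
\frac{d\PP_S}{d\PP_0}=\exp\Big(\sum_{t:a_t\in S}\big(-\Delta(r_t-1/2)-\Delta^2/2\big)\Big).
\]
If $\Delta\sqrt{T}\le c/\sqrt{\log T}$, then the martingale term $\Delta\sum(r_t-1/2)$ is at most $O(1)$ with probability $1-1/T$ under $\PP_0$, by a sub-Gaussian maximal inequality over stopping counts. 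The drift term is at most $\Delta^2T/2=O(1)$. Hence $d\PP_S/d\PP_0\ge c_0>0$ on a good event $G$ with $\PP_0(G^c)\le 1/T$. This gives the multiplicative bound
\[
\EE_S[N_a]\ge c_0\,\EE_0[N_a\mathbb 1_G]\ge c_0(\EE_0[N_a]-1).
\]
This is the same style of argument as in \citet{garivier2021nonasymptotic,ji2026optimal}.

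\emph{Step 3: combine.} Plugging Step 2 into Step 1 bounds the average regret below by
\[
c_0\Delta\tfrac{M}{K}(T-K)\gtrsim \tfrac{K-A}{K}\cdot\tfrac{\sqrt T}{\sqrt{\log T}}.
\]
The term $T-K$ can be kept of order $T$ by absorbing it, or better by applying the $-1$ only once per trajectory, since $\sum_a N_a\mathbb 1_G=T\mathbb 1_G$. The condition $T\ge 4/\log K$ is only needed to make $\Delta\le1/2$, so that rewards stay in range; if one insists on bounded means, the Gaussian can be replaced by Bernoulli$(1/2\pm\Delta)$ noise. The regret bound for $\nu_S$ is exactly $\tilde{\Omega}((K-A)K^{-1}\sqrt{T})$ as claimed.

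\emph{Main obstacle.} The hard step is the uniform multiplicative lower bound on the likelihood ratio in Step 2. The set $S$ is fixed before the algorithm runs, but $\sum_{t:a_t\in S}(r_t-1/2)$ is a martingale with a random number of terms. It must be controlled simultaneously with high probability for all relevant $S$, or at least for each fixed $S$ separately, which is all that averaging requires. I would handle this with a Ville/Freedman maximal inequality for the self-normalized martingale $\sum_{t\le n}\Delta\xi_t\mathbb 1[a_t\in S]$, whose quadratic variation is at most $\Delta^2T$. Choosing $\Delta=1/\sqrt{T\log T}$ makes the exponent bounded below by $-O(1)$ with failure probability at most $1/T$. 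The contribution of the bad event is then at most $T\cdot\frac1T=1$ pulls.
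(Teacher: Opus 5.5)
Your proposal is correct and follows essentially the same route as the paper. It uses the same family of instances that lower $K-A$ arms by $\Delta$ and the same all-equal reference $\theta$. It applies the same multiplicative change of measure: $\ud\PP_S/\ud\PP_0$ is bounded below by a constant outside an event controlled by the sub-Gaussian supermartingale bound at the terminal time $T$, so no time-uniform maximal inequality is needed. Finally, it averages over all $\binom{K}{K-A}$ subsets, using that $\EE_0[N_S]$ averages to $(K-A)T/K$.

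One small adjustment concerns your choice of failure probability $1/T$, with $\Delta=1/\sqrt{T\log T}$. The resulting additive loss of one pull is negligible only when $T\gtrsim K/(K-A)$, so it does not cover the full stated range of $T$. The paper instead takes failure probability $1/K^2$, i.e.\ $\Delta=1/\sqrt{T\log K}$. The loss is then $T/K^2\le (K-A)T/(2K)$ for every $T$, and the hypothesis $T\ge 4/\log K$ is exactly the condition $\Delta\le 1/2$.
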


Together, Theorems~\ref{thm:lower-few-optimal-arms} and~\ref{thm:lower-many-optimal-arms}, combined with Theorem~\ref{thm:upper-bound}, yield
\begin{align*}
    \inf_{\alg}\sup_{ \nu \in \class{K}{A}} \regret_{\alg, \nu}(T) = \tilde{\Theta}\bigg(\frac{K-A}{\sqrt{KA}}\sqrt{T}\bigg),
\end{align*}
for all $K$, $1 \leq A \leq K-1$ and $T \geq K$. This characterizes the minimax regret of $K$-armed bandit with $A$ optimal arms up to logarithmic factors across the full range of $A$.

\subsection{Proof Overview of Theorem~\ref{thm:lower-few-optimal-arms}}

In this section, we provide a quick overview of the proof of Theorem~\ref{thm:lower-few-optimal-arms} and defer the full proof to Appendix~\ref{app:proof-lower-few-optimal-arms}. 
Starting from here, unless stated otherwise, we design the noise of rewards to be standard Gaussian.

Fix a constant $\Delta > 0$ to be specified later. We apply the Le Cam two-point method with two problem instances. For the first instance, we consider $\cS_0 \subset \cA$ such that $|\cS_0| = A$ and define the instance $\nu_0$ whose reward given by $r_0(a) = 1/2 + \Delta$ for $a \in \cS_0$ and $r_0(a) = 1/2 $ otherwise. For any algorithm $\alg$, let $N_{\cS}(T)$ be the number of pulls within $\cS$ up to time $T$. By the pigeonhole principle, there exists some $\cS_1 \subset \cA \setminus \cS_0$ such that $|\cS_1| = A$ and
\begin{align*}
    \EE_{\nu_0, \textsf{Alg}}[N_{\cS_1}(T)] \leq \frac{TA}{K-A} \lesssim \frac{TA}{K},
\end{align*}
where the expectation is taken over the distribution jointly given by instance $r_0$ and \textsf{Alg}. 
Now we construct the second instance $\nu_1$ as $r_1(a) = 1/2 + 2\Delta$ for all $a \in \cS_1$ and $r_1(a) = r_0(a)$ otherwise. 
Under this construction, the distribution induced by the two instances is close, as the KL-divergence between the trajectory distributions $\PP_0$ and $\PP_1$ is given by
\begin{align}
     \sum_{a \in \cA} \EE_0[N_a(T)]\KL\big(\cN(r_0(a), 1) \| \cN(r_1(a), 1) \big) \lesssim \frac{TA}{K} \Delta^2 , \label{eq:overview-slow-kl}
\end{align}
where we adopt the shorthand $\PP_i:=\PP_{\nu_i, \textsf{Alg}}$ to denote the probability distributions over trajectories induced by the interaction between algorithm \textsf{Alg} and instances $\nu_i$ for $i \in \{0, 1\}$. Now we pick $\Delta \sim \sqrt{K/(AT)}$, yielding $\kl{\PP_0}{\PP_1} = O(1)$, indicating that under algorithm \textsf{Alg}, it is hard to distinguish $\nu_1$ from $\nu_0$\footnote{In general, the KL-divergence $\kl{\PP}{\QQ}$ between two distributions $\PP$ and $\QQ$ being a constant indicates that $\PP$ and $\QQ$ cannot be reliably distinguished.}.

However, we see that the optimal actions of $\nu_0$ and $\nu_1$ are disjoint. In other words, at each step, the learner incurs a regret of $\Delta$ on either $\nu_0$ or $\nu_1$ if the learner cannot distinguish $\nu_0$ and $\nu_1$. Since under our construction, the learner cannot reliably distinguish $\nu_0$ and $\nu_1$ until time step $T$, it must suffer from a regret of $\Omega(T\Delta) = \Omega(\sqrt{KT/A})$ on one of the instances.

\subsection{Proof Overview of Theorem~\ref{thm:lower-many-optimal-arms}}\label{sec:overview-lower}

The preceding proof of Theorem~\ref{thm:lower-few-optimal-arms} relies on two instances whose trajectory distributions are difficult to distinguish but whose \emph{sets of optimal arms are disjoint}. Disjointness is essential: if the instances share an optimal arm, an algorithm can repeatedly pull that arm and incur zero regret on both instances without distinguishing between them.

When $A>K/2$, however, any two instances in $\class{K}{A}$ share at least one optimal arm. An argument based solely on such a pair therefore cannot establish the desired lower bound. To overcome this obstacle, we consider a larger collection of instances, where each instance is given by subtracting the reward by a small gap $\Delta$ on some subset of $\cA$ with cardinality $K-A$. Formally, we state the instances as indexed by
\begin{align*}
\mu \in \cV_A:= \bigg\{ \bmu \in \{1 / 2, 1 / 2 - \Delta\}^K \bigg| \sum_{i=1}^K \ind(\mu_i= 1 / 2) = A \bigg\},
\end{align*}
where $\Delta = \tilde{\Theta}(1/\sqrt{T})$. Given any $\mu, \nu \in \cV_A$, the reward distribution on any of the arms differs by only $\Delta$, making any algorithm unable to distinguish between any individual instance among $\cV_A$. To formally state this, we introduce an auxiliary instance $\theta$ whose reward vector is given by $\btheta = (1/2, \cdots, 1/2)^\top \in \RR^K$, and let $\cV = \cV_A \cup \{\theta\}$. 

We first show that, for any $\mu \in \cV_A$, the induced distribution is sufficiently close to $\theta$. We do this by computing the KL-divergence between the two distribution $\PP_{\mu}$ and $\PP_{\theta}$, induced by a fixed algorithm $\alg$, and $\mu$, $\theta$ correspondingly. By the chain rule of KL-divergence, we see that
\begin{align*}
    \kl{\PP_\theta}{\PP_\mu} = \frac{\Delta^2}{2}\EE_{a \in \asubopt(\mu)}[N_{a}(T)]  \leq T\Delta^2 = O(1),
\end{align*}
where $N_a(T)$ denotes the number of pulls on arm $a$ up to $T$ interactions, and the inequality holds due to $\EE_{a \in \asubopt(\mu)}[N_{a}(T)] \leq T$. Recall that, an $O(1)$ KL-divergences indicates that $\PP_\mu$ and $\PP_\theta$ cannot be reliably distinguished. This implies that, $\alg$ cannot behaves too distinctively on $\mu$ and $\theta$. To characterize this, we need the following multiplicative change-of-measure proposition. See Remark~\ref{rmk:multiplicative-change-of-measure} for its previous applications in proving bandit lower bounds.

\begin{proposition}\label{prop:change-of-measure-informal}
Let $\PP$ and $\QQ$ be probability measures satisfying $\PP\ll\QQ$ and define the log-likelihood ratio $L = \log(\ud\PP/\,\ud\QQ) $. Then, for every bounded random variable $X \in [0,b]$ and every $\gamma\in\RR$,
\begin{align}
    \EE_{\PP}[X] \leq e^\gamma\EE_{\QQ}[X] +  b\PP[L>\gamma].
    \label{eq:change-of-measure-informal}
\end{align}
\end{proposition}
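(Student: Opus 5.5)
The plan is to split the expectation of $X$ according to whether the log-likelihood ratio exceeds $\gamma$, and then change measure only on the good part. Write $\Omega$ for the underlying space and set $E := \{L \le \gamma\}$. Since $X \ge 0$ and $X \le b$,
\begin{align*}
\EE_{\PP}[X] = \EE_{\PP}[X\ind_{E}] + \EE_{\PP}[X\ind_{E^c}] \le \EE_{\PP}[X\ind_{E}] + b\,\PP[L>\gamma].
\end{align*}
The second term is already the tail term in \eqref{eq:change-of-measure-informal}, so it remains to bound the first term by $e^\gamma\EE_{\QQ}[X]$.

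For the first term, I use $\PP\ll\QQ$ and the Radon--Nikodym theorem to write $\EE_{\PP}[X\ind_E] = \EE_{\QQ}\big[X\ind_E\,\tfrac{\ud\PP}{\ud\QQ}\big]$. On $E$ we have $\tfrac{\ud\PP}{\ud\QQ} = e^{L} \le e^\gamma$, so
\begin{align*}
\EE_{\QQ}\Big[X\ind_E\,\frac{\ud\PP}{\ud\QQ}\Big] \le e^\gamma\,\EE_{\QQ}[X\ind_E] \le e^\gamma\,\EE_{\QQ}[X],
\end{align*}
where the last step uses $X\ge 0$ again. Combining the two displays gives \eqref{eq:change-of-measure-informal}.

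The only delicate point is measure-theoretic. $L$ is defined $\PP$-almost surely, and on the set where $\tfrac{\ud\PP}{\ud\QQ}=0$ it equals $-\infty$. Those points lie in $E$ and contribute nothing to the first term, so they cause no harm. I adopt the convention $\log 0=-\infty$ so that $E$ is well defined, and note that the $\QQ$-null set where the density could be infinite has $\PP$-measure zero by absolute continuity. The argument holds for every real $\gamma$, with no sign restriction, and it needs no integrability beyond boundedness of $X$.
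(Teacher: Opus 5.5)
Your proof is correct and follows essentially the same argument as the paper: split $\EE_{\PP}[X]$ on $\{L\le\gamma\}$ versus $\{L>\gamma\}$, then on the first piece change measure via the Radon--Nikodym identity with $e^{L}\le e^{\gamma}$ and drop the indicator using $X\ge 0$, and bound the second piece by $b\,\PP[L>\gamma]$. Your remarks on the convention $\log 0=-\infty$ and on null sets are a harmless refinement of that same proof.
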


We apply Proposition~\ref{prop:change-of-measure-informal} with $\QQ = \PP_{\mu}$, $\PP = \PP_{\theta}$, and $\gamma \approx \kl{\PP_{\theta}}{ \PP_{\mu}}$. Since $\EE_{\theta}[L] = \kl{\PP_{\theta}}{ \PP_{\mu}}$, a careful selection of $\gamma$ actually allows $\PP[L>\gamma]$ negligible. The application of Proposition~\ref{prop:change-of-measure-informal} results in the following change-of-measure inequality
\begin{align}
    \EE_{\mu}[X] \gtrsim \EE_{\theta}[X] - o\bigg(\frac{T}{K}\bigg),\label{eq:change-of-measure-applied}
\end{align}
for any random variable $X \in [0,T]$.

However, since on the instance $\theta$, the rewards on all the arms are equal, the algorithm $\alg$ must allocate at least $T(K-A)/K$ pulls on one of the subset $\cS \subset \cA$ with $|\cS| = K-A$ by symmetry. Now we consider the instance $\mu \in \cV_A$ such that $\asubopt(\mu) = \cS$. We transfer the behavior of $\alg$ on $\theta$ to $\mu$ via~\eqref{eq:change-of-measure-applied}. In particular, substitute $X$ with $N_{\asubopt(\mu)}(T) = \sum_{t=1}^T \ind\{a_t \in \asubopt(\mu)\}$, we obtain
\begin{align*}
    \EE_{\mu}[N_{\asubopt(\mu)}(T)] \gtrsim \EE_{\theta}[N_{\asubopt(\mu)}(T)] - o\bigg(\frac{T}{K}\bigg) \geq \frac{K-A}{K} T - o\bigg(\frac{T}{K}\bigg) = \Omega\bigg(\frac{K-A}{K}T \bigg).
\end{align*}
Finally, recall that $\regret_{\mu, \alg}(T) = \Delta\EE_{\mu}[N_{\asubopt(\mu)}(T)]$, we see that
\begin{align*}
    \regret_{\mu, \alg}(T) \gtrsim \Omega\bigg(\frac{K-A}{K}T\Delta \bigg) = \tilde\Omega\bigg(\frac{K-A}{K}\sqrt{T} \bigg),
\end{align*}
where the last equation is obtained from $\Delta = \Theta(1/\sqrt{T})$. This gives the desired lower bound.

\section{Impossibility to Adapt to Number of Optimal Arms}\label{sec:non-adaptivity}

When $A = o(K)$, Algorithm~\ref{algorithm:few-optimal-arms} requires $A$ to know the sample size $L$ to achieve the near-minimax rate. In this section, we show that such knowledge of the $A$ is necessary. In particular, we show that, any algorithm cannot achieve (near)-minimax rate simultaneously on $\class{K}{A}$ and $\class{K}{B}$, if at least one of $A$ or $B$ is $o(K)$. From a technical perspective, we split the results into two disjoint regimes: $A + B  \leq K$ and $A+B > K$. We first consider the simpler case $B \leq K-A$.

\begin{theorem}\label{thm:non-adaptivity-A-small}
Let $1 \leq A \leq K-1$, and consider any algorithm $\mathsf{Alg}$ satisfying that there exists some $\Gamma_{K,T} = \polylog(K,T)$ such that for all $T > 0$, 
\begin{align*}
    \sup_{\nu:\, |\aopt(\nu)|=A} \regret_{\nu,\mathsf{Alg}}(T) \leq \Gamma_{K,T}\frac{K-A}{\sqrt{KA}}\sqrt{T}.
\end{align*}
Then $\forall$ $B \leq \min \{A/(8\Gamma_{K,T}), K-A\} $ and $T \geq \max \{256B^2\Gamma^2_{K,T}/(KA), KA/(16B^2\Gamma_{K,T}^2)\}$, there exists a $K$-armed bandit instance $\nu$ with exactly $B$ optimal arms such that
\begin{align}
    \regret_{\nu,\mathsf{Alg}}(T) \geq \Omega \Bigg(\frac{\sqrt{KA}}{B\Gamma_{K,T}} \sqrt{T} \Bigg). \label{eq:non-adaptivity-A-small-rate} 
\end{align}
\end{theorem}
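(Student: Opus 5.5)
We follow the two-instance template of the proof of Theorem~\ref{thm:lower-few-optimal-arms}, with the roles reversed. The base instance $\nu_0$ has $B$ optimal arms. The alternative instance $\nu_1$ has $A$ optimal arms and is built on arms that $\mathsf{Alg}$ rarely pulls under $\nu_0$. The assumed regret guarantee on $\class{K}{A}$ forces $\mathsf{Alg}$ to locate those $A$ arms under $\nu_1$. By a change of measure, $\mathsf{Alg}$ must therefore spend many pulls on them under $\nu_0$ as well, where they are suboptimal.

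\textbf{Construction.} Fix a gap $\Delta>0$ to be chosen later. Let $\cS_0\subset\cA$ with $|\cS_0|=B$, and let $\nu_0$ have Gaussian rewards with means $r_0(a)=1/2+\Delta$ for $a\in\cS_0$ and $r_0(a)=1/2$ otherwise. Since $B\le K-A$, the complement $\cA\setminus\cS_0$ has at least $A$ arms. If $\mathsf{Alg}$ already suffers the target regret on $\nu_0$, we are done. Otherwise $\EE_{0}[N_{\cA\setminus\cS_0}(T)]\Delta$ is small.

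Next choose $\cS_1\subset\cA\setminus\cS_0$ with $|\cS_1|=A$. By averaging over a uniformly random such subset, we can ensure
\[
\EE_{0}[N_{\cS_1}(T)]\le \frac{A}{K-B}\,\EE_0[N_{\cA\setminus\cS_0}(T)].
\]
Let $\nu_1$ have means $r_1(a)=1/2+2\Delta$ on $\cS_1$ and $r_1(a)=r_0(a)$ elsewhere. Then $\nu_1$ has exactly $A$ optimal arms, and arms in $\cS_0$ are $\Delta$-suboptimal under $\nu_1$.

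\textbf{Step 1: the hypothesis forces pulls outside $\cS_0$ under $\nu_1$.} Every pull outside $\cS_1$ under $\nu_1$ costs at least $\Delta$. The hypothesis therefore gives
\[
\EE_1[N_{\cA\setminus\cS_1}(T)]\le \frac{\Gamma_{K,T}(K-A)\sqrt{T}}{\sqrt{KA}\,\Delta},
\]
so $\EE_1[N_{\cS_1}(T)]\ge T/2$ once $\Delta\ge 2\Gamma_{K,T}(K-A)/\sqrt{KAT}$. Taking $\Delta \asymp \Gamma_{K,T}\sqrt{K/(AT)}$ suffices, since $(K-A)/\sqrt{KA}\le\sqrt{K/A}$.

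\textbf{Step 2: transfer back to $\nu_0$.} We use the KL chain rule
\[
\kl{\PP_0}{\PP_1}=2\Delta^2\,\EE_0[N_{\cS_1}(T)],
\]
combined with a Bretagnolle--Huber or Pinsker bound on the event $\{N_{\cS_1}(T)\ge T/4\}$. This gives two cases.
\begin{enumerate}[leftmargin=*]
\item $\EE_0[N_{\cS_1}(T)]\gtrsim 1/\Delta^2$. Then the regret on $\nu_0$ is at least $\Delta\,\EE_0[N_{\cS_1}(T)]\gtrsim 1/\Delta$.
\item $\EE_0[N_{\cS_1}(T)]\lesssim 1/\Delta^2$. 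Then $\PP_0[N_{\cS_1}\ge T/4]$ is bounded below, so the regret on $\nu_0$ is already of order $T\Delta$, which is larger.
\end{enumerate}
Either way the regret on $\nu_0$ is at least of order $1/\Delta$. We also use the averaging bound from the construction. If $\EE_0[N_{\cA\setminus\cS_0}(T)]$ is only of order $1/\Delta^2$, the random choice of $\cS_1$ leaves $\EE_0[N_{\cS_1}]$ of order $A/(K\Delta^2)$. To keep the KL divergence $O(1)$, we then need $\Delta\lesssim\sqrt{K/(A\,\EE_0[N_{\cA\setminus\cS_0}])}$.

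\textbf{Main obstacle: balancing $\Delta$.} The delicate step is choosing $\Delta$ so that three requirements hold together.
\begin{itemize}[leftmargin=*]
\item Step 1 needs $\Delta\gtrsim \Gamma_{K,T}\sqrt{K/(AT)}$ to force half of the pulls onto $\cS_1$ under $\nu_1$.
\item The KL divergence $\Delta^2 (A/K)\,\EE_0[N_{\cA\setminus\cS_0}]$ must stay $O(1)$.
\item The resulting regret $\Delta\,\EE_0[N_{\cA\setminus\cS_0}]$ must match \eqref{eq:non-adaptivity-A-small-rate}.
\end{itemize}
The explicit factor $B$ in \eqref{eq:non-adaptivity-A-small-rate} suggests a refinement of $\nu_0$. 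We would scale $\Delta$ with $B$, or place the $2\Delta$ bump only on a smaller subset together with an independent argument for the $B$-sized set. Concretely, we would set $\Delta=\frac{B\Gamma_{K,T}}{\sqrt{KAT}}\cdot c$ on $\cS_0$, so that the target regret reads $T\Delta\cdot\frac{K}{B^2\Gamma^2}$. We would then choose constants so that the bump on $\cS_1$ is a separate quantity $\Delta'\asymp\Gamma\sqrt{K/(AT)}$.

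The two conditions on $T$ in the statement, $T\ge 256B^2\Gamma^2/(KA)$ and $T\ge KA/(16B^2\Gamma^2)$, are exactly what keep these gaps inside $[0,1/4]$. The condition $B\le A/(8\Gamma)$ makes the $\nu_0$ gap smaller than the $\nu_1$ bump, so that the $\nu_1$ optimum is well defined. We expect the bulk of the work to be tracking these constants.
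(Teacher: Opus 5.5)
\textbf{This is a genuine gap: your construction runs the two-point argument in the wrong direction.} It is not a matter of tracking constants. You use the $\class{K}{A}$ guarantee on $\nu_1$ to \emph{force} pulls of $\cS_1$, and then transfer them back to $\nu_0$. But this can force at most the roughly $1/\Delta'^2$ pulls needed to tell $\nu_0$ from $\nu_1$. And because $|\cS_1|=A$ is large, your averaging step only yields the weak fraction $\EE_0[N_{\cS_1}(T)]\le\frac{A}{K-B}\EE_0[N_{\cA\setminus\cS_0}(T)]$.

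Concretely, take your refined version, with gap $\Delta$ on $\cS_0$ and bump $\Delta'>\Delta$ on $\cS_1$. Even with averaging, your case 1 certifies regret $\gtrsim (K-B)\Delta/(A\Delta'^2)$ on $\nu_0$, and case 2 certifies $\gtrsim T\Delta$. Step 1 needs $\Delta'-\Delta\gtrsim R_A/T$, where $R_A:=\Gamma_{K,T}(K-A)\sqrt{T}/\sqrt{KA}$. So the best lower bound your argument can produce is
\[
\min\Big\{T\Delta,\ \frac{(K-B)\Delta}{A\Delta'^2}\Big\}\le\frac{K-B}{A\Delta'}\lesssim\frac{(K-B)T}{AR_A}=\frac{(K-B)\sqrt{K}}{\Gamma_{K,T}(K-A)\sqrt{A}}\sqrt{T}.
\]
This misses the target $\sqrt{KA}\sqrt{T}/(B\Gamma_{K,T})$ by the factor $A(K-A)/(B(K-B))$, which is of order $K$ for $A=K/2$, $B=1$. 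For $K-A\asymp K$ the bound is $\sqrt{KT/A}/\Gamma_{K,T}$. That lies below the minimax rate $\sqrt{KT/B}$ on $\class{K}{B}$, so it says nothing about non-adaptivity.

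Your proposed fix, $\Delta\asymp B\Gamma_{K,T}/\sqrt{KAT}$ on $\cS_0$, defeats itself. The regret on $\nu_0$ never exceeds $T\Delta$, and $T\Delta$ is below the target whenever $B^2\Gamma_{K,T}^2<KA$. The hypothesis $B\le A/(8\Gamma_{K,T})$ guarantees exactly this.

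\textbf{The missing idea is to use the $\class{K}{A}$ guarantee as a limit on exploration at an $A$-optimal base instance.}
\begin{enumerate}[leftmargin=*]
\item The paper takes $\nu_0$ with $A$ arms $\cS_0$ at $1/2$ and the rest at $1/2-\Delta$, so the guarantee gives $\EE_0[N_{\cS_0^\complement}(T)]\le R_A/\Delta$.
\item The $B$ least-sampled suboptimal arms $\cS_1$ then receive at most $BR_A/((K-A)\Delta)$ pulls. This is a \emph{small} fraction $B/(K-A)$, instead of your $A/(K-B)$.
\item Promoting $\cS_1$ to $1/2+\Delta$ gives $\nu_1\in\class{K}{B}$ with $\kl{\PP_0}{\PP_1}\le 2\Delta BR_A/(K-A)$.
\item Taking $\Delta=(K-A)/(16BR_A)$ makes this divergence at most $1/8$. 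The TV change of measure together with Pinsker then gives $\EE_1[N_{\cS_1}(T)]\le T/2$.
\item Hence the regret on $\nu_1$ is at least $\Delta T/2=\sqrt{KA}\sqrt{T}/(32B\Gamma_{K,T})$.
\end{enumerate}
The horizon conditions, together with $B\le A/(8\Gamma_{K,T})$, serve only to ensure $\EE_0[N_{\cS_1}(T)]\le T/4$ and $\Delta\le 1/4$.
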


Theorem~\ref{thm:non-adaptivity-A-small}, in terms of both technique and results statement, follows \citet[Theorem 3]{de2021bandits}. Thus, it also admits constraints to the range of $B$ similar to~\citet{de2021bandits}. The constraint $B = \tilde{O}(A)$ is mild, as if $B = \tilde{\Omega}(A)$, then running $\algname$ with $B$ also gives a near-optimal rate on $\class{K}{A}$ up to a factor of $O(A/B) = \tilde{O}(1)$. The second constraint, $B \leq K-A$, however, is intrinsic to the proof technique. Relaxing this constraint requires fundamentally different approach. We provide an overview of the proof technique of Theorem~\ref{thm:non-adaptivity-A-small}, along with its limitation, in Section~\ref{sec:non-adaptivity-A-small-overview}.

We next state the results in the regime $B \geq K-A$ as follows,

\begin{theorem}\label{thm:non-adaptivity-A-large}
Let $K/2 \leq A \leq K-1$, and consider any algorithm $\mathsf{Alg}$ achieving near-optimal regret on $\class{K}{A}$, that is, for all $T>0$ and some fixed $\Gamma_{K,T}=\polylog(K,T)$
\begin{align*}
    \sup_{\nu:\,|\aopt(\nu)|=A} \regret_{\nu,\mathsf{Alg}}(T) \leq \Gamma_{K,T}  \frac{K-A}{K} \sqrt{T}.
\end{align*}
Then, for any $B$ satisfying $K-A \leq B \leq K/2$ and $B \sqrt{\log(32B/(K-A))} \leq K/(256 \Gamma_{K,T})$ and any $T \geq \max \{K^2 \big(B^2 \Gamma_{K,T}^2 \log(32B/(K-A))\big)^{-1}, 4\}$, there exists a $K$-armed bandit instance $\nu$ with exactly $B$ optimal arms such that
\begin{align}
    \regret_{\nu,\mathsf{Alg}}(T) \geq \tilde{\Omega} \Bigg( \frac{\sqrt{KA}}{ B\Gamma_{K,T}}\sqrt{T} \Bigg). 
    \notag
\end{align}
\end{theorem}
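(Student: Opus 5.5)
We use the reference-instance technique from the proof of Theorem~\ref{thm:lower-many-optimal-arms}, together with Proposition~\ref{prop:change-of-measure-informal}, but now in the opposite direction. We first extract from the near-optimality on $\class{K}{A}$ an upper bound on how often $\mathsf{Alg}$ pulls small groups of arms on an $A$-instance. We then show that a $B$-instance in which those arms are boosted is hard to tell apart from that $A$-instance.

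\textbf{Step 1: a symmetric family of $A$-instances.} Let $\Delta>0$ be a gap to be chosen. Consider the $A$-instances $\nu_S$ indexed by sets $S$ with $|S|=K-A$: they have reward $1/2-\Delta$ on $S$ and $1/2$ elsewhere. The hypothesis gives $\Delta\,\EE_{\nu_S}[N_S(T)]\le \Gamma_{K,T}\frac{K-A}{K}\sqrt T$. We cannot use one $\nu_S$ to control the complement, since its $A$ optimal arms can absorb all the pulls. Instead we pass through the all-equal reference $\theta$, as in Section~\ref{sec:overview-lower}. Apply Proposition~\ref{prop:change-of-measure-informal} with $\PP=\PP_\theta$, $\QQ=\PP_{\nu_S}$, $X=N_S(T)$, and $\gamma$ of order $T\Delta^2+\sqrt{T\Delta^2\log(\cdot)}$; the log-likelihood ratio is a martingale with bounded increments, so the tail term is small. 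This yields $\EE_\theta[N_S(T)]\lesssim \Gamma_{K,T}\frac{K-A}{K}\sqrt T/\Delta$ plus a negligible term, for \emph{every} $S$ of size $K-A$. Taking $\Delta\asymp 1/\sqrt T$ gives
\begin{align*}
\EE_\theta[N_a(T)]\lesssim \Gamma_{K,T}T/K \quad\text{for every arm } a,
\end{align*}
up to logs. For sets of size $K-A$ this is stronger than plain averaging. The conclusion we actually need, however, is the upper tail: some $B$-set $U$ with $\EE_\theta[N_U(T)]\le cT\cdot\frac{B}{K}$ together with control of the likelihood ratio. I expect the $\log(32B/(K-A))$ factor to come from a union/tiling step: cover a $B$-set by $\lceil B/(K-A)\rceil$ blocks of size $K-A$ and control all of them at once.

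\textbf{Step 2: the $B$-instance.} We pick $U$ of size $B$ and define $\nu$ with reward $1/2+\Delta'$ on $U$ and $1/2$ elsewhere. This $\nu$ has exactly $B$ optimal arms, so its regret is $\Delta'(T-\EE_\nu[N_U(T)])$. We compare $\nu$ to the reference $\theta$ via KL, which is $\frac{\Delta'^2}{2}\EE_\theta[N_U(T)]$. To keep this $O(1)$ we need $\Delta'\asymp\sqrt{K/(BT\Gamma_{K,T})}$; this makes sense only if $\EE_\theta[N_U]\lesssim \Gamma_{K,T} BT/K$, which Step 1 supplies. Then Pinsker (or Proposition~\ref{prop:change-of-measure-informal} again) gives $\EE_\nu[N_U]\le \EE_\theta[N_U]+T\cdot\mathrm{TV}\le T/2$. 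The constraint $B\sqrt{\log}\le K/(256\Gamma_{K,T})$ ensures $\EE_\theta[N_U]\le T/4$, and we tune the constant in $\Delta'$ so that $\mathrm{TV}\le 1/4$. The regret is then $\gtrsim T\Delta'$. Choosing the gap more aggressively, with the KL budget calibrated to $\frac{K-A}{K}$-scaled counts from Step 1, gives the target $\sqrt{KA}\sqrt T/(B\Gamma_{K,T})$. Since $A\asymp K$, this target is $\asymp K\sqrt T/(B\Gamma_{K,T})$. The lower bound on $T$ guarantees $\Delta'\le 1/2$.

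\textbf{Main obstacle.} The crux is Step 1: turning a guarantee on many overlapping $A$-instances into a sharp bound on the pull counts of $B$-sets under $\theta$, with the right $\Gamma_{K,T}$ and logarithmic dependence. This is exactly the step where disjoint-optima constructions fail, because the sets share arms. The first thing I would try is the multiplicative change of measure with a concentration bound on the log-likelihood (Freedman/sub-Gaussian martingale), followed by averaging over a random $U$. I am not certain that the $T/K$ scaling, rather than $T(K-A)/K$, comes out right there, so the calibration of $\Delta$ versus $\Delta'$ may need adjusting.
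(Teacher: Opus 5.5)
There is a genuine gap, and it is the point you yourself flagged. Your Step 1 produces only \emph{expectation} bounds under $\theta$, and with the gap fixed at $\Delta\asymp 1/\sqrt T$ these are essentially trivial. For $|S|=K-A$ you get $\EE_\theta[N_S(T)]\lesssim \Gamma_{K,T}\frac{K-A}{K}T$, which is only $\Gamma_{K,T}$ times the average over $S$. (The per-arm claim $\EE_\theta[N_a(T)]\lesssim \Gamma_{K,T}T/K$ does not follow from it.) After tiling, the best you can get is $\EE_\theta[N_U(T)]\lesssim \Gamma_{K,T}BT/K$.

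No expectation bound can do better. For a label-symmetric near-optimal algorithm, such as $\algname$ with a uniformly random relabeling of the arms, exchangeability under $\theta$ gives $\EE_\theta[N_U(T)]=BT/K$ for every $U$. With the full-horizon KL $\frac{\Delta'^2}{2}\EE_\theta[N_U(T)]$, you are therefore forced to take $\Delta'\lesssim\sqrt{K/(\Gamma_{K,T}BT)}$. The resulting lower bound is $\sqrt{KT/(\Gamma_{K,T}B)}$, which is no larger than the minimax rate on $\class{K}{B}$ and so exhibits no price of non-adaptivity. It falls short of the target $K\sqrt T/(B\Gamma_{K,T})$ by a factor $\sqrt{K/(B\Gamma_{K,T})}$. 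Conversely, at the target gap $\Delta'\asymp K/(B\Gamma_{K,T}\sqrt T)$, the full-horizon KL is of order $K/(B\Gamma_{K,T}^2)$, which is unbounded when $B\ll K$. So the step ``choosing the gap more aggressively'' cannot be carried out along your route.

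The missing idea has three parts: near-optimality on $\class{K}{A}$ controls the \emph{tail} of $N_S(T)$, not just its mean; these tails must be aggregated; and the comparison with $\nu$ must be stopped.
\begin{itemize}
\item Lemma~\ref{lem:pull-count-tail} shows that for every $|S|=K-A$ and every threshold $s$,
\[
\PP_\theta[N_S(T)\ge s]\le e^{-\ell}+8R_A\sqrt{\ell/s},\qquad R_A=\Gamma_{K,T}\tfrac{K-A}{K}\sqrt T.
\]
Here the gap $\Delta=(4\sqrt{s\ell})^{-1}$ is tuned to $s$, not to $T$. The likelihood ratio is evaluated at the stopping time when $N_S$ reaches $s$, so it is at most $\log 2$ outside an event of probability $e^{-\ell}$. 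Then $R_A\ge \Delta s\,\PP_{\nu_S}[N_S\ge s]\ge \frac{\Delta s}{2}\big(\PP_\theta[N_S\ge s]-e^{-\ell}\big)$.
\item Summing these square-root tails over $q=\lceil B/(K-A)\rceil$ blocks, via truncation plus Markov (Lemma~\ref{lem:aggregation-square-root-tail}), gives $\PP_\theta[N_U(T)\le m]\ge 1/2$ for $m\asymp q^2R_A^2\ell\asymp B^2\Gamma_{K,T}^2T\ell/K^2$. This is far below the possible mean $BT/K$: under $\algname$, $N_U(T)$ is usually zero but occasionally of order $T$.
\item The KL between $\PP_\theta$ and $\PP_\nu$ is taken on $\cF_\tau$, with $\tau$ the first time $N_U=m+1$. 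It is then at most $\Delta'^2(m+1)/2$ almost surely, which permits $\Delta'\asymp 1/\sqrt m$ and yields regret $\gtrsim T/\sqrt m\asymp K\sqrt T/(B\Gamma_{K,T}\sqrt{\ell})$.
\end{itemize}

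Your guess that the logarithm comes from tiling $U$ by blocks of size $K-A$ is correct. Without the threshold-dependent gap, the tail (rather than mean) control, and the stopped KL, however, the argument cannot reach the stated rate.
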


Ignoring all logarithmic factor, Theorem~\ref{thm:non-adaptivity-A-small} and Theorem~\ref{thm:non-adaptivity-A-large} together show that, when $B = O(A)$, any algorithm that is nearly minimax-optimal on $\class{K}{A}$ must suffer a regret of at least $\tilde{\Omega}(\sqrt{KAT}/B)$, which demonstrates a multiplicative loss of $\tilde{\Omega}(\sqrt{A/B})$ compared to the minimax-optimal rate $\tilde{O}(\sqrt{KT/B})$ on $\class{K}{B}$. These results indicate that, algorithm agnostic to the number of optimal arms cannot simultaneously attain the near-optimal regret on $\class{K}{A}$ and $\class{K}{B}$ if $B = o(A)$, necessitating the knowledge of the number of optimal arms up to logarithmic factors.

\begin{remark}
The tradeoff between $\class{K}{A}$ and $\class{K}{B}$ when $B=o(A)$ motivates the notion of a Pareto-optimal algorithm~\citep{zhu2020regret}: one for which no other algorithm achieves better performance pointwise over all $1\leq A\leq K-1$. Designing a Pareto-optimal algorithm for our setting remains an interesting direction for future work.
\end{remark}




\subsection{Proof Overview of Theorem~\ref{thm:non-adaptivity-A-small}}\label{sec:non-adaptivity-A-small-overview}


In this section, we provide an overview of the proof of Theorem~\ref{thm:non-adaptivity-A-small}. At a high level, the proof follows~\citet{de2021bandits} and considers the two-point-type instances, and then shows that algorithm cannot reliably distinguish these two instances through a TV-based change-of-measure argument.

For simplicity, we constrain us to the case $K-A = \Omega(K)$. At a high level, our goal is to consider two instances $\nu_0 \in \class{K}{A}$ and $\nu_1 \in \class{K}{B}$, such that they are similar enough, yet any algorithm must suffer from sufficiently large regret on at least one of them. To do this, we consider partition $\cA$ as follows, where $\cS_0$ is fixed and $\cS_1$ determined later
\begin{align*}
    \cA = \cS_0 \cup \cS_1 \cup \cS_2, \quad \text{where  } |\cS_0| = A, \, |\cS_1| = B, \text{ and }, |\cS_2| = K-A-B.
\end{align*}
We use $r_0$ for the reward mean of $\nu_0$ and $r_1$ for the reward of $\nu_1$, which are given by
\begin{align*}
    r_0(a) = \begin{cases}
        1/2, & a \in \cS_0 \\
        1/2 - \Delta, & a \in \cS_1 \\
        1/2 - \Delta, & a \in \cS_2
    \end{cases}, \qquad 
    r_1(a) = \begin{cases}
        1/2, & a \in \cS_0 \\
        1/2 + \Delta, & a \in \cS_1 \\
        1/2 - \Delta, & a \in \cS_2
    \end{cases}.
\end{align*}
It is easy to verify that $\nu_0 \in \class{K}{A}$ and $\nu_1 \in \class{K}{B}$. The key observation is that, since arms in $\cS_0^\complement$ is sub-optimal under $\nu_0$, any algorithm $\alg$ optimal on $\class{K}{A}$ cannot allocate too many pulls on $\cS_0^\complement$, the same for one of its subset $\cS_1$. In particular, fixing the algorithm and dropping the subscripts of $\alg$, if $\regret(T) \leq R_A$, we must observe
\begin{align*}
    \EE_0[N_{\cS_0^\complement}(T)] \leq \frac{R_A}{\Delta},
\end{align*}
where $\EE_0$ is a short hand for $\EE_{\nu_0}$ and similar abbreviation applies to $\nu_1$. By symmetry, there exist a subset $\cS \subseteq \cS_0^\complement$ such that $|\cS| = B$ and 
\begin{align*}
     \EE_0[N_{\cS}(T)] \leq \frac{BR_A}{(K-A)\Delta}.
\end{align*}
Without loss of generality, we pick $\cS_1 = \cS$. With this construction, we see that (i) since $\alg$ does not allocate sufficient pulls on $\cS_1$, $\alg$ cannot reliably distinguish $\nu_1$ from $\nu_0$; and (ii) the optimal actions of $\nu_0$ and $\nu_1$ are disjoint, therefore $\alg$ cannot be optimal simultaneously on them.

Formally, we first show that $\alg$ cannot reliably distinguish $\nu_1$ from $\nu_0$, through a computation of the KL-divergence between there distribution. Taking $\Delta = \Theta((K-A)/(BR_A))$
\begin{align*}
    \kl{\PP_0}{\PP_1} = \EE_0[N_{\cS_1}(T)] \cdot (2\Delta) ^2 = O\bigg(\frac{BR_A \Delta}{K-A} \bigg) = O(1).
\end{align*}
In this case, the distribution $\PP_0$ and $\PP_1$ is close enough, therefore the random variable $N_{\cS_1}(T)$ behaves similarly under these two distribution. In particular, a change-of-measure leads to 
\begin{align*}
    \EE_1[N_{\cS_1}(T)] \leq     \EE_0[N_{\cS_1}(T)] + T \mathrm{TV}(\PP_0,\PP_1) \leq \EE_0[N_{\cS_1}(T)] + T \sqrt{\frac{1}{2}\kl{\PP_0}{\PP_1}},
\end{align*}
where the last inequality holds due to Pinsker's inequality. Plugging in $\Delta = \Theta((K-A)/(BR_A))$ into $\EE_0[N_{\cS_1}(T)] \leq BR_A/((K-A)\Delta)$ and $\kl{\PP_0}{\PP_1} = O(1)$ gives
\begin{align*}
    \EE_1[N_{\cS_1}(T)] \leq \frac{B^2 R_A^2}{(K-A)^2} + O(T) = \tilde{O}\bigg(\frac{B^2}{(K-A)^2}\frac{(K-A)^2 T}{KA} + T \bigg) = \tilde{O}(T),
\end{align*}
where the first equation holds by plugging in the rate in Theorem~\ref{thm:upper-bound}. Finally, a careful selection of constants enables $T - \EE_1[N_{\cS_1}(T)] = \Omega(T) $, thus
\begin{align*}
    \regret_{\nu_1}(T) \geq (T-\EE_1[N_{\cS_1}(T)]) \Delta = \Omega(T \Delta) = \Omega\bigg(\frac{T(K-A)}{BR_A} \bigg) = \tilde{\Omega}\bigg(\frac{\sqrt{KA}}{B}\sqrt{T}\ \bigg),
\end{align*}
which gives the desired rate.




\subsection{Proof Overview of Theorem~\ref{thm:non-adaptivity-A-large}}\label{sec:non-adaptivity-A-large-overview}

Similar to Theorem~\ref{thm:lower-few-optimal-arms}, the proof of Theorem~\ref{thm:non-adaptivity-A-small} relies on the two-point construction that requires \emph{the two instances have disjoint sets of optimal arms}. 
However, in the case $A + B > K$, any instance $\nu_0 \in \class{K}{A}$ and $\nu_1 \in \class{K}{B}$ must share at least one optimal arm $a \in \aopt(\nu_0) \cap \aopt(\nu_1)$, preventing an application in this regime.

We overcome this obstacle by showing that any near-optimal algorithm must exhibit a non-uniform exploration profile akin to that of Algorithm~\ref{algorithm:few-optimal-arms}, therefore ruling out algorithms that concentrate their pulls on a fixed arm or set of arms. 
Yet even under a fixed algorithm, different instances in $\class{K}{A}$ induce different trajectory distributions, complicating a unified characterization of its behavior. We therefore characterize the algorithm's exploration profile on a common reference instance $\theta$, in which all arms have the same mean reward. 
Such uniformed characterization is enabled by the change-of-measure in Proposition~\ref{prop:change-of-measure-informal}, if we select the instances in $\class{K}{A}$ close enough to $\theta$.
Formally, we have the following lemma.

\begin{lemma}\label{lem:exploration-profile-upper-bound}
Let $A=\widetilde{\Omega}(K)$, and suppose that $\alg$ is minimax-optimal on $\class{K}{A}$ up to logarithmic factors, namely
\begin{align*}
    \sup_{\nu:\,|\aopt(\nu)|=A}
    \regret_{\nu,\alg}(T)
    \leq
    \Gamma_{K,T}\frac{K-A}{\sqrt{KA}}\sqrt{T},
\end{align*}
where $\Gamma_{K,T}=\operatorname{polylog}(K,T)$. Let $\theta$ denote the all-equal instance. Then, ignoring logarithmic factors, for every $m\in[T]$, and every $\cS \subset \cA$ with $|\cS| = K-A$,
\begin{align*}
    \PP_{\theta} \big[N_{\cS}(T)\geq m\big] \lesssim \widetilde{O}\bigg(\frac{K-A}{K}\sqrt{\frac{T}{m}}\bigg).
\end{align*}
Moreover, for any $\cU \subseteq \cA$ such that $|\cU| = B \geq K-A$, we have
\begin{align}
    \PP_{\theta} \big[N_{\cU}(T)\geq m\big] \lesssim \widetilde{O}\bigg(\frac{B}{K}\sqrt{\frac{T}{m}}\bigg). \label{eq:exploration-profile-aggregated}
\end{align}
\end{lemma}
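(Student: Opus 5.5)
We prove the first claim by transferring a low-regret guarantee from a carefully chosen instance in $\class{K}{A}$ back to $\theta$ via Proposition~\ref{prop:change-of-measure-informal}. We then obtain \eqref{eq:exploration-profile-aggregated} from the first claim by a covering argument.

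\textbf{Step 1 (instance construction).} Fix $m\in[T]$ and $\cS$ with $|\cS|=K-A$. Define $\mu$ by $r_\mu(a)=1/2-\Delta$ for $a\in\cS$ and $r_\mu(a)=1/2$ otherwise, so $\mu\in\class{K}{A}$ and $\asubopt(\mu)=\cS$. The gap is chosen adapted to $m$, namely $\Delta=c/\sqrt{m\log(\cdot)}$ with a small constant $c$. On the event $\{N_\cS(T)\ge m\}$ the regret on $\mu$ is at least $\Delta m$. Markov's inequality and the assumed regret bound then give
\begin{align*}
\PP_\mu[N_\cS(T)\ge m]\le \frac{\regret_{\mu,\alg}(T)}{\Delta m}\le \frac{\Gamma_{K,T}(K-A)\sqrt{T}}{\sqrt{KA}\,\Delta m}=\tilde O\Big(\frac{K-A}{K}\sqrt{\frac{T}{m}}\Big),
\end{align*}
using $A=\tilde\Omega(K)$ to replace $\sqrt{KA}$ by $K$.

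\textbf{Step 2 (change of measure).} Apply Proposition~\ref{prop:change-of-measure-informal} with $\PP=\PP_\theta$, $\QQ=\PP_\mu$ and $X=\ind\{N_\cS(T)\ge m\}$ ($b=1$), which gives
\[
\PP_\theta[N_\cS(T)\ge m]\le e^\gamma \PP_\mu[N_\cS(T)\ge m]+\PP_\theta[L>\gamma].
\]
This is the main obstacle: the naive KL bound $T\Delta^2/2$ is no longer $O(1)$ once $\Delta\sim 1/\sqrt m$. The fix is to apply the proposition to the stopped trajectory. Let $\tau=\min\{t: N_\cS(t)\ge m\}\wedge T$. The event $\{N_\cS(T)\ge m\}=\{N_\cS(\tau)\ge m\}$ is measurable with respect to the history up to $\tau$. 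Under $\theta$, the log-likelihood ratio up to $\tau$ is $L_\tau=\sum_{s\le\tau,a_s\in\cS}\big(\Delta(r_s-1/2)+\Delta^2/2\big)$, a sum of at most $m$ Gaussian increments with mean $\Delta^2/2$ and variance $\Delta^2$. By a martingale (Freedman or Gaussian maximal) inequality, $L_\tau\le m\Delta^2/2+\Delta\sqrt{2m\log(1/\delta)}$ with probability at least $1-\delta$. Taking $\Delta=1/\sqrt m$ and $\delta$ equal to the target rate (or $1/T$) gives $\gamma=O(\sqrt{\log T})$, hence $e^\gamma=\polylog$ once the logarithm is absorbed into $\Delta$. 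This yields
\[
\PP_\theta[N_\cS(T)\ge m]\lesssim\tilde O\Big(\frac{K-A}{K}\sqrt{T/m}\Big)+\delta.
\]
The additive $\delta$ is dominated as long as it is chosen at most the main term, which is at least $1/(K\sqrt{T})$, so $\delta=1/(KT)$ suffices at only logarithmic cost in $\gamma$.

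\textbf{Step 3 (aggregation over a larger set).} For $|\cU|=B\ge K-A$, partition $\cU$ into $k=\lceil B/(K-A)\rceil\le 2B/(K-A)$ sets $\cS_1,\dots,\cS_k$ of size at most $K-A$, padding each to size exactly $K-A$ with arbitrary extra arms, which only increases the counts. If $N_\cU(T)\ge m$, then some $\cS_j$ has $N_{\cS_j}(T)\ge m/k$. A union bound with Step 1--2 at level $m/k$ gives
\[
\PP_\theta[N_\cU(T)\ge m]\le k\cdot\tilde O\Big(\frac{K-A}{K}\sqrt{\frac{kT}{m}}\Big)=\tilde O\Big(\frac{B}{K}\sqrt{\frac{kT}{m}}\Big),
\]
which carries an extra factor $\sqrt{B/(K-A)}$. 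To avoid this loss, we instead argue by symmetry of $\theta$ with a random partition. Average over a uniformly random $(K-A)$-subset $\cS\subseteq\cU$: on $\{N_\cU(T)\ge m\}$, the count in a random $\cS\subseteq\cU$ concentrates around $\frac{K-A}{B}N_\cU(T)$, and by a hypergeometric anti-concentration argument it exceeds $\frac{K-A}{2B}m$ with constant probability. Hence
\[
\PP_\theta[N_\cU\ge m]\lesssim \max_{\cS}\PP_\theta\big[N_\cS\ge (K-A)m/(2B)\big]=\tilde O\Big(\frac{K-A}{K}\sqrt{\frac{BT}{(K-A)m}}\Big).
\]
This is at most $\tilde O\big(\frac BK\sqrt{T/m}\big)$ since $B\ge K-A$, giving \eqref{eq:exploration-profile-aggregated}. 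The anti-concentration step may fail when pulls within $\cU$ are concentrated on a few arms. In that case some single $\cS$ already captures at least $m/k$ pulls, and we would fall back on the previous union bound, which still suffices up to the stated $\tilde O$ form whenever $B=\tilde O(K-A)$.
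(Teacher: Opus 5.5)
\textbf{Verdict.} Your Steps 1--2 prove the first claim correctly, but Step 3 does not prove the aggregated bound \eqref{eq:exploration-profile-aggregated} in general. It works only when $B=\tilde O(K-A)$.

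\textbf{First claim.} Steps 1--2 coincide with the paper's proof of the upper bound in Lemma~\ref{lem:pull-count-tail}. The ingredients are the same:
the instance that lowers $\cS$ by $\Delta\asymp(m\ell)^{-1/2}$;
the stopping time at the $m$-th pull of $\cS$;
a stopped sub-Gaussian supermartingale bound on the log-likelihood ratio under $\theta$;
the multiplicative change of measure with $\gamma=O(1)$;
and $\regret_{\mu}(T)\ge\Delta m\,\PP_\mu[N_\cS(T)\ge m]$.
One slip: with $\Delta=1/\sqrt m$ literally, $\gamma\asymp\sqrt{\log(1/\delta)}$, and $e^{\gamma}=e^{\Theta(\sqrt{\log(KT)})}$ is \emph{not} polylogarithmic. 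It is your Step~1 choice $\Delta=c/\sqrt{m\log(1/\delta)}$ that makes $\gamma$ an absolute constant. The paper takes $\Delta=(4\sqrt{m\ell})^{-1}$ and $\gamma=\log 2$.

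\textbf{The gap in Step 3.} The hypergeometric anti-concentration step is false, and so is the bound $\tilde O\big(\frac{\sqrt{(K-A)B}}{K}\sqrt{T/m}\big)$ it would give.
SMOSS (Algorithm~\ref{algorithm:few-optimal-arms}) satisfies the hypothesis by Theorem~\ref{thm:upper-bound}. All its pulls of $\cU$ land on the at most $L=\tilde O(1)$ arms of $\cB\cap\cU$. A random $(K-A)$-subset of $\cU$ therefore catches any of them with probability at most $L(K-A)/B$, not with constant probability.
Quantitatively:
\begin{itemize}
\item Pick $\cU$ with $\EE_\theta[N_\cU(T)]\ge BT/K$; such a $\cU$ exists by averaging.
\item $N_\cU(T)=0$ unless $\cB\cap\cU\neq\varnothing$, an event of probability at most $LB/K$. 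Hence $\EE_\theta[N_\cU(T)]\le m\,LB/K+T\,\PP_\theta[N_\cU(T)\ge m]$.
\item With $m=\lfloor T/(2L)\rfloor$ this gives $\PP_\theta[N_\cU(T)\ge m]\ge B/(2K)$.
\end{itemize}
This exceeds your claimed bound by a factor $\tilde\Omega(\sqrt{B/(K-A)})$. Your fallback union bound at level $m/k$ loses exactly this factor. That factor is polynomial precisely where the lemma is needed: in Theorem~\ref{thm:non-adaptivity-A-large}, $K-A$ may be $1$ while $B$ is of order $K/\polylog$.

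\textbf{The missing idea.} Use the single-set tail at \emph{every} level $s\le m$, not at one threshold.
\begin{itemize}
\item Take your fixed padded blocks $\cS_1,\dots,\cS_k$ with $k\le 2B/(K-A)$, and the first claim in the form $\PP_\theta[N_{\cS}(T)\ge s]\le e^{-\ell}+8\Gamma\frac{K-A}{K}\sqrt{T\ell/s}$ for all $s\in[T]$ (Lemma~\ref{lem:pull-count-tail}).
\item Summing over $s$ gives $\EE_\theta[N_{\cS_j}(T)\wedge m]=\sum_{s=1}^m\PP_\theta[N_{\cS_j}(T)\ge s]\le me^{-\ell}+16\Gamma\frac{K-A}{K}\sqrt{T\ell m}$, because the $s^{-1/2}$ tail sums to $O(\sqrt m)$.
\item On $\{N_\cU(T)\ge m\}$ one has $\sum_j\big(N_{\cS_j}(T)\wedge m\big)\ge m$. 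Either some block is truncated at $m$, or none is and the sum dominates $N_\cU(T)$.
\item Markov's inequality then yields $\PP_\theta[N_\cU(T)\ge m]\le ke^{-\ell}+16k\Gamma\frac{K-A}{K}\sqrt{T\ell/m}\lesssim\Gamma\frac{B}{K}\sqrt{T\ell/m}$ for $\ell\asymp\log K$.
\end{itemize}
The cost is linear in $k$ rather than $k^{3/2}$. This is Lemma~\ref{lem:aggregation-square-root-tail}, which the paper applies in the proof of Theorem~\ref{thm:non-adaptivity-A-large}.
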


Lemma~\ref{lem:exploration-profile-upper-bound} reveals a mismatch between the exploration profiles required for near-optimality on $\class{K}{A}$ and $\class{K}{B}$ in the regime $K-A\leq B\ll K$. To illustrate this mismatch, suppose for simplicity that $B$ divides $K$, and partition the arms of $\theta$ into $K/B$ groups of size $B$, treating each group as a super-arm. We compare the exploration levels that algorithms $\alg_A$ and $\alg_B$, near-optimal on their respective instance classes, can attain with high probability on each fixed super-arm.

For $\alg_B$, we can prove that for every fixed $\cU\subseteq\cA$ with $|\cU|=B$, there is a threshold
\begin{align*}
m_B=\widetilde{\Omega}\bigg(\frac{TB}{K}\bigg), \text{ such that } N_{\cU}(T)\geq m_B,
\end{align*}
with high probability under $\theta$. Thus, at the level of super-arms, $\alg_B$ must explore relatively uniformly: each fixed super-arm receives at least $\widetilde{\Omega}(TB/K)$ pulls with high probability\footnote{See Appendix~\ref{app:exploration-profile-minimax-optimal} for the formal statement and proof.}.

In contrast, for $\alg_A$, inequality~\eqref{eq:exploration-profile-aggregated} implies that one can choose a threshold
\begin{align*}
m_A=\widetilde{O}\bigg(\frac{TB^2}{K^2}\bigg), \text{ for which, } \PP_{\theta,\alg_A}[N_{\cU}(T)\geq m_A]\leq 1/2,
\end{align*}
provided this threshold lies within the horizon. Consequently, $\alg_A$ cannot ensure more than $\widetilde{O}(TB^2/K^2)$ pulls of any fixed super-arm with high probability. When $B\ll K$, this scale is smaller than the exploration level required of $\alg_B$, up to logarithmic factors. This incompatibility yields the desired non-adaptivity result: the exploration profile imposed by near-optimality on $\class{K}{A}$ precludes the exploration needed to handle the hardest instances in $\class{K}{B}$.

\begin{remark}
Lemma~\ref{lem:exploration-profile-upper-bound} shows that, when $A$ is close to $K$, near-optimal algorithms must exhibit non-uniform exploration behavior, consistent with the sub-sampling approach in Section~\ref{sec:upper-bound}. A more precise characterization of ``sub-sampling-like'' behavior, and whether such behavior is necessary for near-optimality, remains an interesting direction for future work.
\end{remark}

Formally, we begin with any fixed $\cU \subseteq \cA$ such that $|\cU| = B$, Lemma~\ref{lem:exploration-profile-upper-bound} shows that only an exploration of level $\tilde{O}(TB^2/K^2)$ is guaranteed. Therefore, taking $m = \tilde{\Theta}(TB^2/K^2)$ with properly selected constants, we can ensure that
\begin{align*}
    \PP_{\theta} \big[N_{\cU}(T)\geq m\big] \lesssim \widetilde{O}\bigg(\frac{B}{K}\sqrt{\frac{T}{m}}\bigg) < 1 - \Omega(1).
\end{align*}
On the other side, an exploration of level $\tilde{O}(TB^2/K^2)$ is not sufficient for identifying hard-enough instance. Particularly, we consider the instance $\nu$ with reward function
\begin{align*}
    r_\nu(a) = \begin{cases}
        1/2+\Delta, & a\in\mathcal{U}\\
        1/2, & a\in\mathcal{U}^{\complement}
    \end{cases} ,
\end{align*}
where $\Delta = \tilde{\Theta}(1/\sqrt{m})$. Once again, under the event $\{N_{\cU}(T) \leq m\}$, algorithm allocates only at most $m$ to $\cU$, leading to $\kl{\PP_\theta}{\PP_\nu} = O(1)$, therefore the learner are not able to reliably distinguish $\theta$ and $\nu$ in this case. Formally, under the event $\{N_{\cU}(T) \leq m\}$, the distribution of $\PP_{\nu}$ and $\PP_{\theta}$ is close enough, so change of measure is at a cost of only multiplicative constant: 
\begin{align*}
    \PP_{\nu}\big[N_{\cU}(T)\geq m\big] \lesssim \PP_{\theta}\big[N_{\cU}(T)\geq m\big] < 1 - \Omega(1).
\end{align*}
Recall that on $\nu$, the regret is given by $\Delta\big(T -  \EE_{\nu}[N_{\cU}(T)]\big)$, therefore,
\begin{align*}
    \regret_{\nu}(T) \geq \PP_{\nu}\big[N_{\cU}(T)\leq m\big] (T-m) \Delta = \Omega(T\Delta) = \tilde{\Omega}\bigg(\frac{K}{B}\sqrt{T}\bigg),
\end{align*}
where the last inequality holds due to $\Delta = \tilde{\Theta}(1/\sqrt{m}) = \Theta(K/(B\sqrt{T}))$. This gives the desired rate and finishes the proof.

\section{Conclusion}
In this paper, we study regret minimization in $K$-armed bandits with $A$ optimal arms. We first provide a sharper analysis of the previous sub-sampling algorithm and lower bounds for the problem. These results together establishes a $\tilde{\Theta}\Big(\frac{K-A}{\sqrt{KA}}\sqrt{T} \Big)$ minimax regret. 
We also show that near-optimality imposes exploration constraints that prevent uniform adaptation across sufficiently many different models with distinct numbers of optimal arms. 
Directions for future work include closing the remaining logarithmic gaps, designing Pareto optimal algorithms and determining which aspects of the exploration behavior induced by sub-sampling are necessary for minimax optimality.

\appendix

\clearpage


\section{Proofs of Theorems in Section~\ref{sec:upper-bound}}\label{app:proof-upper-bound}

\subsection{Proof of Theorem~\ref{thm:upper-bound}}

\begin{proof}[Proof of Theorem~\ref{thm:upper-bound}]
Let $\eventmiss = \{\cB \cap \aopt = \varnothing \}$ be the event that the randomly sampled subset does not contain any optimal arms, and $\eventsubopt =  \{\cB \cap \asubopt \neq \varnothing\} \cap \{\cB \cap \aopt \neq \varnothing\}$ be the event that the randomly sampled subset contains at least one sub-optimal arm and at least one optimal arm. 
We first bound the probability of $\eventmiss$. First, if $L = K-A+1$ then every subset of $L$ arms must contain at least one optimal arm, since there are only $K-A$ suboptimal arms. Therefore, $\PP[\eventmiss]= 0$. Otherwise, if $A \leq K/2$, we directly have
\begin{align*}
L \leq \bigg\lceil \frac{\log(2T)}{\log(K/(K-A))} \bigg\rceil.
\end{align*}
In the complement case $A > K/2$, we also have
\begin{align*}
    L = \lceil \log_2 T + 1 \rceil \leq \bigg\lceil \frac{\log(2T)}{\log2} \bigg\rceil \leq   \bigg\lceil \frac{\log(2T)}{\log(K/(K-A))} \bigg\rceil.
\end{align*}
Since $\cB$ is sampled uniformly without replacement, we have
\begin{align*}
\PP[\eventmiss] = \binom{K-A}{L}\binom{K}{L}^{-1} = \prod_{i=0}^{L-1} \frac{K-A-i}{K-i} \leq  \bigg(\frac{K-A}{K}\bigg)^L,
\end{align*}
where the inequality holds due to $(K-A-i)/(K-i) \leq (K-A)/K$ for every $i\in{0,\ldots,L-1}$. Plugging in the value of $L$, we see that
\begin{align*}
\PP[\eventmiss] \leq \bigg(\frac{K-A}{K}\bigg)^L = \exp\bigg(-L\log\bigg(\frac{K}{K-A}\bigg) \bigg) \leq \exp \big( -\log(2T) \big) = \frac{1}{2T}.
\end{align*}
Combining the two cases gives $\PP[\eventmiss] \leq (2T)^{-1}$.

We then bound the probability of event $\eventsubopt \cup \eventmiss$. Trivially, we have $\PP[ \eventsubopt \cup \eventmiss ] \leq 1$. Moreover, by the fact that $\eventsubopt \cup \eventmiss = \bigcup_{a \in \asubopt} \{a \in \cB\}$
\begin{align*}
    \PP[ \eventsubopt \cup \eventmiss ] \leq \sum_{a \in \asubopt} \PP\big[\{a \in \cB\}\big] \leq \frac{L(K-A)}{K},
\end{align*}
which gives $\PP[ \eventsubopt \cup \eventmiss ] \leq 1 \wedge L(K-A)K^{-1}$

Now we are ready to bound the regret. On the event $\eventopt \coloneqq \big(\eventmiss \cup\eventsubopt\big)^\complement$, $\cB \subseteq \aopt$, therefore no regret is incurred. On the event $\eventsubopt $, the restricted bandit instance over $\cB$ contains at least one globally optimal arm. Thus, the best mean reward among the arms in $\cB$ equals $r^*$. By the minimax regret guarantee of MOSS, there exists a universal constant $C>0$ such that
\begin{align*}
\regret_{\cB}(T) \leq C\sqrt{LT}.
\end{align*}
for every realization of $\cB$ satisfying $\eventsubopt $ and $T \geq K$ (see e.g., \citealp[Theorem 9.1]{lattimore2020bandit} for a proof). Finally, On the event $\eventmiss$, the per-round regret is at most $1$, since all mean rewards belong to $[0,1]$. Therefore, the total regret is at most $T$. Combining the analysis gives that
\begin{align*}
& \regret(T) \\
& \quad = \EE\Big[ \regret(T)\ind\big( \eventopt \big) \Big] + \EE\big[ \regret(T)\ind(\eventsubopt) \big] + \EE\big[ \regret(T)\ind(\eventmiss) \big] \\
& \quad \leq C\sqrt{LT} \PP[ \eventsubopt \cup \eventmiss  ] + T\PP[\eventmiss] \\
& \quad \leq  C\sqrt{LT}\bigg(\frac{L(K-A)}{K} \wedge 1\bigg)  + \frac{1}{2}.
\end{align*}

It remains to bound $L$. Using the elementary inequality $-\log(1-x) \geq x$ for all $x\in[0,1)$, we obtain $\log(K/(K-A)) = -\log(1-A/K) \geq A/K$, therefore
\begin{align*}
L \leq \bigg\lceil \frac{\log(2T)} {\log\big(K/(K-A)\big)} \bigg\rceil \leq 1+ \frac{\log(2T)}{\log\big(K/(K-A)\big)} \leq 1+\frac{K}{A}\log(2T) = \tilde{O}\bigg(\frac{K}{A}\bigg),
\end{align*}
where the first inequality holds due to $\lceil x\rceil\leq x+1$. Substituting this inequality into the preceding regret bound yields
\begin{align*}
\regret(T) = \tilde{O} \Bigg( \bigg(\frac{(K-A)}{A} \wedge 1\bigg)\sqrt{\frac{KT}{A}} + \frac{1}{2} \Bigg) = \tilde{O}\bigg( \frac{K-A}{\sqrt{KA}}\sqrt{T}\bigg).
\end{align*}
Since this bound is achieved by a valid algorithm uniformly over all instances with exactly $A$ optimal arms, the claimed minimax upper bound follows.
\end{proof}

\section{Proofs of Theorems in Section~\ref{sec:lower-bound}}\label{app:proof-lower-bound}

\subsection{Proof of Theorem~\ref{thm:lower-few-optimal-arms}}\label{app:proof-lower-few-optimal-arms}

\begin{proof}[Proof of Theorem~\ref{thm:lower-few-optimal-arms}]
The hard instance construction largely follows Theorem 15.2 of~\citet{lattimore2020bandit} and a similar idea was also presented in~\citet{de2021bandits}. For simplicity, we define $N_{\cS}(T) := \sum_{a \in \cS} N_{a}(T)$ for subset $\cS \subseteq \cA$. 
For any algorithm $\mathsf{Alg}$, we fix an arbitrary subset $\cS_0 \subseteq \cA$ such that $|\cS_0| = A$. For a parameter $\Delta \in (0,1/2]$ to be specified later, define the first bandit instance $\nu_0$ by
\begin{align*}
r_0(a) := \begin{cases}
\Delta, & a \in \cS_0,\\
0, & a \in \cS_0^\complement.
\end{cases}
\end{align*}
Thus, $\nu_0$ has exactly $A$ optimal arms, namely the arms in $\cS_0$. Let $\PP_0$ and $\EE_0$ denote probability and expectation under $\nu_0$ and $\alg$. Since $\sum_{a \in \cS_0^\complement} \EE_0[N_{a}(T)] \leq T$, there exists a subset $\cS_1 \subseteq \cS_0^\complement$ with $|\cS_1| = A$ such that
\begin{align}
\EE_0[N_{\cS_1}(T)] \leq \frac{AT}{K-A}. \label{eq:least-sampled-subset}
\end{align}
Indeed, one may choose $\cS_1$ to consist of the $A$ least-sampled arms in $\cS_0^\complement$. Such a subset exists because $A \leq K/4$, and hence $A < K-A$. We now define a second bandit instance $\nu_1$ by
\begin{align*}
r_1(a) := \begin{cases}
2\Delta, & a \in \cS_1,\\
\Delta, & a \in \cS_0,\\
0, & a \in (\cS_0 \cup \cS_1)^\complement.
\end{cases}
\end{align*}
The instance $\nu_1$ also has exactly $A$ optimal arms, namely the arms in $\cS_1$. Similarly, we use $\PP_1$ and $\EE_1$ to denote probability and expectation under $\nu_1$ and $\alg$, and let $\regret_i(T)$ denote the expected regret under $\nu_i$. Now we consider the event $\cE := \big\{N_{\cS_1}(T) \geq T/2\big\}$. Under $\nu_0$, every arm in $\cS_1$ is suboptimal with gap $\Delta$. Consequently,
\begin{align*}
\regret_0(T) &\geq \Delta \EE_0[N_{\cS_1}(T)] \geq \frac{\Delta T}{2}\PP_0[\cE].
\end{align*}
Under $\nu_1$, every arm outside $\cS_1$ has suboptimality gap at least $\Delta$. Therefore,
\begin{align*}
\regret_1(T) \geq \Delta \EE_1[T-N_{\cS_1}(T)] \geq \frac{\Delta T}{2}\PP_1 \big[\cE^\complement \big].
\end{align*}
Combining the preceding two inequalities gives
\begin{align}
\regret_0(T)+\regret_1(T) \geq \frac{\Delta T}{2}\bigl(\PP_0[\cE]+\PP_1[\cE^\complement]\bigr).
\label{eq:regret-testing-reduction}
\end{align}

It remains to bound the right-hand-side of~\eqref{eq:regret-testing-reduction}. First, by Lemma~\ref{lem:bretagnolle-huber}
\begin{align*}
\PP_0[\cE]+\PP_1[\cE^\complement] \geq \frac{1}{2}\exp\big(-\KL(\PP_0\|\PP_1)\big)
\end{align*}
Since the two instances differ only on the arms in $\cS_1$. For every $a \in \cS_1$, the reward distribution is $\cN(0,1)$ under $\nu_0$ and $\cN(2\Delta,1)$ under $\nu_1$. Using the chain rule for KL divergence under adaptive sampling~\citep[Lemma 15.1]{lattimore2020bandit},
\begin{align*}
\KL(\PP_0 \| \PP_1) = \sum_{a \in \cS_1}\EE_0[N_{a}(T)]\KL\bigl(\cN(0,1) \|\cN(2\Delta,1)\bigr) = 2\Delta^2\EE_0[N_{\cS_1}(T)] \leq \frac{2\Delta^2AT}{K-A},
\end{align*}
where the last inequality follows from \eqref{eq:least-sampled-subset}. Now we choose $\Delta = \frac{1}{4}\sqrt{(K-A)(AT)^{-1}}$, which gives $\KL(\PP_0 \|\PP_1) \leq \frac{1}{8}$. As a result, $\PP_0[\cE]+\PP_1[\cE^\complement] \geq \exp(-1/8)/2$. Substituting this inequality into \eqref{eq:regret-testing-reduction} and taking the max over $\nu_0$ and $\nu_1$ gives
\begin{align*}
\max_{\nu_0, \nu_1} \regret(T) \geq \frac{e^{-1/8}}{8}\Delta T = \Omega\bigg(\sqrt{\frac{(K-A)T}{A}}\bigg) = \Omega\bigg(\sqrt{\frac{KT}{A}}\bigg),
\end{align*}
which finishes the proof.
\end{proof}

\subsection{Proof of Theorem~\ref{thm:lower-many-optimal-arms}}\label{app:proof-lower-many-optimal-arms}

To prove Theorem~\ref{thm:lower-many-optimal-arms}, we need the following change-of-measure argument.

\begin{proposition}
\label{prop:change-of-measure}

Let $(\Omega,\cG)$ be a measurable space, and let $\PP$ and $\QQ$ be probability measures on $(\Omega,\mathcal G)$ satisfying $\PP\ll\QQ$. Define the log-likelihood ratio $L = \log(\ud\PP/\,\ud\QQ) $. Then, for every $\mathcal G$-measurable random variable $X:\Omega\to[0,b]$ and every $\gamma\in\RR$,
\begin{align}
    \EE_{\PP}[X] \leq e^\gamma\EE_{\QQ}[X] +  b\PP[L>\gamma].
    \label{eq:change-of-measure}
\end{align}
Equivalently,
\begin{align}
    \EE_{\QQ}[X]
    \geq
    e^{-\gamma}
    \big[
        \EE_{\PP}[X]
        -
        b\PP(L>\gamma)
    \big].
    \label{eq:change-of-measure-lower}
\end{align}
\end{proposition}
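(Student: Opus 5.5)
The plan is to split the expectation $\EE_{\PP}[X]$ according to whether the log-likelihood ratio exceeds the threshold $\gamma$. Write
\begin{align*}
    \EE_{\PP}[X] = \EE_{\PP}\big[X\ind\{L\le\gamma\}\big] + \EE_{\PP}\big[X\ind\{L>\gamma\}\big].
\end{align*}
The second term is the easy one. Because $0\le X\le b$, it is at most $b\,\PP[L>\gamma]$, and this is exactly the error term in \eqref{eq:change-of-measure}.

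For the first term, use $\PP\ll\QQ$ to pass to $\QQ$ through the Radon--Nikodym derivative $\ud\PP/\ud\QQ = e^{L}$. Here $L$ is defined $\QQ$-a.s., taking the value $-\infty$ where the density vanishes, and it is finite $\PP$-a.s. This gives
\begin{align*}
    \EE_{\PP}\big[X\ind\{L\le\gamma\}\big] = \EE_{\QQ}\big[X e^{L}\ind\{L\le\gamma\}\big] \le e^{\gamma}\,\EE_{\QQ}\big[X\ind\{L\le\gamma\}\big] \le e^{\gamma}\,\EE_{\QQ}[X].
\end{align*}
The two inequalities use, in order, $e^{L}\le e^{\gamma}$ on $\{L\le\gamma\}$ and the nonnegativity of $X$, which lets us drop the indicator. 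Adding the two bounds yields \eqref{eq:change-of-measure}.

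The equivalent form \eqref{eq:change-of-measure-lower} follows by rearranging: subtract $b\PP(L>\gamma)$ from both sides and multiply by $e^{-\gamma}>0$.

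The only step needing care is the measure-theoretic bookkeeping in the first term. The identity $\EE_{\PP}[Y]=\EE_{\QQ}[Y\,\ud\PP/\ud\QQ]$ must be applied to the nonnegative measurable function $Y=X\ind\{L\le\gamma\}$, and we must make sure the convention $L=-\infty$ on $\{\ud\PP/\ud\QQ=0\}$ causes no trouble. It does not: the integrand $Xe^{L}$ is $0$ on that set, and since the set is $\PP$-null it does not affect $\EE_{\PP}$ either. Beyond this point the argument is elementary.
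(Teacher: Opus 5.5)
Your proposal is correct and follows the paper's proof essentially line for line: the same split of $\EE_{\PP}[X]$ on $\{L\le\gamma\}$ versus $\{L>\gamma\}$, the same change-of-measure bound using $e^{L}\le e^{\gamma}$ on $\{L\le\gamma\}$ and then dropping the indicator because $X\ge 0$, and the same bound $b\,\PP[L>\gamma]$ on the tail term. Your remark about the convention $L=-\infty$ on the $\PP$-null set where the density vanishes is a harmless refinement that the paper leaves implicit, and your derivation of the equivalent lower-bound form by rearranging is also correct.
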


\begin{proof}[Proof of Proposition~\ref{prop:change-of-measure}]
Decompose the expectation according to whether the log-likelihood ratio exceeds $\gamma$:
\begin{align*}
    \EE_{\PP}[X]
    &=
    \EE_{\PP}\big[X\mathbf{1}\{L\leq\gamma\}\big]
    +
    \EE_{\PP}\big[X\mathbf{1}\{L>\gamma\}\big].
\end{align*}
For the first term, the change-of-measure identity gives
\begin{align*}
    \EE_{\PP}\big[X\mathbf{1}\{L\leq\gamma\}\big] =
    \EE_{\QQ}\big[Xe^L\mathbf{1}\{L\leq\gamma\}\big] \leq
    e^\gamma\EE_{\QQ}\big[X\mathbf{1}\{L\leq\gamma\}\big] \leq
    e^\gamma\EE_{\QQ}[X],
\end{align*}
where the last inequality follows from $X\geq 0$. For the second term, since $X\leq b$,
\begin{align*}
    \EE_{\PP}\big[X\mathbf{1}\{L>\gamma\}\big]
    \leq
    b\PP(L>\gamma).
\end{align*}
Combining the two bounds yields
\begin{align*}
    \EE_{\PP}[X]
    \leq
    e^\gamma\EE_{\QQ}[X]
    +
    b\PP(L>\gamma),
\end{align*}
which finishes the proof.
\end{proof}

\begin{remark}\label{rmk:multiplicative-change-of-measure}
When specializing to $X=\ind\{\cE\}$, where $\cE \in \cG$ is any measurable event, Proposition~\ref{prop:change-of-measure} reduces to
$\PP(\cE)\leq e^\gamma\QQ(\cE)+\PP[L>\gamma]$.
This change-of-measure inequality, in its event-based form, was first introduced for establishing instance-dependent lower bounds~\citep{degenne2019pure,garivier2021nonasymptotic}, as an modification to the previous KL-contraction-based technique~\citep{garivier2016optimal,garivier2019explore}. Recently, \citet{ji2026optimal} adopted this change-of-measure technique, together with subsequent combinatorial arguments, to prove the worst case lower bound in offline MABs with multiple optimal arms.
\end{remark}

Next we investigate the specific form of the likelihood ratio between two bandits $\mu$ and $\lambda$ with Gaussian reward noise.

\begin{proposition}[Log-likelihood ratio for Gaussian bandits]
\label{prop:gaussian-bandit-likelihood-ratio}
Fix a bandit algorithm and two bandit instance $\mu$ and $\lambda$ with mean vectors $\blambda,\bmu \in\RR^K$. Define the filtration
\begin{align*}
    &\cF_0:=\{\emptyset,\Omega\}, \qquad \cF_t:=\sigma(a_1,r_1,\ldots,a_t,r_t), \qquad t\in[T],\\
    &\cG_{t+1} := \sigma(\cF_t,a_{t+1}), \qquad t=0,\ldots,T-1.
\end{align*}
Then we see that, under the instance $\lambda$, $r_t \mid \cG_t \sim \mathcal N(\lambda_{a_t},1)$. Let $\PP_{\lambda}$ and $\PP_{\mu}$ denote the trajectory distributions induced by the same algorithm under $\lambda$ and $\mu$, respectively. For every stopping time $\tau\leq T$, define
\begin{align*}
    \PP_{\lambda}^{\tau} := \PP_{\lambda}|_{\cF_\tau},
    \qquad \PP_{\mu}^{\tau} := \PP_{\mu}|_{\cF_\tau}.
\end{align*}
Then the log-likelihood ratio between the two stopped trajectory distributions satisfies
\begin{align}
    L_{\lambda,\mu}(\tau) &:= \log \frac{\ud\PP_{\lambda}^{\tau}}{\ud\PP_{\mu}^{\tau}} = \sum_{t=1}^{\tau} \bigg[(\lambda_{a_t}-\mu_{a_t}) (r_t-\lambda_{a_t}) + \frac{1}{2}(\lambda_{a_t}-\mu_{a_t})^2 \bigg].
    \label{eq:gaussian-bandit-llr-numerator}
\end{align}
\end{proposition}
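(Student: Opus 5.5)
The plan is to compute the Radon--Nikodym derivative of the stopped trajectory law directly, using the sequential structure of the interaction and the fact that the algorithm's kernel is common to both instances. Represent the algorithm by policy kernels $\pi_t(\cdot \mid \cF_{t-1})$, with any internal randomization absorbed into these kernels or into an initial random seed that has the same law under both instances. Let $\phi$ be the standard Gaussian density. The joint density of $(a_1,r_1,\ldots,a_n,r_n)$ under instance $\lambda$, with respect to (counting measure $\times$ Lebesgue)$^{n}$, then factorizes as
\begin{align*}
    p_\lambda^{(n)} = \prod_{t=1}^n \pi_t(a_t\mid \cF_{t-1})\,\phi(r_t-\lambda_{a_t}),
\end{align*}
and similarly for $\mu$. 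This is the standard construction of the bandit canonical model \citep[Chapter~4]{lattimore2020bandit}. The point is that the policy factors are identical in both densities, so they cancel in the ratio and only the reward factors remain. The conditional law $r_t\mid\cG_t\sim\cN(\lambda_{a_t},1)$ claimed in the statement is exactly what this factorization encodes, and I would note it in passing.

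For a deterministic horizon $n$, the cancellation gives
\begin{align*}
    \log\frac{p_\lambda^{(n)}}{p_\mu^{(n)}} = \sum_{t=1}^n \log\frac{\phi(r_t-\lambda_{a_t})}{\phi(r_t-\mu_{a_t})} = \sum_{t=1}^n \frac{(r_t-\mu_{a_t})^2-(r_t-\lambda_{a_t})^2}{2}.
\end{align*}
To reach the stated form, write $r_t-\mu_{a_t}=(r_t-\lambda_{a_t})+(\lambda_{a_t}-\mu_{a_t})$ and expand the square. Each summand becomes $(\lambda_{a_t}-\mu_{a_t})(r_t-\lambda_{a_t})+\tfrac12(\lambda_{a_t}-\mu_{a_t})^2$, which is \eqref{eq:gaussian-bandit-llr-numerator} with $\tau=n$. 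Call this partial sum $M_n$. Since $p_\lambda^{(n)}/p_\mu^{(n)}$ is a ratio of densities of the $\cF_n$-measurable trajectory, it is a version of $\ud\PP_\lambda|_{\cF_n}/\ud\PP_\mu|_{\cF_n}$. This handles the deterministic case.

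The step I expect to require the most care is passing to a stopping time $\tau\le T$. I would first show that $Z_n:=e^{M_n}$ is a $\PP_\mu$-martingale with respect to $(\cF_n)$ that has mean one. Conditional on $\cG_t$, the increment factor $\phi(r_t-\lambda_{a_t})/\phi(r_t-\mu_{a_t})$ integrates to one under $r_t\sim\cN(\mu_{a_t},1)$, and $a_t$ is $\cG_t$-measurable. Because $\tau$ is bounded, optional stopping applies. Take any $E\in\cF_\tau$; the sets $E\cap\{\tau=n\}$ lie in $\cF_n$. We then compute
\begin{align*}
    \PP_\lambda(E)=\sum_{n\le T}\PP_\lambda(E\cap\{\tau=n\})=\sum_{n\le T}\EE_\mu[Z_n\ind\{E\cap\{\tau=n\}\}]=\EE_\mu[Z_\tau\ind_E].
\end{align*}
Since $Z_\tau$ is $\cF_\tau$-measurable, this identifies $Z_\tau$ as $\ud\PP^\tau_\lambda/\ud\PP^\tau_\mu$. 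Its logarithm is $M_\tau$, which is exactly the claimed expression. The measure-theoretic bookkeeping here is routine, but it is the only place where stoppedness matters, so I would write it out explicitly.
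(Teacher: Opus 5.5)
Your proposal is correct and follows the paper's proof essentially step for step. Both arguments factorize the trajectory density using the common policy kernels, cancel those kernels, and expand the Gaussian log-ratio. Both then identify $Z_\tau$ as $\ud\PP_\lambda^\tau/\ud\PP_\mu^\tau$ by decomposing $E\in\cF_\tau$ over the events $\{\tau=n\}$. The one addition is your martingale/optional-stopping remark, which is harmless but not needed: that decomposition uses only that $Z_n$ is the density on $\cF_n$ and that $E\cap\{\tau=n\}\in\cF_n$.
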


\begin{proof}
For every $t\in[T]$, let $\pi_t(\cdot\mid h_{t-1})$ denote the conditional distribution used by the algorithm to select $a_t$ given the history $h_{t-1} = (a_1,r_1,\ldots,a_{t-1},r_{t-1})$.  Since the same algorithm is used under both instances, the action-selection kernel $\pi_t$ is identical under $\PP_{\lambda}$ and $\PP_{\mu}$. In particular, it does not depend on the underlying bandit instance.

Let $\phi(x) := (\sqrt{2\pi})^{-1} \exp(-x^2/2)$ denote the standard Gaussian density. Fix any deterministic $n\in[T]$. With respect to the product of the counting measure on actions and the Lebesgue measure on rewards, the density of the trajectory up to time $n$ under $\lambda$ and $\mu$ are
\begin{align*}
    & p_{\lambda}(h_n) = \prod_{t=1}^n \pi_t(a_t\mid h_{t-1}) \phi(r_t-\lambda_{a_t}), \\
    & p_{\mu}(h_n) = \prod_{t=1}^n \pi_t(a_t\mid h_{t-1}) \phi(r_t-\mu_{a_t}).
\end{align*}
Therefore, the action-selection probabilities cancel, and we obtain
\begin{align}
    \frac{
        \ud\left(\PP_{\lambda}|_{\cF_n}\right)
    }{
        \ud\left(\PP_{\mu}|_{\cF_n}\right)
    }
    &=
    \prod_{t=1}^n
    \frac{
        \phi(r_t-\lambda_{a_t})
    }{
        \phi(r_t-\mu_{a_t})
    }.
    \label{eq:gaussian-bandit-likelihood-ratio-deterministic}
\end{align}
Define the likelihood-ratio process $Z_n = \ud \PP_{\lambda}|_{\cF_n} / \ud \PP_{\mu}|_{\cF_n}$ for all $n \in [T]$, and set $Z_0:=1$. We next show that $Z_{\tau}$ is the
Radon--Nikodym derivative between the two trajectory distributions
restricted to $\cF_{\tau}$. Indeed, for every event
$E\in\cF_{\tau}$, we have
\begin{align*}
    \EE_{\mu}\big[Z_{\tau}\ind\{E\}\big] =
    \sum_{n=0}^T  \EE_{\mu}\big[ Z_n\ind\{E\cap\{\tau=n\}\} \big] = 
    \sum_{n=0}^T
    \PP_{\lambda}\big(E\cap\{\tau=n\}\big) = \PP_{\lambda}(E),
\end{align*}
where the second equality follows because
$E\cap\{\tau=n\}\in\cF_n$. Consequently, $Z_\tau = \ud \PP_{\lambda}^{\tau} / \ud \PP_{\mu}^{\tau}$.

It remains to compute the logarithm of $Z_{\tau}$. For every
$t\in[T]$,
\begin{align*}
    \log
    \frac{
        \phi(r_t-\lambda_{a_t})
    }{
        \phi(r_t-\mu_{a_t})
    }
    = 
    \frac{1}{2}
    \big[
        (r_t-\mu_{a_t})^2
        -
        (r_t-\lambda_{a_t})^2
    \big] = 
    (\lambda_{a_t}-\mu_{a_t})
    (r_t-\lambda_{a_t})
    +
    \frac{1}{2}
    (\lambda_{a_t}-\mu_{a_t})^2.
\end{align*}
Taking the logarithm of
\eqref{eq:gaussian-bandit-likelihood-ratio-deterministic} and
evaluating the resulting likelihood-ratio process at $\tau$ therefore
gives
\begin{align*}
    L_{\lambda,\mu}(\tau)
    =
    \log
    \frac{
        \ud\PP_{\lambda}^{\tau}
    }{
        \ud\PP_{\mu}^{\tau}
    }
    =
    \sum_{t=1}^{\tau} \bigg[ (\lambda_{a_t}-\mu_{a_t}) (r_t-\lambda_{a_t}) + \frac{1}{2}(\lambda_{a_t}-\mu_{a_t})^2\bigg],
\end{align*}
which proves the proposition.
\end{proof}

Now we are ready to prove Theorem~\ref{thm:lower-many-optimal-arms}.

\begin{proof}[Proof of Theorem~\ref{thm:lower-many-optimal-arms}]
The hard instance construction is inspired by Exercise 15.2 of~\citet{lattimore2020bandit}. We consider the multi-armed bandit class parameterized by mean vectors:
\begin{align*}
\mu \in \cV_A:= \bigg\{ \bmu \in \bigg\{\frac{1}{2} , \frac{1}{2} -\Delta\bigg\}^K \bigg| \sum_{i=1}^K \ind(\mu_i= 1 / 2) = A \bigg\},
\end{align*}
where $\Delta > 0$ is a parameter to be determined later. We further consider an auxiliary instance $\theta$ such that $\btheta = (1 / 2, \cdots, 1/2)^\top \in \RR^K$, and let $\cV = \cV_A \cup \{\theta\}$. Unless stated otherwise, we assume that noise of rewards is standard Gaussian. For any $\mu \in \cV$, the corresponding instance is given by $([K], r_{\mu}, T)$ where the reward is given by $r_{\mu}(i) = \mu_i$.
For any $\mu \in \cV_A$, we use $\aopt(\mu)$ to denote the set of optimal actions (those with reward $1/2$) and $\asubopt(\mu)$ to denote the set of sub-optimal arms. Our goal is to show that, for any algorithm $\alg$, there exists $\mu \in \cV_A$ such that that 
\begin{align*}
    \regret_{\mu, \alg}(T) =\tilde{\Omega}\bigg(\frac{K-A}{K}\sqrt{T}\bigg).
\end{align*}
In the subsequent proof, we will drop the subscript of $\alg$ when there is no ambiguity.

Now we consider any $\mu \in \cV_A$ and its difference between $\theta$, in terms of the distribution of induced trajectories. Consider the random variable $X_{\mu} = \sum_{a \in \asubopt(\mu)} N_{a}(T)$, then we know that 
\begin{align*}
    \regret_{\mu}(T) = \EE_{\mu}[X_{\mu}]\Delta.
\end{align*}
On the other side, since $X_{\mu} \leq T$, Proposition~\ref{prop:change-of-measure} tells us that for any $\gamma > 0$,
\begin{align}
    \EE_{\mu}[X_{\mu}] \geq e^{-\gamma} \bigg[ \EE_{\theta}[X_{\mu}] - T \PP_{\theta}\bigg(\log \frac{\ud\PP_{\theta}}{\ud\PP_{\mu}}  > \gamma \bigg)\bigg]. \label{eq:lower-bound-change-of-measure}
\end{align}
To lower bound the right hand side, we need to properly choose the threshold $\gamma$, which requires the computation of $\log (\ud\PP_{\theta} / \ud\PP_{\mu})$. Applying Proposition~\ref{prop:gaussian-bandit-likelihood-ratio} with $\tau=T$ gives that 
\begin{align*}
    \log \frac{\ud\PP_{\theta}}{\ud\PP_{\mu}} &= \sum_{t=1}^T
    \bigg[ (\theta_{a_t}-\mu_{a_t})(r_t-\theta_{a_t}) + \frac{1}{2}
        (\theta_{a_t}-\mu_{a_t})^2 \bigg] \\
    & = \underbrace{\frac{1}{2}\sum_{a \in \asubopt(\mu)} N_{a}(T)(\theta_a - \mu_a)^2}_{I_1} + \underbrace{\sum_{t=1}^T \bigg(r_t-\frac12\bigg) \Delta\ind\{a_t \in \asubopt(\mu)\}}_{I_2},
\end{align*}
Since $\sum_{a \in \asubopt(\mu)} N_{a}(T) \leq T$, we know that $I_1 \leq T\Delta^2/2$. For the second term, we consider
\begin{align*}
    M_t = \sum_{s=1}^t \bigg(r_s - \frac12 \bigg) \Delta\ind\{\cE_s(\mu)\}, \quad V_t = \sum_{s=1}^t \Delta^2 \ind\{\cE_s(\mu)\}, \text{ where } \cE_s(\mu) = \{a_s \in \asubopt(\mu)\}
\end{align*}
We further define
\begin{align*}
    Z_t = \exp\bigg(\lambda M_t-\frac{\lambda^2V_t}{2} \bigg), \quad t=0,1,\ldots,T.
\end{align*}
Then under the probability distribution $\PP_\theta$, we see that for any $\lambda \in \RR$,
\begin{align*}
    & \EE_{\theta}[Z_t | \cF_{t-1}] = Z_{t-1} \EE_{\theta}\bigg[\underbrace{\exp\bigg(\lambda (r_t - 1/2) \Delta\ind\{\cE_t(\mu)\}-\frac{\lambda^2\Delta^2 \ind\{\cE_t(\mu)\}}{2} \bigg)}_{(\star)} \bigg| \cF_{t-1}\bigg] 
\end{align*}
For $(\star)$, we first take expectation on $\cG_t$, which gives that 
\begin{align*}
    \EE_{\theta}[(\star) | \cG_t] &  =1 - \ind\{\cE_t(\mu)\} +  \ind\{\cE_t(\mu)\} \EE_{\theta}\bigg[\exp\bigg(\lambda (r_t - 1/2) \Delta -\frac{\lambda^2\Delta^2 }{2} \bigg) \bigg| \cG_t \bigg] \\
    & \leq 1 - \ind\{\cE_t(\mu)\} + \ind\{\cE_t(\mu)\} \\
    & = 1,
\end{align*}
where the inequality holds due to that $r_t - 1/2$ is $1$-sub-gaussian conditioned on $\cG_t$ on $\PP_\theta$. This immediately gives $\EE_{\theta}[(\star) | \cF_{t-1}] \leq 1$, verifying that $Z_t$ is a super-martingale for any $\lambda$. Now invoking Lemma~\ref{lem:sub-gaussian-super-martingale} with $\tau=T$, since $I_2 = M_T$, we see that for any $\beta > 0$,
\begin{align*}
    \PP_{\theta}(I_2 \ge \beta) \leq \exp\bigg(-\frac{\beta^2}{2T\Delta^2}\bigg).
\end{align*}
Now combining the estimation of $I_1$ and $I_2$ and set $\beta = 2 \Delta \sqrt{T\log K}$, and $\gamma = \beta + T\Delta^2 /2$, we obtain that 
\begin{align*}
    \PP_{\theta}\bigg(\log \frac{\ud\PP_{\theta}}{\ud\PP_{\mu}}  > \gamma \bigg) \leq \exp\bigg(-\frac{\beta^2}{2T\Delta^2}\bigg) = \frac{1}{K^2}.
\end{align*}
Plugging into~\eqref{eq:lower-bound-change-of-measure}, we see that
\begin{align*}
    \EE_{\mu}[X_{\mu}] & \geq \exp \bigg(-\beta-\frac{T\Delta^2}{2}\bigg)   \bigg[ \EE_{\theta}[X_{\mu}] - T \PP_{\theta}\bigg(\log \frac{\ud\PP_{\theta}}{\ud\PP_{\mu}}  > \gamma \bigg)\bigg] \\
    & \geq \exp \bigg(-\beta-\frac{T\Delta^2}{2}\bigg)   \bigg[ \EE_{\theta}[X_{\mu}] - \frac{T}{K^2}\bigg].
\end{align*}
Now we sum up over all $\mu \in \cV_A$ and obtains that
\begin{align*}
    \sum_{\mu \in \cV_A} \EE_{\mu}[X_{\mu}] & \geq \exp \Bigg(-\beta-\frac{T\Delta^2}{2}\bigg) \bigg(\sum_{\mu \in \cV_A} \EE_{\theta}[X_{\mu}]  - \sum_{\mu \in \cV_A} \frac{T}{K^2} \Bigg) \\
    & = \exp \bigg(-\beta-\frac{T\Delta^2}{2}\bigg) \Bigg(\sum_{\mu \in \cV_A} \sum_{a \in \asubopt(\mu)} \EE_{\theta}[N_{a}(T)] - \binom{K}{K-A}\frac{T}{K^2} \Bigg) \\
    & = \exp \bigg(-\beta-\frac{T\Delta^2}{2}\bigg) \Bigg(\sum_{a \in \cA} \sum_{\mu:a \in \asubopt(\mu)}  \EE_{\theta}[N_{a}(T)] - \binom{K}{K-A}\frac{T}{K^2} \Bigg) \\
    & = \exp \bigg(-\beta-\frac{T\Delta^2}{2}\bigg) \Bigg(T \binom{K-1}{K-A-1}   - \binom{K}{K-A}\frac{T}{K^2} \Bigg),
\end{align*}
where the second equation is given by $|\cV_A| = \binom{K}{K-A}$ and the last equation holds due to $\sum_{a \in \cA} N_{a}(T) = T$ and $|\{\mu:a \in \asubopt(\mu)\}| = \binom{K-1}{K-A-1}$. Now take maximum over all $\mu \in \cV_A$ and leverage the fact that maximum is larger that average, we see that
\begin{align*}
    \sup_{\mu \in \cV_A} \EE_{\mu}[X_{\mu}] & \geq T\exp \bigg(-\beta-\frac{T\Delta^2}{2}\bigg) \bigg[\binom{K}{K-A}^{-1}\binom{K-1}{K-A-1} - \frac{1}{K^2} \bigg] \\
    & \geq T \exp \bigg(-\beta-\frac{T\Delta^2}{2}\bigg) \bigg(\frac{K-A}{K} - \frac{1}{K^2}\bigg)  \\
    & \geq \frac{(K-A)T}{2K} \exp \bigg(-\beta-\frac{T\Delta^2}{2}\bigg). 
\end{align*}
Recall that $\regret_{\mu}(T) = \EE_{\mu}[X_{\mu}]\Delta$, we see
\begin{align*}
    \sup_{\mu \in \cV_A} \regret_{\mu}(T) \geq \frac{(K-A)T\Delta}{2K} \exp \bigg(-\beta-\frac{T\Delta^2}{2}\bigg).
\end{align*}
Finally, recall that $\beta = 2 \Delta \sqrt{T\log K}$. Now we select $\Delta = \sqrt{(T\log K)^{-1}}$, we see that $\beta = 2$ and $\gamma = 2 + (2\log K)^{-1} \leq 3$. This leads to
\begin{align*}
    \sup_{\mu \in \cV_A} \regret_{\mu}(T) \gtrsim \frac{(K-A)T}{K\sqrt{T\log K}} = \tilde{\Omega}\bigg(\frac{K-A}{K}\sqrt{T}\bigg),
\end{align*}
for any algorithm $\alg$, which finishes the proof.
\end{proof}

\section{Proofs of Theorems in Section~\ref{sec:non-adaptivity}}\label{app:proof-non-adaptivity}

\subsection{Formal Version and Proof of Lemma~\ref{lem:exploration-profile-upper-bound}}

In this section, we provide the formal version of Lemma~\ref{lem:exploration-profile-upper-bound} and the proof of it.

\begin{lemma}\label{lem:pull-count-tail}
For some $1\leq A<K$, consider an algorithm $\alg$ satisfying
\begin{align*}
    \sup_{\nu:\,|\aopt(\nu)|=A}
    \regret_{\nu,\alg}(T)
    \leq
    \Gamma\frac{K-A}{K}\sqrt{T}
\end{align*}
for some $\Gamma\geq1$. Let $\theta$ denote the all-equal instance, then, for every $m\in[T]$ and every $\ell\geq1$,
\begin{align*}
    \max_{\cS\subseteq\cA:\,|\cS|=K-A} \PP_{\theta}\big[N_{\cS}(T)\geq m\big] & \geq \frac{K-A}{K} - \exp(-\ell) - 8\Gamma\frac{K-A}{K}\sqrt{\frac{m\ell}{T}} \\
    \max_{\cS\subseteq\cA:\,|\cS|=K-A} \PP_{\theta}\big[N_{\cS}(T)\geq m\big] 
    &\leq \exp(-\ell) + 8\Gamma\frac{K-A}{K}\sqrt{\frac{T\ell}{m}}.
\end{align*}
\end{lemma}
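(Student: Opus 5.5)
The plan is to compare the all-equal instance $\theta$ with the instance $\mu_{\cS}\in\cV_A$ whose suboptimal set is $\cS$, with gap $\Delta$ to be tuned. The comparison uses the multiplicative change of measure of Proposition~\ref{prop:change-of-measure}. We stop the trajectory once $\cS$ has been pulled $m$ times, so the likelihood ratio stays bounded. The upper-tail inequality is the main content and I will do it first. The lower-tail inequality should then follow from the symmetry of $\theta$ together with the upper-tail machinery. I am less sure of that second part; see the last paragraph.

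\emph{Upper tail.} Fix $\cS$ with $|\cS|=K-A$ and let $\mu=\mu_{\cS}$, which has exactly $A$ optimal arms.
\begin{enumerate}[leftmargin=*]
\item By hypothesis $\Gamma\frac{K-A}{K}\sqrt T\geq \regret_\mu(T)=\Delta\,\EE_\mu[N_{\cS}(T)]\geq \Delta m\,\PP_\mu[N_{\cS}(T)\geq m]$. Hence
\begin{align*}
\PP_\mu[N_{\cS}(T)\geq m]\leq \frac{\Gamma(K-A)\sqrt T}{K m\Delta}.
\end{align*}
\item Let $\tau$ be the first time $N_{\cS}$ reaches $m$, with $\tau=T$ if this never happens. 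Then $\{N_{\cS}(T)\geq m\}\in\cF_\tau$.
\item Apply Proposition~\ref{prop:gaussian-bandit-likelihood-ratio} to $\PP_\theta^\tau,\PP_\mu^\tau$. Exactly as in the proof of Theorem~\ref{thm:lower-many-optimal-arms}, the log-likelihood ratio splits into two terms. The first is a deterministic part $I_1\leq m\Delta^2/2$. The second is a martingale term $I_2$ whose predictable variance is at most $m\Delta^2$, because only pulls in $\cS$ contribute and at most $m$ of them occur before $\tau$.
\item The supermartingale bound of Lemma~\ref{lem:sub-gaussian-super-martingale}, applied at $\tau$, gives $\PP_\theta(I_2\geq\beta)\leq \exp(-\beta^2/(2m\Delta^2))$.
\item Choose $\Delta=1/\sqrt{m\ell}$ and $\beta=\sqrt2$. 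Then this tail equals $e^{-\ell}$, and the threshold $\gamma=\sqrt2+1/(2\ell)$ satisfies $e^\gamma\leq 8$.
\item The event form of Proposition~\ref{prop:change-of-measure} then gives
\begin{align*}
\PP_\theta[N_{\cS}(T)\geq m]\leq 8\,\PP_\mu[N_{\cS}(T)\geq m]+e^{-\ell}.
\end{align*}
Substituting step~1 with $\Delta=1/\sqrt{m\ell}$ gives the bound $8\Gamma\frac{K-A}{K}\sqrt{T\ell/m}+e^{-\ell}$, uniformly in $\cS$. (We also need $\Delta\leq 1/2$; otherwise the bound is trivial.)
\end{enumerate}

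\emph{Lower tail.} The starting point is symmetry under $\theta$. Averaging over all $\binom{K}{K-A}$ sets, as in the proof of Theorem~\ref{thm:lower-many-optimal-arms}, gives $\EE_{\cS}\EE_\theta[N_{\cS}(T)]=T(K-A)/K$ for a uniformly random $\cS$. Consequently some $\cS$ carries at least a $(K-A)/K$ fraction of the pulls on average.

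To turn this mean into a tail probability, write
\begin{align*}
N_{\cS}(T)/T\leq \ind\{N_{\cS}(T)\geq m\}+\frac{N_{\cS}(T)}{T}\ind\{N_{\cS}(T)<m\}.
\end{align*}
The second term needs to be controlled at scale $\frac{K-A}{K}\sqrt{m\ell/T}$ rather than the crude $m/T$. To do this I would again use the stopped change of measure to $\mu_{\cS}$, now with gap $\Delta=1/\sqrt{m\ell}$, and exploit that regret on $\mu_{\cS}$ equals $\Delta\,\EE_\mu[N_{\cS}]\leq \Gamma\frac{K-A}{K}\sqrt T$. This shows that the truncated mass $\EE_\theta[N_{\cS}(T)\wedge m]$ is at most $8\Gamma\frac{K-A}{K}\sqrt{m\ell T}+Te^{-\ell}$. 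Dividing by $T$ and averaging over $\cS$ then gives the lower-tail inequality.

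This truncation step is the main obstacle. The bare Markov-style bound only yields a loss of $m/T$, so matching the stated $\sqrt{mT}$ scaling relies on the change of measure controlling the truncated mass in that form. If it fails, my fallback is a dyadic peeling over levels $m'\leq m$, applying the upper-tail bound at each level.
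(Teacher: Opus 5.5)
Your proposal is correct and is essentially the paper's proof. The upper tail is the same stopped multiplicative change of measure to the instance with suboptimal set $\cS$. The truncation step you were unsure about goes through exactly as you state: $N_{\cS}(T)\wedge m=N_{\cS}(\tau)$ is $\cF_\tau$-measurable, so Proposition~\ref{prop:change-of-measure} gives $\EE_\theta[N_{\cS}(T)\wedge m]\le e^{\gamma}\EE_{\mu_{\cS}}[N_{\cS}(T)]+Te^{-\ell}$ in one line, and no peeling is needed (the paper does the same with $N_{\cS}(T)\ind\{N_{\cS}(T)<m\}$ and then averages the regret bound over $\cS$). The only slip is your remark that the case $\Delta>1/2$ is trivial, which fails when $(K-A)/K$ is small and $m\ell<4$; take $\Delta=1/(4\sqrt{m\ell})$ as the paper does, so that $\gamma=\log 2$ suffices and the constant $e^{\gamma}\cdot 4=8$ is unchanged.
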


\begin{proof}[Proof of Lemma~\ref{lem:pull-count-tail}]
Fix any $m\in[T]$ and $\ell\geq1$, and for simplicity we write
\begin{align*}
    q_{\alg}(m):= \max_{\cS\subseteq\cA:\,|\cS|=K-A}
    \PP_\theta\big(N_{\cS}(T)\geq m\big).
\end{align*}
Recall that $\theta$ is the all-equal instance with mean vector $\btheta=\mathbf 1/2$. For every subset $\cS\subseteq\cA$ with $|\cS|=K-A$, construct an
alternative Gaussian bandit instance $\nu_{\cS}$ with mean vector
$\bnu_{\cS}$ given by
\begin{align*}
    \nu_{\cS,a}
    :=
    \begin{cases}
        1/2-\Delta, & a\in\cS,\\
        1/2, & a\notin\cS,
    \end{cases}
    \qquad
    \Delta
    :=
    \frac{1}{4\sqrt{m\ell}}.
\end{align*}
Since $|\cS| = K-A$, the instance $\nu_{\cS}$ has exactly $A$
optimal arms. Moreover, $\Delta\leq1/4$, so all its reward means
belong to $[0,1]$. We show that, whenever the algorithm pulls only $m$ or fewer than $m$ times on the set $\cS$, it cannot reliably distinguish $\theta$ and $\nu_{\cS}$. To show this, for a fixed $\cS$, we define the stopping time $\tau_{\cS}$ at which the pull count on $\cS$ is exactly $m$
\begin{align*}
    H_{\cS} := \{N_{\cS}(T)\geq m\},\quad 
    \tau_{\cS} := \inf\{t\in[T]:N_{\cS}(t)=m\}\wedge T.
\end{align*}
Then $\tau_{\cS}$ is a bounded $(\cF_t)_{t=0}^T$-stopping time, and
\begin{align}
    H_{\cS} = \{N_{\cS}(\tau_{\cS})=m\} \in  \cF_{\tau_{\cS}}.
    \label{eq:pull-count-event-stopped-measurable}
\end{align}

Now we show that, up to the stopping time $\tau_{\cS}$, the distribution given by $\theta$ and $\nu_{\cS}$ is close. We once again characterize this closeness via the log-likelihood ratio. In particular, consider the
trajectory distributions restricted to $\cF_{\tau_{\cS}}$ by $\PP_\theta^{\tau_{\cS}} := \PP_\theta|_{\cF_{\tau_{\cS}}}$ and $\PP_{\nu_{\cS}}^{\tau_{\cS}} := \PP_{\nu_{\cS}}|_{\cF_{\tau_{\cS}}}$. Applying Proposition~\ref{prop:gaussian-bandit-likelihood-ratio} gives that
\begin{align*}
    \log \frac{\ud\PP_\theta^{\tau_{\cS}}}{\ud\PP_{\nu_{\cS}}^{\tau_{\cS}}} & = \sum_{t=1}^{\tau_{\cS}} \bigg[ (\theta_{a_t}-\nu_{\cS, a_t}) (r_t-\theta_{a_t}) + \frac{1}{2}(\theta_{a_t}-\nu_{\cS, a_t})^2\bigg] \\
    & = \Delta \sum_{t=1}^{\tau_{\cS}} \ind\{a_t\in\cS\}(r_t-1/2) +
    \frac{\Delta^2}{2}
    N_{\cS}(\tau_{\cS}).
\end{align*}
Now we denote $M_t = \Delta \sum_{s=1}^t \ind\{a_s\in\cS\}(r_s-1/2)$ and $V_t = \Delta^2 N_{\cS}(t)$
, then under the probability distribution $\PP_\theta$, we see that for any $\lambda \in \RR$,
\begin{align*}
    & \EE_{\theta}\bigg[\exp\bigg(\lambda M_t-\frac{\lambda^2V_t}{2} \bigg) \bigg| \cF_{t-1}\bigg] \\
    & \quad = \exp\bigg(\lambda M_{t-1}-\frac{\lambda^2V_{t-1}}{2} \bigg) \EE_{\theta}\bigg[\underbrace{\exp\bigg(\lambda (r_t-1/2) \Delta\ind\{a_t\in\cS\}-\frac{\lambda^2\Delta^2 \ind\{a_t\in\cS\}}{2} \bigg)}_{(\star)} \bigg| \cF_{t-1}\bigg] 
\end{align*}
For $(\star)$, we first take expectation on $\cG_t$, which gives that
\begin{align*}
    \EE_{\theta}[(\star) | \cG_t] &  =1 - \ind\{a_t\in\cS\} +  \ind\{a_t\in\cS\} \EE_{\theta}\bigg[\exp\bigg(\lambda (r_t-1/2) \Delta -\frac{\lambda^2\Delta^2 }{2} \bigg) \bigg| \cG_t \bigg] \\
    & \leq 1 - \ind\{a_t\in\cS\} + \ind\{a_t\in\cS\} \\
    & = 1,
\end{align*}
where the inequality holds due to that $r_t-1/2$ is $1$-sub-gaussian conditioned on $\cG_t$ on $\PP_\theta$. This immediately gives $\EE_{\theta}[(\star) | \cF_{t-1}] \leq 1$, verifying that $\exp(\lambda M_t - \lambda^2 V_t /2)$ is a super-martingale for any $\lambda$. Now invoking Lemma~\ref{lem:sub-gaussian-super-martingale} with $x = \Delta\sqrt{2m\ell}$, since $V_\tau \leq m\Delta^2$, we have
\begin{align*}
    \PP_{\theta}\big(M_\tau \ge \Delta\sqrt{2m\ell}\big) \leq \exp\bigg(-\frac{2m\Delta^2\ell}{2m\Delta^2}\bigg) = \exp(-\ell).
\end{align*}
Since $\Delta^2 N_{\cS}(\tau_{\cS}) /2 \leq m\Delta^2/2$, this immediately leads to
\begin{align*}
    \PP \bigg(\log \frac{\ud\PP_\theta^{\tau_{\cS}}}{\ud\PP_{\nu_{\cS}}^{\tau_{\cS}}} \geq \Delta\sqrt{2m\ell} + m\Delta^2/2 \bigg) \leq \exp(-\ell).
\end{align*}
Recall that $\Delta = (4\sqrt{m\ell})^{-1}$, this means that $\Delta\sqrt{2m\ell} + m\Delta^2/2 = 1/2\sqrt{2} + 1/32\ell < \log 2$ since $\ell \geq 1$
\begin{align}
    \PP \bigg(\log \frac{\ud\PP_\theta^{\tau_{\cS}}}{\ud\PP_{\nu_{\cS}}^{\tau_{\cS}}} > \log 2 \bigg) \leq \exp(-\ell).
    \label{eq:pull-count-tail-llr}
\end{align}
We now prove the upper bound on $q_{\alg}(m)$. Fix any $\cS\subseteq\cA$ with $|\cS| = K-A$. Since $H_{\cS}\in\cF_{\tau_{\cS}}$, we may apply
Proposition~\ref{prop:change-of-measure} on the measurable space
$(\Omega,\cF_{\tau_{\cS}})$ with $\PP= \PP_\theta^{\tau_{\cS}}$, $\QQ =   \PP_{\nu_{\cS}}^{\tau_{\cS}}$, $X = \ind\{H_{\cS}\}$, $b=1$ and $\gamma = \log 2$. Using \eqref{eq:pull-count-tail-llr}, we obtain
\begin{align}
    \PP_{\nu_{\cS}}(H_{\cS}) \geq \frac{1}{2} \big[\PP_\theta(H_{\cS}) -  \PP_\theta(L_{\cS}>\log 2) \big] \geq \frac{1}{2} \big[ \PP_\theta(H_{\cS})  -\exp(-\ell) \big].
    \label{eq:pull-count-tail-event-com}
\end{align}
Every arm in $\cS$ is suboptimal under $\nu_{\cS}$ with gap
$\Delta$. Hence, the assumed regret guarantee gives
\begin{align*}
    \Gamma\frac{K-A}{K}\sqrt{T} \geq \regret_{\nu_{\cS},\alg}(T)= \Delta
    \EE_{\nu_{\cS}}[N_{\cS}(T)] &\geq \Delta m\PP_{\nu_{\cS}}(H_{\cS})  \geq \frac{\Delta m}{2} \big[ \PP_\theta(H_{\cS}) - \exp(-\ell)\big].
\end{align*}
Rearranging and substituting $\Delta=(4\sqrt{m\ell})^{-1}$ yields
\begin{align*}
    \PP_\theta\big(N_{\cS}(T)\geq m\big) \leq  \exp(-\ell)  + \frac{2\Gamma(K-A)\sqrt{T}}{K\Delta m} =  \exp(-\ell) + 8\Gamma\frac{K-A}{K}    \sqrt{\frac{T\ell}{m}}.
\end{align*}
Since this holds for every $|\cS|=K-A$, taking maximum over all $|\cS|=K-A$ gives the desired upper bound
\begin{align}
    q_{\alg}(m) \leq \exp(-\ell) +  8\Gamma\frac{K-A}{K} \sqrt{\frac{T\ell}{m}}.
    \label{eq:pull-count-tail-upper}
\end{align}

We next prove the lower bound on $q_{\alg}(m)$. For every
$\cS\subseteq\cA$ with $|\cS|=K-A$, define $Y_{\cS} = N_{\cS}(T) \ind \{ N_{\cS}(T) < m\} = N_{\cS}(T)\ind\{H_{\cS}^{\complement}\}$. On $H_{\cS}^{\complement}$, the stopping time satisfies $\tau_{\cS}=T$, whereas on $H_{\cS}$ both sides below vanish. Therefore, $Y_{\cS} = N_{\cS}(\tau_{\cS})\ind\{H_{\cS}^{\complement}\}$. In particular, $Y_{\cS}$ is $\cF_{\tau_{\cS}}$-measurable and takes values in $[0,T]$. Once again, applying Proposition~\ref{prop:change-of-measure} once more on $(\Omega,\cF_{\tau_{\cS}})$, now with $\PP= \PP_\theta^{\tau_{\cS}}$, $\QQ =   \PP_{\nu_{\cS}}^{\tau_{\cS}}$, $X = Y_{\cS}$, $b=T$ and $\gamma = \log 2$, and using \eqref{eq:pull-count-tail-llr}, gives
\begin{align*}
    \EE_{\nu_{\cS}}[Y_{\cS}]  \geq \frac{1}{2} \big[ \EE_\theta[Y_{\cS}] - T\PP_\theta(L_{\cS}>\log 2) \big] \geq \frac{1}{2} \big[ \EE_\theta[Y_{\cS}] - T\exp(-\ell)\big].
\end{align*}
Since $Y_{\cS}\leq N_{\cS}(T)$, we have
\begin{align*}
    \EE_{\nu_{\cS}}[N_{\cS}(T)] \geq \EE_{\nu_{\cS}} [Y_{\cS}] \geq \frac{1}{2} \big[ \EE_\theta \big[ N_{\cS}(T)\ind\{H_{\cS}^{\complement}\} \big] - T\exp(-\ell) \big].
\end{align*}
Moreover, as $\PP_\theta(H_{\cS})\leq q_{\alg}(m)$, it follows that
\begin{align*}
    \EE_{\nu_{\cS}}[N_{\cS}(T)] & \geq \frac{1}{2} \big[ \EE_\theta \big[ N_{\cS}(T)\ind\{H_{\cS}^{\complement}\} \big] - T\exp(-\ell) \big] \\
    &\geq  \frac{1}{2} \big[\EE_\theta[N_{\cS}(T)] - T\PP_\theta(H_{\cS}) - T\exp(-\ell)  \big]\\
    &\geq \frac{1}{2}\EE_\theta[N_{\cS}(T)] - \frac{T}{2}q_{\alg}(m) - \frac{T}{2}\exp(-\ell).
\end{align*}
Thus,
\begin{align}
    \EE_{\nu_{\cS}}[N_{\cS}(T)]
    \geq
    \frac{1}{2}\EE_\theta[N_{\cS}(T)]
    -
    \frac{T}{2}q_{\alg}(m)
    -
    \frac{T}{2}\exp(-\ell).
    \label{eq:pull-count-tail-single-set-lower}
\end{align}

We now average
\eqref{eq:pull-count-tail-single-set-lower} over all subsets
$\cS\subseteq\cA$ with $|\cS|=K-A$. Since every arm belongs to
$\binom{K-1}{K-A-1}$ such subsets,
\begin{align*}
    \frac{1}{\binom{K}{K-A}} \sum_{\cS\subseteq\cA:\,|\cS|=K-A}\EE_\theta[N_{\cS}(T)] = \frac{\binom{K-1}{K-A-1}}{\binom{K}{K-A}} \sum_{a\in\cA}\EE_\theta[N_{a}(T)] = \frac{K-A}{K}T.
\end{align*}
Therefore,
\begin{align}
    \frac{1}{\binom{K}{K-A}}  \sum_{\cS\subseteq\cA:\,|\cS|=K-A}  \EE_{\nu_{\cS}}[N_{\cS}(T)] \geq \frac{T}{2} \bigg[ \frac{K-A}{K} - q_{\alg}(m) - \exp(-\ell)\bigg].
    \label{eq:pull-count-tail-averaged-lower}
\end{align}
Since 
Every $\nu_{\cS}$ has exactly $A$ optimal arms, we have $\regret_{\nu_{\cS},\alg}(T) = \Delta \EE_{\nu_{\cS}}[N_{\cS}(T)]$ . Together with \eqref{eq:pull-count-tail-averaged-lower}, this implies that
\begin{align*}
    \Gamma\frac{K-A}{K}\sqrt{T} \geq  \frac{\Delta}{\binom{K}{K-A}} \sum_{\cS\subseteq\cA:\,|\cS|=K-A} \EE_{\nu_{\cS}} [N_{\cS}(T)] 
    \geq \frac{\Delta T}{2} \bigg[\frac{K-A}{K} -  q_{\alg}(m) - \exp(-\ell)\bigg].
\end{align*}
Rearranging and substituting
$\Delta=(4\sqrt{m\ell})^{-1}$ gives
\begin{align}
    q_{\alg}(m) &\geq \frac{K-A}{K} - \exp(-\ell) - \frac{2\Gamma(K-A)\sqrt{T}}{K\Delta T} \\
    &= \frac{K-A}{K} - \exp(-\ell) - 8\Gamma\frac{K-A}{K}\sqrt{\frac{m\ell}{T}}.
    \label{eq:pull-count-tail-lower}
\end{align}

Combining
\eqref{eq:pull-count-tail-upper} and
\eqref{eq:pull-count-tail-lower} proves
\begin{align*}
    \frac{K-A}{K}- \exp(-\ell) -     8\Gamma\frac{K-A}{K}  \sqrt{\frac{m\ell}{T}} \leq q_{\alg}(m) \leq  \exp(-\ell)  + 8\Gamma\frac{K-A}{K}\sqrt{\frac{T\ell}{m}},
\end{align*}
as desired.
\end{proof}

\subsection{Tightness of Lemma~\ref{lem:pull-count-tail}}\label{app:tail-envelop-tightness}

In this section, we show that both inequalities in Lemma~\ref{lem:pull-count-tail} cannot be improved in general, for all $m = \tilde{O}(T)$ and $A \geq K/2$ and $\ell$ chosen to be the same order as the remaining factors in the bound. For simplicity, given any algorithm $\alg$, we write
\begin{align*}
    q_{\alg}(m):= \max_{\cS\subseteq\cA:\,|\cS|=K-A}
    \PP_\theta\big(N_{\cS}(T)\geq m\big).
\end{align*}
Then a properly chosen $\ell$, together with $m = \tilde{O}(T)$ and $A \geq K/2$ reduces the lower envelop in Lemma~\ref{lem:pull-count-tail} to
\begin{align*}
    q_{\alg}(m) =\tilde{\Omega}\bigg(\frac{K-A}{K}\bigg),
\end{align*}
and the upper envelop to 
\begin{align*}
    q_{\alg}(m) =  \tilde{O}\bigg(\frac{K-A}{K}\sqrt{\frac{T}{m}}\bigg).
\end{align*}
We will show that there exist two algorithms that attain the reduced lower and upper envelop respectively.

\paragraph{Tightness of the Lower Envelope.}
We show that the lower bound is attained by the  near-minimax optimal algorithm $\algname$. Recall that $\algname$ first samples a subset $\cB \subseteq\cA$ of $L$ arms uniformly without replacement and then runs a bandit algorithm only on the arms in $\cB$ for all $T$ rounds, where $L = \tilde{\Theta}(1)$. We show that, under the all-equal instance $\theta$, for every $m\leq T/L$, $q_{\algname}(m) \leq L(K-A)/K$. 

Fix any $\cS\subseteq\cA$ with $|\cS|=K-A$. Since $\algname$ only pulls arms in the randomly sampled subset $\cB$, the event $N_{\cS}(T)\geq m$ for any $m\geq1$ implies that $\cB\cap\cS\neq\emptyset$. Therefore,
\begin{align*}
    \PP_{\btheta}\big[N_{\cS}(T)\geq m\big] \leq     \PP\big[\cU\cap\cS\neq\emptyset\big] \leq    \sum_{a\in\cS}\PP[a\in\cU] = \frac{L(K-A)}{K},
\end{align*}
which gives that
\begin{align}
    q_{\algname}(m) \leq \frac{L(K-A)}{K}.
    \label{eq:tail-envelope-lower-tight-upper}
\end{align}

\begin{algorithm}[t]
\caption{Tail-Saturating Algorithm for Bandit with Multiple Optimal Arms}
\label{algorithm:tail-upper-tight}
    \begin{algorithmic}[1]
    \REQUIRE Action set $\cA$, number of total rounds $T$, number of optimal arms $A$, and threshold $m\leq T/2$.
    \STATE Fix any subset $\cU\subseteq\cA$ with $|\cU|=2(K-A)$ together with a partition $\cU=\cS_0\cup\cS_1$, where $|\cS_0|=|\cS_1|=K-A$.
    \STATE Set
    \begin{align*}
        p_m := \min\bigg\{1,\,       \frac{K-A}{K}\sqrt{\frac{T}{m}}
        \bigg\},
    \end{align*}
    and sample $Z\sim\operatorname{Bernoulli}(p_m)$.
    \IF{$Z=0$}
        \STATE Run $\algname$ on $\cA$ for $T$ rounds.
    \ELSE
        \STATE Sample a uniformly random permutation $\sigma$ of the arms in $\cU$.
        \STATE Run $\algname$ on $\sigma(\cU)$ for $2m$ rounds, using $K'=2(K-A)$ and $A'=K-A$ as its input parameters, and map the selected actions back to $\cU$ through $\sigma^{-1}$.
        \STATE Restart $\algname$ on the full action set $\cA$ for the remaining $T-2m$ rounds.
    \ENDIF
    \end{algorithmic}
\end{algorithm}

\paragraph{Tightness of the upper envelope.}
For every prescribed scale $m\leq T/2$, we construct a minimax-optimal algorithm satisfying
\begin{align*}
q_{\alg}(m) =\widetilde{\Omega}\bigg(\min\bigg\{1,\, \frac{K-A}{K}\sqrt{\frac{T}{m}}\bigg\} \bigg),
\end{align*}
which matches the upper envelope in Lemma~\ref{lem:pull-count-tail}. The constructed algorithm is described in Algorithm~\ref{algorithm:tail-upper-tight}, which is built on the algorithm $\algname$. 

We first show that Algorithm~\ref{algorithm:tail-upper-tight} is minimax-optimal. On the event $Z=1$, we fix any fixed subset $|\cU| = 2(K-A)$. Since the original instance contains only $K-A$ suboptimal arms in total, the set $\cU$ of $2(K-A)$ arms contains at least $K-A$ optimal arms. Therefore, when $\algname$ is run on $\cU$ with parameters $K'=2(K-A)$ and $A'=K-A$, the same subsampling analysis as in Section~\ref{sec:upper-bound} gives
\begin{align}
    \sup_{\nu:\,|\aopt(\nu)|=A} \regret_{\nu,\algname}^{\cU}(2m)  =  \widetilde{O}(\sqrt{m}),
    \label{eq:tail-upper-probe-regret}
\end{align}
where $\regret_{\nu,\algname}^{\cU}(2m)$ denotes the regret accumulated during the exploration phase on $\cU$. The remainder of Algorithm~\ref{algorithm:tail-upper-tight} simply runs $\algname$ on the original problem, and hence contributes at most $\widetilde{O}((K-A)/K\sqrt{T})$ regret. Consequently,
\begin{align*}
    \sup_{\nu:\,|\aopt(\nu)|=A} \regret_{\nu,\alg}(T) \leq  \widetilde{O}\bigg(\frac{K-A}{K}\sqrt{T}\bigg) +p_m\widetilde{O} (\sqrt{m}) = \widetilde{O}\bigg(\frac{K-A}{K}\sqrt{T}\bigg),
\end{align*}
where the last equality follows from
\begin{align*}
    p_m\sqrt{m}=\min\bigg\{\sqrt{m},\,\frac{K-A}{K}\sqrt{T} \bigg\} \leq\frac{K-A}{K}\sqrt{T}.
\end{align*}
Thus Algorithm~\ref{algorithm:tail-upper-tight} remains minimax-optimal up to logarithmic factors.

It remains to characterize its pull-count tail under the all-equal instance $\theta$. Conditional on $Z=1$, consider only the first $2m$ rounds. Let $Y:=N_{\cS_0}(2m)$. Since every arm has the same reward distribution under $\theta$, and we apply a uniformly random permutation to $\cU$ before running the exploration algorithm, the resulting trajectory distribution is invariant under permutations of the arms in $\cU$. In particular, swapping $\cS_0$ and $\cS_1$ leaves the distribution unchanged. Moreover, every action during the first $2m$ rounds belongs to $\cU=\cS_0\cup\cS_1$. These leads to 
\begin{align*}
    Y = 2m - N_{\cS_1}(2m) \overset{d}{=} 2m - Y,
\end{align*}
which immediately gives
\begin{align*}
    \PP_{\theta}[Y\geq m\mid Z=1] = \frac{1+\PP_{\theta}[Y=m\mid Z=1]}{2} \geq  \frac{1}{2}.
\end{align*}
Since the number of pulls of $\cS_0$ can only increase after the exploration phase, we obtain that
\begin{align*}
    q_{\alg}(m)  & \geq  \PP_{\theta}\big[ N_{\cS_0}(T)\geq m
    \big] \\
    & \geq \PP[Z=1] \PP_{\theta}\big[N_{\cS_0}(2m)\geq m \,\big|\, Z=1 \big]  \\
    & \geq  \frac{1}{2} \min\bigg\{  1,\, \frac{K-A}{K}\sqrt{\frac{T}{m}}  \bigg\}.
\end{align*}
Together with~\eqref{eq:tail-envelope-lower-tight-upper}, it is indicated that both bounds in Lemma~\ref{lem:pull-count-tail} cannot be improved up to logarithmic factors.

\subsection{Exploration profile of Near Minimax-Optimal Algorithms}\label{app:exploration-profile-minimax-optimal}

In this section, we provide the formal statement and proof of the exploration lower bound for algorithms that are near-minimax optimal on $\class{K}{B}$, referenced in Section~\ref{sec:non-adaptivity-A-large-overview}. The argument follows the same strategy as Lemma~\ref{lem:pull-count-tail},
but is applied in the opposite direction: instead of showing that an algorithm near-optimal on $\class{K}{A}$ rarely concentrates too many pulls on a fixed small set, we show that an algorithm near-optimal on $\class{K}{B}$ cannot avoid pulling a fixed set of size $B$ sufficiently often.

\begin{lemma}\label{lem:exploration-profile-minimax-optimal}
Let $1\leq B<K$, and suppose that $\alg$ is minimax-optimal on $\class{K}{B}$ up to a factor $\Gamma\geq1$, namely
\begin{align*}
    \sup_{\nu:\,|\aopt(\nu)|=B} \regret_{\nu,\alg}(T) \leq \Gamma\frac{K-B}{\sqrt{KB}}\sqrt{T}.
\end{align*}
Let $\theta$ denote the all-equal instance. Then, for every $\cU\subseteq\cA$ with $|\cU|=B$, every $\ell\geq1$, and every integer $m$ with $1\leq m\leq T/2$,
\begin{align}
    \PP_{\theta}\big[N_{\cU}(T)<m\big] \leq \exp(-\ell) + 16\Gamma\sqrt{\frac{K\ell m}{BT}}. \label{eq:exploration-profile-minimax-optimal-main}
\end{align}
In particular, for any $\delta=\Theta(1)$, taking $\ell=\log(2/\delta)$, sufficient large $T$ and
\begin{align*}
    m_B := \bigg\lfloor \frac{\delta^2BT}{2^{10}\Gamma^2K \ell} \bigg\rfloor = \widetilde{\Omega}\bigg(\frac{TB}{K}\bigg),
\end{align*}
we obtain $\PP_{\theta}\big[N_{\cU}(T)\geq m_B\big] \geq 1 - \delta$.
\end{lemma}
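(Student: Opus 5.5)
We mirror the upper-bound half of Lemma~\ref{lem:pull-count-tail}, with the roles of the event and its complement swapped. Fix $\cU$ with $|\cU|=B$, $\ell\geq 1$ and $m\leq T/2$. Consider the alternative instance $\nu_{\cU}$ with means $1/2+\Delta$ on $\cU$ and $1/2$ off $\cU$, where $\Delta:=(4\sqrt{m\ell})^{-1}\leq 1/4$. This instance has exactly $B$ optimal arms. Every arm outside $\cU$ has gap $\Delta$, so
\begin{align*}
\regret_{\nu_{\cU}}(T)=\Delta\,\EE_{\nu_{\cU}}[T-N_{\cU}(T)].
\end{align*}
Let $H:=\{N_{\cU}(T)<m\}$ and $\tau:=\inf\{t:N_{\cU}(t)=m\}\wedge T$. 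Then $H\in\cF_\tau$, since on $H$ we have $\tau=T$, and $H=\{N_{\cU}(\tau)<m\}$.

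\textbf{Log-likelihood bound.} By Proposition~\ref{prop:gaussian-bandit-likelihood-ratio},
\begin{align*}
\log(\ud\PP_\theta^\tau/\ud\PP_{\nu_{\cU}}^\tau)=-\Delta\sum_{t\le\tau}\ind\{a_t\in\cU\}(r_t-1/2)+\tfrac{\Delta^2}{2}N_{\cU}(\tau).
\end{align*}
The quadratic term is at most $m\Delta^2/2$ because $N_{\cU}(\tau)\leq m$. The martingale term has quadratic variation at most $m\Delta^2$. The same supermartingale argument as before, applied to $-M_t$ (the sign is irrelevant for sub-Gaussian noise), together with Lemma~\ref{lem:sub-gaussian-super-martingale}, gives
\begin{align*}
\PP_\theta\big(\log(\ud\PP_\theta^\tau/\ud\PP_{\nu_{\cU}}^\tau)>\log 2\big)\leq e^{-\ell},
\end{align*}
exactly as in \eqref{eq:pull-count-tail-llr}.

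\textbf{Change of measure.} Apply Proposition~\ref{prop:change-of-measure} on $\cF_\tau$ with $X=\ind\{H\}$, $b=1$ and $\gamma=\log 2$. This yields
\begin{align*}
\PP_{\nu_{\cU}}(H)\geq \tfrac12\big[\PP_\theta(H)-e^{-\ell}\big].
\end{align*}

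\textbf{Regret accounting.} On $H$ we have $T-N_{\cU}(T)>T-m\geq T/2$. Hence
\begin{align*}
\Gamma\frac{K-B}{\sqrt{KB}}\sqrt T\;\geq\;\regret_{\nu_{\cU}}(T)\;\geq\;\Delta\frac{T}{2}\PP_{\nu_{\cU}}(H)\;\geq\;\frac{\Delta T}{4}\big[\PP_\theta(H)-e^{-\ell}\big].
\end{align*}
Rearranging and using $(K-B)/\sqrt{KB}\leq\sqrt{K/B}$ gives
\begin{align*}
\PP_\theta(H)\leq e^{-\ell}+\frac{4\Gamma\sqrt{K/B}\sqrt T}{\Delta T}=e^{-\ell}+16\Gamma\sqrt{\frac{K\ell m}{BT}},
\end{align*}
which is \eqref{eq:exploration-profile-minimax-optimal-main}.

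\textbf{Corollary.} Take $\ell=\log(2/\delta)$, so that $e^{-\ell}=\delta/2$. With $m=m_B$ we have
\begin{align*}
16\Gamma\sqrt{K\ell m_B/(BT)}\leq 16\Gamma\sqrt{\delta^2/(2^{10}\Gamma^2)}=\delta/2.
\end{align*}
Therefore $\PP_\theta[N_{\cU}(T)<m_B]\leq\delta$. The ``sufficiently large $T$'' condition is used only to ensure $1\leq m_B\leq T/2$, and $m_B=\tilde\Omega(TB/K)$ because $\Gamma$ and $\ell$ are polylogarithmic or constant.

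The main obstacle is keeping the likelihood ratio controlled. This requires stopping at $\tau$, so that the variance is bounded by $m\Delta^2$ and not by $T\Delta^2$, and checking that $H$ is $\cF_\tau$-measurable. This is exactly the device already used in Lemma~\ref{lem:pull-count-tail}, so the argument should go through directly.
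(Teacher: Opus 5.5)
Your proof is correct and follows the paper's argument essentially step for step. It uses the same instance with $\Delta=(4\sqrt{m\ell})^{-1}$, the same stopping time, the multiplicative change of measure applied to the indicator of $\{N_{\cU}(T)<m\}$ (the paper's $H^{\complement}$), and the same regret accounting via $(K-B)/\sqrt{KB}\le\sqrt{K/B}$; your explicit verification of the corollary with $\ell=\log(2/\delta)$ and $m_B$, which the paper leaves implicit, is also correct.
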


\begin{proof}[Proof of Lemma~\ref{lem:exploration-profile-minimax-optimal}]
Fix $\cU\subseteq\cA$ with $|\cU|=B$, $\ell\geq1$, and an integer $m\leq T/2$, and write $\Delta:=(4\sqrt{m\ell})^{-1}\leq1/4$. Consider the instance $\mu$ obtained from the all-equal instance $\theta$ by increasing the mean rewards on $\cU$ by $\Delta$:
\begin{align*}
    \mu_a := \begin{cases}
        1/2+\Delta, & a\in\cU,\\
        1/2, & a\in\cU^{\complement}.
    \end{cases}
\end{align*}
Since $|\cU|=B$, every arm in $\cU$ is strictly better than every arm outside $\cU$, so $\mu$ has exactly $B$ optimal arms, namely $\aopt(\mu)=\cU$, and hence $\mu\in\class{K}{B}$. As in the proof of Lemma~\ref{lem:pull-count-tail}, define
\begin{align*}
    H:=\{N_{\cU}(T)\geq m\}, \qquad \tau:=\inf\{t\in[T]:N_{\cU}(t)=m\}\wedge T,
\end{align*}
so that $\tau$ is a bounded $(\cF_t)_{t=0}^T$-stopping time and, since the pull count $N_{\cU}(t)$ increases by at most one per round, $H=\{N_{\cU}(\tau)=m\}\in\cF_{\tau}$. Consequently $H^{\complement}=\{N_{\cU}(T)<m\}$ also belongs to $\cF_{\tau}$, and on $H^{\complement}$ we have $\tau=T$.

We first show that, up to $\tau$, the trajectory distributions induced by $\theta$ and $\mu$ remain close. Writing $\PP_\theta^{\tau}:=\PP_\theta|_{\cF_{\tau}}$ and $\PP_{\mu}^{\tau}:=\PP_{\mu}|_{\cF_{\tau}}$, and using $\theta_a-\mu_a=-\Delta\ind\{a\in\cU\}$, Proposition~\ref{prop:gaussian-bandit-likelihood-ratio} gives
\begin{align*}
    \log \frac{\ud\PP_\theta^{\tau}}{\ud\PP_{\mu}^{\tau}} & = \sum_{t=1}^{\tau} \bigg[ (\theta_{a_t}-\mu_{a_t}) (r_t-\theta_{a_t}) + \frac{1}{2}(\theta_{a_t}-\mu_{a_t})^2\bigg] \\
    & = -\Delta \sum_{t=1}^{\tau} \ind\{a_t\in\cU\}(r_t-1/2) +
    \frac{\Delta^2}{2}
    N_{\cU}(\tau).
\end{align*}
Denote $M_t:=-\Delta\sum_{s=1}^t\ind\{a_s\in\cU\}(r_s-1/2)$ and $V_t:=\Delta^2N_{\cU}(t)$. Exactly as in the proof of Lemma~\ref{lem:pull-count-tail}, since $r_t-1/2$ is $1$-sub-Gaussian conditioned on $\cG_t$ under $\PP_\theta$ (regardless of the sign of $\Delta$), $\exp(\lambda M_t-\lambda^2V_t/2)$ is a super-martingale under $\PP_\theta$ for every $\lambda\in\RR$. Invoking Lemma~\ref{lem:sub-gaussian-super-martingale} with $x=\Delta\sqrt{2m\ell}$, and using $V_{\tau}\leq m\Delta^2$, we obtain
\begin{align*}
    \PP_{\theta}\big(M_\tau \ge \Delta\sqrt{2m\ell}\big) \leq \exp\bigg(-\frac{2m\Delta^2\ell}{2m\Delta^2}\bigg) = \exp(-\ell).
\end{align*}
Since $\Delta^2N_{\cU}(\tau)/2 \leq m\Delta^2/2$, this gives
\begin{align*}
    \PP_{\theta} \bigg(\log \frac{\ud\PP_\theta^{\tau}}{\ud\PP_{\mu}^{\tau}} \geq \Delta\sqrt{2m\ell} + m\Delta^2/2 \bigg) \leq \exp(-\ell).
\end{align*}
Recalling $\Delta=(4\sqrt{m\ell})^{-1}$, we have $\Delta\sqrt{2m\ell}+m\Delta^2/2 = 1/(2\sqrt2) + 1/(32\ell) < \log2$ since $\ell\geq1$, so
\begin{align}
    \PP_{\theta} \bigg(\log \frac{\ud\PP_\theta^{\tau}}{\ud\PP_{\mu}^{\tau}} > \log 2 \bigg) \leq \exp(-\ell). \label{eq:exploration-profile-minimax-optimal-llr}
\end{align}

We now transfer the bound on $\PP_\theta(H^{\complement})$ to $\mu$. Since $H^{\complement}\in\cF_{\tau}$, we apply Proposition~\ref{prop:change-of-measure} on $(\Omega,\cF_{\tau})$ with $\PP=\PP_\theta^{\tau}$, $\QQ=\PP_{\mu}^{\tau}$, $X=\ind\{H^{\complement}\}$, $b=1$ and $\gamma=\log2$; using~\eqref{eq:exploration-profile-minimax-optimal-llr}, this gives
\begin{align}
    \PP_{\mu}\big(H^{\complement}\big) \geq \frac{1}{2}\Big[\PP_\theta\big(H^{\complement}\big) - \exp(-\ell)\Big]. \label{eq:exploration-profile-minimax-optimal-com}
\end{align}

It remains to bound the left-hand side of~\eqref{eq:exploration-profile-minimax-optimal-com} using the regret guarantee of $\alg$ on $\mu$. Every arm in $\cU^{\complement}$ is suboptimal under $\mu$ with gap $\Delta$, so
\begin{align*}
    \regret_{\mu,\alg}(T) = \Delta\,\EE_{\mu}\big[N_{\cU^{\complement}}(T)\big] = \Delta\big(T-\EE_{\mu}[N_{\cU}(T)]\big).
\end{align*}
On the event $H^{\complement}=\{N_{\cU}(T)<m\}$ we have $N_{\cU^{\complement}}(T)>T-m$, so, using~\eqref{eq:exploration-profile-minimax-optimal-com} and $m\leq T/2$,
\begin{align*}
    \regret_{\mu,\alg}(T) & \geq \Delta(T-m)\PP_{\mu}\big(H^{\complement}\big) \\
    & \geq \frac{\Delta(T-m)}{2}\Big[\PP_\theta\big(H^{\complement}\big)-\exp(-\ell)\Big] \\
    & \geq \frac{\Delta T}{4}\Big[\PP_\theta\big(H^{\complement}\big)-\exp(-\ell)\Big].
\end{align*}
On the other hand, since $\mu\in\class{K}{B}$, the assumed minimax-optimality of $\alg$ on $\class{K}{B}$ gives
\begin{align*}
    \regret_{\mu,\alg}(T) \leq \Gamma\frac{K-B}{\sqrt{KB}}\sqrt{T} \leq \Gamma\sqrt{\frac{K}{B}}\sqrt{T},
\end{align*}
where the second inequality uses $K-B\leq K$. Combining the last two displays and rearranging leads to
\begin{align*}
    \PP_\theta\big(H^{\complement}\big) \leq \exp(-\ell) + \frac{4\Gamma}{\Delta\sqrt{T}}\sqrt{\frac{K}{B}} = \exp(-\ell) + 16\Gamma\sqrt{\frac{K\ell m}{BT}},
\end{align*}
where the equality substitutes $\Delta^{-1}=4\sqrt{m\ell}$. This proves the claimed result in~\eqref{eq:exploration-profile-minimax-optimal-main}.
\end{proof}


\subsection{Proof of Theorem~\ref{thm:non-adaptivity-A-small}}

\begin{proof}[Proof of Theorem~\ref{thm:non-adaptivity-A-small}]

The proof follows a Le Cam two-point argument similar to the proof of Theorem~\ref{thm:lower-few-optimal-arms}. For simplicity, for any subset $\mathcal{S} \subseteq \mathcal{A}$, we define $N_{\mathcal{S}}(T) := \sum_{a \in \mathcal{S}} N_{a}(T)$. Fix an arbitrary subset $\mathcal{S}_0 \subseteq \mathcal{A}$ with $|\mathcal{S}_0|=A$. For a parameter $\Delta \in (0,1/4]$ to be specified later, consider the reference instance $\nu_0$ with reward function
\begin{align*}
    r_0(a) :=
    \begin{cases}
        1/2, & a \in \mathcal{S}_0,\\
        1/2-\Delta, & a \in \mathcal{S}_0^\complement.
    \end{cases}
\end{align*}
Thus, $\nu_0$ has exactly $A$ optimal arms, namely the arms in $\mathcal{S}_0$. Let $\mathbb{P}_0$ and $\mathbb{E}_0$ denote probability and expectation under $\nu_0$ and $\mathsf{Alg}$. By the assumed regret guarantee of $\mathsf{Alg}$ on every instance with $A$ optimal arms,
\begin{align*}
    \Delta \mathbb{E}_0[N_{\mathcal{S}_0^\complement}(T)]
    =
    \operatorname{Regret}_{\nu_0,\mathsf{Alg}}(T)
    \leq R_A := \Gamma_{K,T}\frac{K-A}{\sqrt{KA}}\sqrt{T},
\end{align*}
which gives that $\mathbb{E}_0[N_{\mathcal{S}_0^\complement}(T)] \leq R_A/\Delta$.

Since $B \leq K-A$, we can choose a subset $\mathcal{S}_1 \subseteq \mathcal{S}_0^\complement$ with $|\mathcal{S}_1|=B$ consisting of the $B$ least-sampled arms in $\mathcal{S}_0^\complement$. By averaging,
\begin{align}
    \mathbb{E}_0[N_{\mathcal{S}_1}(T)]
    \leq
    \frac{B}{K-A}
    \mathbb{E}_0[N_{\mathcal{S}_0^\complement}(T)]
    \leq
    \frac{BR_A}{(K-A)\Delta},
    \label{eq:nonadapt-lightly-sampled-B}
\end{align}
where the first inequality holds due to that $\mathcal{S}_1$ is the least sampled set under $\nu_0$. We now construct the alternative instance $\nu_1$ by promoting the arms in $\mathcal{S}_1$:
\begin{align*}
    r_1(a) :=
    \begin{cases}
        1/2, & a \in \mathcal{S}_0,\\
        1/2+\Delta, & a \in \mathcal{S}_1,\\
        1/2-\Delta, & a \in (\mathcal{S}_0 \cup \mathcal{S}_1)^\complement.
    \end{cases}
\end{align*}
The instance $\nu_1$ has exactly $B$ optimal arms, namely the arms in $\mathcal{S}_1$. Let $\mathbb{P}_1$ and $\mathbb{E}_1$ denote probability and expectation under $\nu_1$ and $\mathsf{Alg}$.

We next apply a change-of-measure argument. Since $N_{\mathcal{S}_1}(T) \in [0,T]$, Lemma~\ref{lem:tv-change-of-measure} gives
\begin{align}
    \mathbb{E}_1[N_{\mathcal{S}_1}(T)] \leq     \mathbb{E}_0[N_{\mathcal{S}_1}(T)] + T \operatorname{TV}(\mathbb{P}_0,\mathbb{P}_1) \leq \mathbb{E}_0[N_{\mathcal{S}_1}(T)] + T \sqrt{\frac{1}{2}\operatorname{KL}(\mathbb{P}_0\|\mathbb{P}_1)}, \label{eq:nonadapt-TV-expectation}
\end{align}
where the last inequality holds due to Pinsker's inequality (Lemma~\ref{lem:pinsker}). To control the KL-divergence, we see that the instances $\nu_0$ and $\nu_1$ differ only on the arms in $\mathcal{S}_1$. For every $a \in \mathcal{S}_1$, the reward distribution is $\mathcal{N}(1/2-\Delta,1)$ under $\nu_0$ and $\mathcal{N}(1/2+\Delta,1)$ under $\nu_1$. Hence, by the KL decomposition for adaptive bandit experiments,
\begin{align*}
    \operatorname{KL}(\mathbb{P}_0\|\mathbb{P}_1)
    &=
    \sum_{a \in \mathcal{S}_1}
    \mathbb{E}_0[N_{a}(T)]
    \operatorname{KL}\big(
        \mathcal{N}(1/2-\Delta,1)
        \|
        \mathcal{N}(1/2+\Delta,1)
    \big)\\
    &=
    2\Delta^2
    \mathbb{E}_0[N_{\mathcal{S}_1}(T)]\\
    &\leq
    \frac{2\Delta BR_A}{K-A},
\end{align*}
where the last inequality follows from~\eqref{eq:nonadapt-lightly-sampled-B}.

Now choose $\Delta := \min\{1/4,\,(K-A)/(16BR_A)\}$. Then $\operatorname{KL}(\mathbb{P}_0\|\mathbb{P}_1) \leq 1/8$, and consequently we know that $\operatorname{TV}(\mathbb{P}_0,\mathbb{P}_1) \leq 1/4$. Moreover, by~\eqref{eq:nonadapt-lightly-sampled-B},
\begin{align*}
    \mathbb{E}_0[N_{\mathcal{S}_1}(T)] \leq
    \frac{BR_A}{K-A}
    \max\bigg\{
        4,
        \frac{16BR_A}{K-A}
    \bigg\} =
    \max\bigg\{
        \frac{4BR_A}{K-A},
        \frac{16B^2R_A^2}{(K-A)^2}
    \bigg\} \leq
    \frac{T}{4},
\end{align*}
where the last inequality follows from the condition
\begin{align*}
    T \geq \max\{16BR_A/(K-A),\,64B^2R_A^2/(K-A)^2\},
\end{align*}
which can be derived directly from $B \leq A/(8\Gamma_{K,T})$ and $T \geq 256B^2\Gamma^2_{K,T}/(KA)$. Now we combine this inequality with~\eqref{eq:nonadapt-TV-expectation} and $\operatorname{TV}(\mathbb{P}_0,\mathbb{P}_1) \leq 1/4$, which leads to
\begin{align*}
    \mathbb{E}_1[N_{\mathcal{S}_1}(T)]
    \leq
    \frac{T}{4}
    +
    \frac{T}{4}
    =
    \frac{T}{2}.
\end{align*}
Finally, under $\nu_1$, every arm outside $\mathcal{S}_1$ has suboptimality gap at least $\Delta$. Therefore,
\begin{align*}
    \operatorname{Regret}_{\nu_1,\mathsf{Alg}}(T)
    \geq  \Delta   \mathbb{E}_1[T-N_{\mathcal{S}_1}(T)] \geq  \frac{\Delta T}{2}
    =  \min\bigg\{\frac{T}{8}, \textcolor{red}{ \frac{(K-A)T}{32BR_A} } \bigg\}.
\end{align*}
Recall the condition that $T \geq KA(16B^2\Gamma_{K,T}^2)^{-1}$, we have that $T/8 \geq (K-A)T/(32BR_A)$, leading to 
\begin{align*}
    \regret_{\nu_1,\mathsf{Alg}}(T) \geq \frac{(K-A)T}{32BR_A} = \frac{\sqrt{KA}}{32B\Gamma_{K,T}}\sqrt{T}.
\end{align*}
Since $\nu_1$ has exactly $B$ optimal arms, this proves the theorem.
\end{proof}

\subsection{Proof of Theorem~\ref{thm:non-adaptivity-A-large}}

In this section, we provide the proof of Theorem~\ref{thm:non-adaptivity-A-large}. We first need the following technical result.

\begin{lemma}[Aggregation of square-root tails]
\label{lem:aggregation-square-root-tail}
Let $X_1,\ldots,X_q$ be nonnegative integer-valued random variables, not necessarily independent. Suppose that, for some $\delta,c\geq 0$, each $X_i$ satisfies
\begin{align*}
    \mathbb{P}[X_i \geq s]
    \leq
    \delta + \frac{c}{\sqrt{s}}
\end{align*}
for every $i\in[q]$ and every integer $s\geq 1$. Then, for any integer $m\geq 1$,
\begin{align*}
    \mathbb{P}\bigg[\sum_{i=1}^q X_i > m\bigg]
    \leq
    2q\delta + \frac{3qc}{\sqrt{m}}.
\end{align*}
\end{lemma}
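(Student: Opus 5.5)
The plan is to truncate each variable at level $m$, apply Markov's inequality to the truncated sum, and bound each truncated mean through the layer-cake formula and the given tail. This approach does not need independence or any per-variable union bound. It should in fact give the stronger bound $q\delta+2qc/\sqrt m$, which implies the stated $2q\delta+3qc/\sqrt m$ since $\delta,c\geq0$.

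\textbf{Step 1 (truncation).} Set $Y_i:=\min\{X_i,m\}$ for $i\in[q]$. I claim that
\begin{align*}
\Big\{\sum_{i=1}^q X_i>m\Big\}\subseteq\Big\{\sum_{i=1}^q Y_i\geq m\Big\}.
\end{align*}
There are two cases.
\begin{itemize}
\item If some $X_i\geq m$, then $Y_i=m$. Since all $Y_j\geq 0$, the truncated sum is at least $m$.
\item Otherwise $Y_i=X_i$ for all $i$, so $\sum_i Y_i=\sum_i X_i>m$.
\end{itemize}
Because each $Y_i\geq0$, Markov's inequality then gives
\begin{align*}
\PP\Big[\sum_i X_i>m\Big]\leq \frac{1}{m}\sum_{i=1}^q\EE[Y_i].
\end{align*}

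\textbf{Step 2 (layer-cake bound).} Since $Y_i$ is a nonnegative integer bounded by $m$,
\begin{align*}
\EE[Y_i]=\sum_{s=1}^{m}\PP[Y_i\geq s]=\sum_{s=1}^m\PP[X_i\geq s]\leq m\delta+c\sum_{s=1}^m s^{-1/2}.
\end{align*}
Comparing the sum with an integral gives $\sum_{s=1}^m s^{-1/2}\leq \int_0^m x^{-1/2}\,\ud x=2\sqrt m$, because $s^{-1/2}$ is decreasing. Hence $\EE[Y_i]\leq m\delta+2c\sqrt m$.

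\textbf{Step 3 (conclusion).} Substituting into Step 1,
\begin{align*}
\PP\Big[\sum_i X_i>m\Big]\leq q\delta+\frac{2qc}{\sqrt m}\leq 2q\delta+\frac{3qc}{\sqrt m}.
\end{align*}

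The only step needing care is the event inclusion in Step 1, namely handling a single large $X_i$. Truncating at exactly $m$ resolves it, because one such variable alone already makes the truncated sum reach $m$. The rest is routine.
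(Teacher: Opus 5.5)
Your proof is correct and follows the same skeleton as the paper's: truncate at $m$, apply the tail-sum formula with the given tail bound, then Markov's inequality. The one difference is in the covering event. The paper uses $\{\max_i X_i>m\}\cup\{\sum_i \tilde X_i>m\}$ and pays an extra union-bound term $q\delta+qc/\sqrt{m}$ for the first event. Your inclusion into $\{\sum_i Y_i\geq m\}$, together with the non-strict Markov step, absorbs that event, which is why you obtain the sharper $q\delta+2qc/\sqrt{m}$.
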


\begin{proof}
For each $i\in[q]$, define the truncated random variable $\tilde{X}_i := X_i \wedge m$. We decompose the event $\{\sum_{i=1}^q X_i > m\}$ according to whether any individual random variable exceeds the truncation level $m$. In particular,
\begin{align*}
    \bigg\{\sum_{i=1}^q X_i > m\bigg\}
    \subseteq
    \bigg\{\max_{i\in[q]}X_i > m\bigg\}
    \cup
    \bigg\{\sum_{i=1}^q \widetilde{X}_i > m\bigg\}.
\end{align*}
For the first event, by a union bound and the assumed tail bound,
\begin{align*}
    \mathbb{P}\bigg[\max_{i\in[q]}X_i > m\bigg] \leq
    \sum_{i=1}^q \mathbb{P}[X_i > m] \leq
    q\delta + \frac{qc}{\sqrt{m}}.
\end{align*}
For the second event, since $X_i$ is nonnegative and integer-valued, the tail-sum formula gives
\begin{align*}
    \mathbb{E}[\widetilde{X}_i] = \sum_{s=1}^m \mathbb{P}[X_i\geq s] \leq  m\delta + c\sum_{s=1}^m\frac{1}{\sqrt{s}} \leq  m\delta + 2c\sqrt{m},
\end{align*}
where the last inequality follows from $\sum_{s=1}^m 1/\sqrt{s} \leq 2\sqrt{m}$. Therefore, by Markov's inequality,
\begin{align*}
    \mathbb{P}\bigg[\sum_{i=1}^q \widetilde{X}_i > m\bigg] \leq  \frac{1}{m}  \sum_{i=1}^q \mathbb{E} [\widetilde{X}_i] \leq q\delta + \frac{2qc}{\sqrt{m}}.
\end{align*}
Combining the two bounds yields
\begin{align*}
    \mathbb{P}\bigg[\sum_{i=1}^q X_i > m\bigg]
    \leq
    2q\delta + \frac{3qc}{\sqrt{m}},
\end{align*}
which finishes the proof.
\end{proof}

Now we are ready to prove Theorem~\ref{thm:non-adaptivity-A-large}. 

\begin{proof}[Proof of Theorem~\ref{thm:non-adaptivity-A-large}]
For convenience, define $R_A := \Gamma_{K,T} (K-A)\sqrt{T}/K$, which is the near-minimax optimal rate on $\class{K}{A}$. Fix an arbitrary subset $\mathcal{U}\subseteq\mathcal{A}$ with $|\mathcal{U}|=B$. For a parameter $\Delta>0$ to be specified later, consider the bandit instance $\nu$ with reward function
\begin{align*}
    r_\nu(a)
    :=
    \begin{cases}
        1/2+\Delta, & a\in\mathcal{U},\\
        1/2, & a\in\mathcal{U}^{\complement}.
    \end{cases}
\end{align*}
Thus, $\nu$ has exactly $B$ optimal arms, namely the arms in $\mathcal{U}$. Recall that $\theta$ denotes the all-equal reference instance with $r_\theta(a)=1/2$ for every $a\in\mathcal{A}$. We use $\mathbb{P}_\nu$ and $\mathbb{P}_\theta$ to denote the trajectory distributions induced by $\mathsf{Alg}$ under $\nu$ and $\theta$, respectively.

We first show that, under the reference instance $\theta$, the algorithm samples $\mathcal{U}$ only $\widetilde{O}(q^2R_A^2)$ times with constant probability, where
$q:=\lceil B/(K-A)\rceil$. Partition $\mathcal{U}$ into $q$ disjoint subsets
$\mathcal{U}_1,\ldots,\mathcal{U}_q$ such that $|\mathcal{U}_i|\leq K-A$ for every $i\in[q]$. For each $i\in[q]$, extend $\mathcal{U}_i$ to a subset $\mathcal{S}_i\subseteq\mathcal{A}$ satisfying $\mathcal{U}_i\subseteq\mathcal{S}_i$ and $|\mathcal{S}_i|=K-A$. Then
\begin{align*}
    N_{\mathcal{U}}(T)
    =
    \sum_{i=1}^q N_{\mathcal{U}_i}(T)
    \leq
    \sum_{i=1}^q N_{\mathcal{S}_i}(T).
\end{align*}
Set $\ell:=\log(16q)$. Then, for every $s\in[T]$ and every $i\in[q]$, applying Lemma~\ref{lem:pull-count-tail} gives
\begin{align*}
    \mathbb{P}_\theta
    \big[N_{\mathcal{S}_i}(T)\geq s\big]
    \leq
    \frac{1}{16q}
    +
    8R_A\sqrt{\frac{\ell}{s}}.
\end{align*}
Since $N_{\mathcal{S}_i}(T)\leq T$, the same inequality holds trivially for $s>T$. Therefore, applying Lemma~\ref{lem:aggregation-square-root-tail} to
$X_i:=N_{\mathcal{S}_i}(T)$ with $\delta=1/(16q)$ and
$c=8R_A\sqrt{\ell}$ gives, for every $m\geq1$,
\begin{align*}
    \mathbb{P}_\theta
    \bigg[
        \sum_{i=1}^qN_{\mathcal{S}_i}(T)>m
    \bigg]
    \leq
    \frac18
    +
    24qR_A\sqrt{\frac{\ell}{m}}.
\end{align*}

Now we choose $m=\lceil 4096q^2R_A^2\ell \rceil$. Then $24qR_A\sqrt{\ell/m}\leq3/8$, which leads to 
\begin{align*}
    \mathbb{P}_\theta \bigg[ \sum_{i=1}^qN_{\mathcal{S}_i}(T)>m \bigg] \leq  \frac12.
\end{align*}
Since $N_{\mathcal{U}}(T)\leq\sum_{i=1}^qN_{\mathcal{S}_i}(T)$, we conclude that
\begin{align}
    \mathbb{P}_\theta
    \big[N_{\mathcal{U}}(T)\leq m\big]
    \geq
    \frac12.
    \label{eq:dense-to-sparse-typical-pulls}
\end{align}
We now show that $T\geq2m$. Since $B\geq K-A$, we have $q = \lceil B/(K-A) \rceil \leq 2B/(K-A)$. Consequently, $\ell \leq \log(32B/(K-A))$. Using the definition of $R_A$,
\begin{align*}
    4096q^2R_A^2\ell & \leq 16384 \frac{B^2}{(K-A)^2}  \Gamma_{K,T}^2 \frac{(K-A)^2T}{K^2} \log\bigg(\frac{32B}{K-A}\bigg) \\
    & = 16384\frac{       B^2\Gamma_{K,T}^2 \log\big(32B/(K-A)\big) }{K^2} T.
\end{align*}
By the assumed condition
\begin{align*}
    B\Gamma_{K,T}\sqrt{ \log\bigg(\frac{32B}{K-A}\bigg)} \leq \frac{K}{256},
\end{align*}
the right-hand side is at most $T/4$. Therefore,
\begin{align*}
    m =\lceil 4096q^2R_A^2\ell \rceil \leq 4096q^2R_A^2\ell + 1 \leq \frac{T}{4}+1 \leq     \frac{T}{2},
\end{align*}
where the last inequality follows from $T\geq4$.

We now transfer~\eqref{eq:dense-to-sparse-typical-pulls} from the all-equal reference instance $\theta$ to the hard instance $\nu$. Once again, we consider the stopping time
\begin{align*}
    \mathcal{E} := \big\{N_{\mathcal{U}}(T)\leq m\big\}, \qquad \tau:=     \inf\big\{t\in[T]:N_{\mathcal{U}}(t)=m+1\big\}
    \wedge T.
\end{align*}
so that $\mathbb{P}_\theta[\mathcal{E}]\geq1/2$. Notice that $\mathcal{E}\in\mathcal{F}_\tau$ and $N_{\mathcal{U}}(\tau)\leq m+1$ almost surely. Since $\theta$ and $\nu$ differ only on the arms in $\mathcal{U}$, where the reward distribution is $\mathcal{N}(1/2,1)$ under $\theta$ and $\mathcal{N}(1/2+\Delta,1)$ under $\nu$, the KL decomposition for the stopped trajectory gives
\begin{align*}
    \KL\big(
        \mathbb{P}_\theta|_{\mathcal{F}_\tau}
        \|
        \mathbb{P}_\nu|_{\mathcal{F}_\tau}
    \big)
    =
    \frac{\Delta^2}{2}
    \mathbb{E}_\theta[N_{\mathcal{U}}(\tau)]
    \leq
    \frac{\Delta^2(m+1)}{2}.
\end{align*}
Now we choose $\Delta:=1/(4\sqrt{m+1})$. Then we see that $\kl{\PP_{\theta}|_{\cF_\tau}}{\PP_{\nu}|_{\cF_\tau}} \leq 1/32$. By Pinsker's inequality (Lemma~\ref{lem:pinsker}), $\operatorname{TV(\PP_{\theta|}|_{\cF_\tau}}, \PP_{\nu}|_{\cF_\tau}) \leq 1/8$. Since $\mathcal{E}\in\mathcal{F}_\tau$, it follows from~\eqref{eq:dense-to-sparse-typical-pulls} that
\begin{align*}
    \mathbb{P}_\nu[\mathcal{E}] \geq \mathbb{P}_\theta[\mathcal{E}] -    \operatorname{TV}\big(\mathbb{P}_\theta|_{\mathcal{F}_\tau},      \mathbb{P}_\nu|_{\mathcal{F}_\tau} \big) \geq \frac{3}{8}.
\end{align*}

Recall that the arms in $\mathcal{U}$ are the unique optimal arms under $\nu$, while every arm in $\mathcal{U}^{\complement}$ has suboptimality gap $\Delta$. Therefore,
\begin{align*}
    \regret_{\nu,\mathsf{Alg}}(T) =  \Delta \mathbb{E}_\nu
    \big[
        T-N_{\mathcal{U}}(T)
    \big] \geq   \Delta(T-m) \mathbb{P}_\nu[\mathcal{E}] \geq   \frac{3\Delta T}{16} =     \frac{3T}{64\sqrt{m+1}},
\end{align*}
where the last inequality holds due to $m\leq T/2$.

It remains to simplify this lower bound. Define $\bar{\ell} = \log(32B/(K-A))$. Since $B/(K-A)\geq1$, we have $\ell=\log(16q)\geq\log(16B/(K-A))\geq(4/5)\bar{\ell}$. Together with the assumed horizon condition $T \geq K^2/(B^2\Gamma_{K,T}^2\bar{\ell})$ gives
\begin{align*}
    q^2R_A^2\ell \geq \frac{B^2}{(K-A)^2} \Gamma_{K,T}^2 \frac{(K-A)^2T}{K^2} \ell \geq \frac{4}{5} \frac{B^2\Gamma_{K,T}^2T\bar{\ell}}{K^2}  \geq\frac45.
\end{align*}
Consequently, $m + 1 \leq 4096 q^2R_A^2\ell +2 \leq 8192 q^2R_A^2\ell$. Therefore,
\begin{align*}
    \regret_{\nu,\mathsf{Alg}}(T) \geq \frac{3T}{64\sqrt{m+1}} \geq     \frac{3}{8192\sqrt{2}} \frac{\sqrt{KA}}{B\Gamma_{K,T}\sqrt{\log(32B/(K-A))}}\sqrt{T},
\end{align*}
where the second inequality uses $q\leq2B/(K-A)$, $K \geq A$ and $\ell\leq\log(32B/(K-A))$. This finishes the proof.
\end{proof}

\section{High-probability Lower Bound}
\label{app:high-prob}
As discussed in Remark~\ref{rmk:high-prob}, one may ask whether the $\tilde{O}((K-A)\sqrt{T}/K)$ rate in Theorem \ref{thm:upper-bound} can be extended to the high-probability setting. 
In this section, we will show that it is impossible and there is a necessary payoff  to achieve the near-optimal expected rate. In particular, let $\{a_t\}_{t=1}^T$ be the actions selected by $\alg$. We write
\begin{align*}
    R_T = \sum_{t=1}^T r^* - \sum_{t=1}^T r(a_t).
\end{align*}
As a result, we have $\EE[R_T] = \regret(T)$. Our main hardness result is summarized in the following theorem:

\begin{theorem}\label{thm:high-prob-lower}
Suppose $A>K/2$, so that $(K-A)/K < 1/2$, and fix $\Gamma\ge1$. Define $\ell = \log(16K/(K-A))$ and $c_0 = 13/128$. Suppose that $\alg$ is a strategy satisfying for any problem instance in $\class{K}{A}$,
\begin{align*}
    \regret(T)\le\Gamma\frac{K-A}{K}\sqrt T,
\end{align*}
that is, a policy that achieves the minimax-optimal expected regret up to a
factor of $\Gamma$.
If $T\ge {8\Gamma^2\ell}/{c_0^2}$, then there exists a problem instance $\mu\in\class{K}{A}$ such that
\begin{align}
    \PP_{\mu,\alg}\bigg(R(T) \ge \frac{c_0}{8\sqrt2} \frac{\sqrt T}{\Gamma\ell} \bigg)  \ge\frac{K-A}{32K}. \label{eq:high-prob-lower-main}
\end{align}
\end{theorem}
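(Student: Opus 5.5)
The plan is to reuse the construction of Lemma~\ref{lem:pull-count-tail}, but at the scale $m$ of order $T$, and extract a high-probability statement rather than an expectation statement. The intuition is that a near-optimal algorithm must, under the all-equal instance $\theta$, put order $T$ pulls on some size-$(K-A)$ set with probability about $(K-A)/K$. Transferring this event to the corresponding instance $\nu_{\cS}$ with gap $\Delta=\tilde\Theta(1/\sqrt{T})$ then gives realized regret $\Delta\cdot\Omega(T)=\tilde\Omega(\sqrt T)$ with probability $\Omega((K-A)/K)$.

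\textbf{Step 1 (lower envelope on $\theta$).} Apply the lower bound of Lemma~\ref{lem:pull-count-tail} with $\ell=\log(16K/(K-A))$, so that $\exp(-\ell)=(K-A)/(16K)$. Choose $m=\lfloor c\,T/(\Gamma^2\ell)\rfloor$ with a small constant $c$ so that $8\Gamma\frac{K-A}{K}\sqrt{m\ell/T}\le \frac{K-A}{4K}$. This yields a set $\cS$ with $|\cS|=K-A$ and
\[
\PP_\theta[N_{\cS}(T)\ge m]\ge \tfrac{K-A}{K}\bigl(1-\tfrac1{16}-\tfrac14\bigr),
\]
which is a constant multiple of $(K-A)/K$; the constant $c_0=13/128$ presumably arises from this bookkeeping. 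The hypothesis $T\ge 8\Gamma^2\ell/c_0^2$ ensures $m\ge1$.

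\textbf{Step 2 (change of measure).} Take $\mu=\nu_{\cS}$ with $\Delta=(4\sqrt{m\ell})^{-1}$. Using the stopping time $\tau_{\cS}$, the event $H_{\cS}=\{N_{\cS}(T)\ge m\}\in\cF_{\tau_{\cS}}$, and the log-likelihood bound \eqref{eq:pull-count-tail-llr}, Proposition~\ref{prop:change-of-measure} with $\gamma=\log 2$ gives
\[
\PP_\mu(H_{\cS})\ge \tfrac12\bigl[\PP_\theta(H_{\cS})-\exp(-\ell)\bigr].
\]
Substituting the Step~1 bound, the right-hand side is of order $(K-A)/K$; after fixing constants we aim for $(K-A)/(32K)$.

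\textbf{Step 3 (realized regret).} On $H_{\cS}$ the realized regret under $\mu$ satisfies $R_T=\Delta N_{\cS}(T)\ge\Delta m=\sqrt{m}/(4\sqrt\ell)$. With $m\approx c_0^2T/(8\Gamma^2\ell)$ this becomes a constant times $c_0\sqrt T/(\Gamma\ell)$, matching the threshold $\frac{c_0}{8\sqrt2}\frac{\sqrt T}{\Gamma\ell}$ in \eqref{eq:high-prob-lower-main}. The floor in $m$ costs only a factor of $\sqrt2$ because $m\ge1$.

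\textbf{Main obstacle.} The only delicate point is tuning $m$ and the constants. The term $8\Gamma\frac{K-A}{K}\sqrt{m\ell/T}$ must be a small fraction of $(K-A)/K$, and $\Delta m$ must stay of order $\sqrt T/(\Gamma\ell)$. These two requirements are compatible exactly because the error term scales linearly in $(K-A)/K$. The assumption $A>K/2$ is used only to keep $\ell$ and the related quantities well defined; no new probabilistic ideas are needed beyond the machinery of Lemma~\ref{lem:pull-count-tail}.
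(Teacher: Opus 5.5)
Your proposal follows the paper's proof step for step: the lower envelope of Lemma~\ref{lem:pull-count-tail} with $\ell=\log(16K/(K-A))$, the instance $\nu_{\cS}$ with $\Delta=(4\sqrt{m\ell})^{-1}$, the stopped change of measure at $\gamma=\log 2$, and $R(T)=\Delta N_{\cS}(T)\ge\Delta m$ on $H_{\cS}$.

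The one thing to repair is the constant bookkeeping in Step~1, which as written contradicts Step~3. Requiring $8\Gamma\frac{K-A}{K}\sqrt{m\ell/T}\le\frac{K-A}{4K}$ forces $m\le T/(1024\Gamma^2\ell)$. At the smallest admissible horizon this quantity equals $128/169$, so your $m$ can round down to $0$, and the claim that the hypothesis on $T$ ensures $m\ge1$ fails. In any case this choice gives $\Delta m\le\frac{\sqrt T}{128\Gamma\ell}<\frac{c_0}{8\sqrt2}\frac{\sqrt T}{\Gamma\ell}$, so the stated threshold is not reached.

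The paper instead takes $\frac{c_0^2T}{8\Gamma^2\ell}\le m\le\frac{c_0^2T}{4\Gamma^2\ell}$. For example, $m=\lfloor c_0^2T/(4\Gamma^2\ell)\rfloor$ works, since the hypothesis on $T$ makes $c_0^2T/(4\Gamma^2\ell)\ge 2$. It then tolerates a larger error term, $8\Gamma\frac{K-A}{K}\sqrt{m\ell/T}\le 4c_0\frac{K-A}{K}=\frac{13(K-A)}{32K}$. This still gives $\PP_\theta(H_{\cS})\ge\frac{K-A}{2K}$, hence $\PP_{\mu}(H_{\cS})\ge\frac{7(K-A)}{32K}\ge\frac{K-A}{32K}$. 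At the same time $\Delta m=\frac{\sqrt m}{4\sqrt\ell}\ge\frac{c_0}{8\sqrt2}\frac{\sqrt T}{\Gamma\ell}$ exactly.
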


Theorem~\ref{thm:high-prob-lower} indicates that, for any near-minimax optimal algorithm $\alg$, there exists some instance, such that the pseudo-regret of $\alg$ reaches $\tilde\Omega(\sqrt T)$, with probability $\Omega((K-A)/K)$. 

\begin{remark}
    It is widely known that UCB-typed algorithm, while achieving a near-minimax-optimal rate, simultaneously achieves a $\tilde{O}(\sqrt{KT}\text{polylog}(\delta^{-1}))$ high-probability upper bound. Theorem~\ref{thm:high-prob-lower} shows that, the analogue rate, $\tilde{O}\big((K-A)K^{-1} \sqrt{T} \cdot \mathrm{polylog}(\delta^{-1})\big)$ is impossible when $A = K - o(K)$. In particular, making $\tilde{O}\big((K-A)K^{-1} \sqrt{T} \cdot \mathrm{polylog}(\delta^{-1})\big) = \tilde{O}(\sqrt{T})$ requires $\delta \sim \exp(-K/(K-A)) \ll (K-A)/K$, which contradicts~\eqref{eq:high-prob-lower-main}.
\end{remark}

\begin{proof}[Proof of Theorem~\ref{thm:high-prob-lower}]
Recall that $\ell = \log(16K/(K-A))$ and $c_0 = 13/128$. Since $T\geq 8\Gamma^2\ell/c_0^2$, we can choose an integer $m$ satisfying
\begin{align}
    \frac{c_0^2T}{8\Gamma^2\ell}
    \leq m
    \leq
    \frac{c_0^2T}{4\Gamma^2\ell}.
    \label{eq:high-prob-lower-choice-m}
\end{align}
Let $\theta$ be the instance such that $\btheta=\mathbf{1}/2\in\RR^K$ denote the all-equal auxiliary instance. We first apply Lemma~\ref{lem:pull-count-tail} to characterize the exploration behavior of $\alg$ under $\theta$. By the lower bound in Lemma~\ref{lem:pull-count-tail},
\begin{align*}
    \max_{\cS\subseteq[K]:\,|\cS| = K-A} \PP_{\theta}\big[N_{\cS}(T) \geq m\big] & \geq \frac{K-A}{K} - \exp(-\ell) - 8\Gamma\frac{K-A}{K}\sqrt{\frac{m\ell}{T}} \\
    & \geq  \frac{K-A}{K} - \frac{K-A}{16K} - \frac{13(K-A)}{32K} \\
    &\geq \frac{K-A}{2K},
\end{align*}
where the second inequality follows from $\exp(-\ell)=(K-A)/(16K)$ and the upper bound on $m$ in~\eqref{eq:high-prob-lower-choice-m}. Therefore, there exists a subset $\cS\subseteq[K]$ with $|\cS|=K-A$ such that
\begin{align}
    \PP_{\theta}\big[N_{\cS}(T)\geq m\big]
    \geq
    \frac{K-A}{2K}.
    \label{eq:high-prob-lower-reference-tail}
\end{align}

We now construct the hard instance $\mu$ by decreasing the mean rewards of the arms in $\cS$ by $\Delta$, with $\Delta = (4\sqrt{m\ell})^{-1}$, namely
\begin{align*}
    \mu_a :=  \begin{cases}
        \frac{1}{2} - \Delta, & a\in\cS,\\
        \frac{1}{2}, & a\in\cS^\complement.
    \end{cases}
\end{align*}
Since $|\cS|=K-A$, the instance $\mu$ has exactly $A$ optimal arms and hence belongs to $\class{K}{A}$.
Now we transfer~\eqref{eq:high-prob-lower-reference-tail} from the auxiliary instance $\theta$ to $\mu$. As usual, we prove that at $\tau$, the probability distributions induced by $\theta$ and $\mu$ are still close, therefore leading to that $\PP_{\mu} [H] \gtrsim (K-A)/K$, where $H$ and $\tau$ are defined by
\begin{align*}
    H:=\big\{N_{\cS}(T)\geq m\big\},
    \qquad
    \tau:=\inf\big\{t\in[T]:N_{\cS}(t)=m\big\}\wedge T.
\end{align*}
In particular, consider the
trajectory distributions restricted to $\cF_{\tau}$ by $\PP_\theta^{\tau} := \PP_\theta|_{\cF_{\tau}}$ and $\PP_{\mu}^{\tau} := \PP_{\mu}|_{\cF_{\tau}}$. Applying Proposition~\ref{prop:gaussian-bandit-likelihood-ratio} gives that
\begin{align*}
    \log \frac{\ud\PP_\theta|_{\cF_{\tau}}}{\ud\PP_\mu|_{\cF_{\tau}}} & = \sum_{t=1}^{\tau} \bigg[ (\theta_{a_t}-\mu_{a_t}) (r_t-\theta_{a_t}) + \frac{1}{2}(\theta_{a_t}-\mu_{a_t})^2\bigg] \\
    & = \Delta \sum_{t=1}^\tau \ind\{a_t\in\cS\}(r_t-1/2) +
    \frac{\Delta^2}{2}
    N_{\cS}(\tau).
\end{align*}
Now we denote $M_t = \Delta \sum_{s=1}^t \ind\{a_s\in\cS\}(r_s-1/2)$ and $V_t = \Delta^2 N_{\cS}(t)$, then the probability distribution $\PP_\theta$, we see that for any $\lambda \in \RR$,
\begin{align*}
    & \EE_{\theta}\bigg[\exp\bigg(\lambda M_t-\frac{\lambda^2V_t}{2} \bigg) \bigg| \cF_{t-1}\bigg] \\
    & \quad = \exp\bigg(\lambda M_{t-1}-\frac{\lambda^2V_{t-1}}{2} \bigg) \EE_{\theta}\bigg[\underbrace{\exp\bigg(\lambda (r_t-1/2) \Delta\ind\{a_t\in\cS\}-\frac{\lambda^2\Delta^2 \ind\{a_t\in\cS\}}{2} \bigg)}_{(\star)} \bigg| \cF_{t-1}\bigg] 
\end{align*}
For $(\star)$, we first take expectation on $\cG_t$, which gives that 
\begin{align*}
    \EE_{\theta}[(\star) | \cG_t] &  =1 - \ind\{a_t\in\cS\} +  \ind\{a_t\in\cS\} \EE_{\theta}\bigg[\exp\bigg(\lambda (r_t-1/2) \Delta -\frac{\lambda^2\Delta^2 }{2} \bigg) \bigg| \cG_t \bigg] \\
    & \leq 1 - \ind\{a_t\in\cS\} + \ind\{a_t\in\cS\} \\
    & = 1,
\end{align*}
where the inequality holds due to that $r_t-1/2$ is $1$-sub-gaussian conditioned on $\cG_t$ on $\PP_\theta$. This immediately gives $\EE_{\theta}[(\star) | \cF_{t-1}] \leq 1$, verifying that $\exp(\lambda M_t - \lambda^2 V_t /2)$ is a super-martingale for any $\lambda$. Now invoking Lemma~\ref{lem:sub-gaussian-super-martingale} with $x = \Delta\sqrt{2m\ell}$, since $V_\tau \leq m\Delta^2$, we have
\begin{align*}
    \PP_{\theta}\big(M_\tau \ge \Delta\sqrt{2m\ell}\big) \leq \exp\bigg(-\frac{2m\Delta^2\ell}{2m\Delta^2}\bigg) = \exp(-\ell).
\end{align*}
Since $\Delta^2 N_{\cS}(\tau_{\cS}) /2 \leq m\Delta^2/2$, this immediately leads to
\begin{align*}
    \PP_{\theta} \bigg(\log \frac{\ud\PP_\theta^{\tau}}{\ud\PP_{\mu}^{\tau}} \geq \Delta\sqrt{2m\ell} + m\Delta^2/2 \bigg) \leq \exp(-\ell).
\end{align*}
Recall that $\Delta = (4\sqrt{m\ell})^{-1}$, this means that $\Delta\sqrt{2m\ell} + m\Delta^2/2 = 1/2\sqrt{2} + 1/32\ell < \log 2$ since $\ell \geq 1$. Therefore,
\begin{align*}
    \PP \bigg(\log \frac{\ud\PP_\theta^{\tau}}{\ud\PP_{\mu}^{\tau}} > \log 2 \bigg) \leq \exp(-\ell).
\end{align*}
Now we apply Proposition~\ref{prop:change-of-measure} on the measurable space $(\Omega,\cF_{\tau_{\cS}})$ with $\PP= \PP_\theta^{\tau}$, $\QQ = \PP_{\mu}^{\tau}$, $X = \ind\{H\}$, $b=1$ and $\gamma = \log 2$ and obtain that
\begin{align*}
    \PP_{\mu}(H) = \PP_{\mu}^{\tau}(H) \geq \frac{1}{2} \bigg[\PP_\theta^{\tau}(H) - \PP \bigg(\log \frac{\ud\PP_\theta^{\tau}}{\ud\PP_{\mu}^{\tau}} > \log 2 \bigg) \bigg] \geq \frac{1}{2} \big[ \PP_\theta(H)  -\exp(-\ell) \big],
\end{align*}
where the equation holds due to that $H \in \cF_{\tau}$. 

Finally, every pull of an arm in $\cS$ incurs regret $\Delta$ under $\mu$. Hence, on the event $H$,
\begin{align*}
    R(T)
    \geq
    \Delta m
    =
    \frac{\sqrt{m}}{4\sqrt{\ell}}.
\end{align*}
Using the lower bound on $m$ in~\eqref{eq:high-prob-lower-choice-m}, we obtain
\begin{align*}
    \Delta m
    \geq
    \frac{1}{4\sqrt{\ell}}
    \sqrt{\frac{c_0^2T}{8\Gamma^2\ell}}
    =
    \frac{c_0}{8\sqrt{2}}
    \frac{\sqrt{T}}{\Gamma\ell}.
\end{align*}
Therefore,
\begin{align*}
    \PP_{\mu}\bigg(
        R(T)
        \geq
        \frac{c_0}{8\sqrt{2}}
        \frac{\sqrt{T}}{\Gamma\ell}
    \bigg)
    \geq
    \frac{K-A}{32K},
\end{align*}
which proves the desired result.
\end{proof}

\section{Auxiliary Lemmas}\label{app:auxiliary}

The following standard stopped martingale inequality is a direct
consequence of the time-uniform Chernoff bound of
\citet[Theorem~1]{howard2020time}.

\begin{lemma}[Stopped sub-Gaussian martingale inequality]
\label{lem:sub-gaussian-super-martingale}
Let $(M_t,V_t)_{t=0}^T$ be an adapted process with
$M_0=V_0=0$ and $(V_t)_{t=0}^T$ nondecreasing. Suppose that, for
every $s>0$, $\exp(sM_t - s^2V_t/2)$ is a nonnegative supermartingale. Then, for any stopping time  $\tau\leq T$ and every $x,v>0$,
\begin{align}
    \PP\big(
        M_\tau\geq x,\,
        V_\tau\leq v
    \big)
    \leq
    \exp\left(
        -\frac{x^2}{2v}
    \right).
    \label{eq:stopped-sub-gaussian-martingale}
\end{align}
In particular, if $V_\tau\leq v$ almost surely, then
\begin{align*}
    \PP(M_\tau\geq x)
    \leq
    \exp\left(
        -\frac{x^2}{2v}
    \right).
\end{align*}
\end{lemma}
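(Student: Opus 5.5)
The plan is to derive the bound from Ville's maximal inequality combined with optional stopping, which is the standard route to time-uniform Chernoff bounds (cf.\ \citet[Theorem~1]{howard2020time}). Fix $x,v>0$ and a stopping time $\tau\le T$. Choose the tilting parameter $s:=x/v>0$ and set $Z_t:=\exp(sM_t-s^2V_t/2)$. By hypothesis $Z_t$ is a nonnegative supermartingale, and $Z_0=1$ because $M_0=V_0=0$.

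Since $\tau$ is bounded by $T$, optional stopping for supermartingales gives $\EE[Z_\tau]\le \EE[Z_0]=1$. No integrability issue arises, because the horizon is finite and each $Z_t$ is integrable by the supermartingale assumption. Alternatively, one can write $\EE[Z_\tau]=\sum_{n\le T}\EE[Z_n\ind\{\tau=n\}]$ and telescope using $\{\tau\ge n\}\in\cF_{n-1}$.

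Now compare $Z_\tau$ with the target event. On $\{M_\tau\ge x,\,V_\tau\le v\}$ we have
\[
sM_\tau-\tfrac{s^2}{2}V_\tau\ \ge\ sx-\tfrac{s^2v}{2}\ =\ \tfrac{x^2}{v}-\tfrac{x^2}{2v}\ =\ \tfrac{x^2}{2v},
\]
so $Z_\tau\ge e^{x^2/(2v)}$ on this event. Markov's inequality then yields
\[
\PP(M_\tau\ge x,\,V_\tau\le v)\ \le\ e^{-x^2/(2v)}\,\EE[Z_\tau]\ \le\ e^{-x^2/(2v)},
\]
which is \eqref{eq:stopped-sub-gaussian-martingale}. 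The ``in particular'' statement follows because, when $V_\tau\le v$ almost surely, the event $\{V_\tau\le v\}$ has full probability and can be dropped.

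The only step that needs care is the optional stopping step, and specifically that the supermartingale property is assumed for every $s>0$. Because of this, $s$ can be fixed deterministically as $x/v$ before stopping; it never has to depend on the random $V_\tau$, which would break the supermartingale argument. The monotonicity of $V_t$ plays no role in this argument; it is needed only in the time-uniform version, which is not required here.
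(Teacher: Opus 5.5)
Your proof is correct, but it takes a different route from the paper. The paper gives no argument of its own: it cites the lemma as a consequence of the time-uniform Chernoff bound of \citet[Theorem~1]{howard2020time}. The implicit route is to bound the larger event $\{\exists t\le T: M_t\ge x,\ V_t\le v\}$, using Ville's maximal inequality for $\exp(sM_t-s^2V_t/2)$ with $s=x/v$. One then notes that $\{M_\tau\ge x,\ V_\tau\le v\}$ is contained in that event.

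You instead work directly at the single bounded stopping time $\tau$. Optional stopping gives $\EE[Z_\tau]\le 1$, and Markov's inequality on $\{Z_\tau\ge e^{x^2/(2v)}\}$ finishes.

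The two routes share one mechanism, since Ville's inequality is itself optional stopping at the first crossing time followed by Markov. Your version is self-contained and needs only optional stopping for bounded times. That suffices for every use in the paper, which applies the lemma with $\tau=T$ or with the hitting times $\tau_{\cS}$, always with $V_\tau\le v$ almost surely. The time-uniform route gives a stronger conclusion that holds simultaneously over all $t$, hence over all stopping times at once, but it relies on an external result.

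One small correction to your closing remark: monotonicity of $V_t$ is not needed for the time-uniform line-crossing version either. The same fixed-$s$ Ville argument works without it. This does not affect your proof.
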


The following lemma characterizes the change-of-measure under total variation, which is the core components in standard lower bound techniques like Le-Cam's method~\citep{yu1997assouad}. We refer to~\citet[Lemma~2.1]{tsybakov2009introduction} for the proof.

\begin{lemma}\label{lem:tv-change-of-measure}
Let $(\Omega,\mathcal{F})$ be a measurable space, and let
$\mathbb{P}$ and $\mathbb{Q}$ be probability measures on it. Then, for
every $\mathcal{F}$-measurable function $f:\Omega\to[0,T]$, 
\begin{align*}
    \big| \EE_{\PP}[f] - \EE_{\QQ}[f] \big| \leq  T\operatorname{TV}(\mathbb{P},\mathbb{Q}),
\end{align*}
where $\operatorname{TV}(\mathbb{P},\mathbb{Q}) :=  \sup_{E\in\mathcal{F}} \big| \mathbb{P}(E)-\mathbb{Q}(E) \big|$.
\end{lemma}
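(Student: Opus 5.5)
The plan is to use the layer-cake (tail-integral) representation of a bounded nonnegative function. This reduces the expectation gap to an integral of probability gaps over events, and each such gap is controlled directly by the definition $\operatorname{TV}(\mathbb{P},\mathbb{Q}) = \sup_{E\in\mathcal{F}}|\mathbb{P}(E)-\mathbb{Q}(E)|$. No density or Hahn decomposition is needed, although the latter would also work.

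First, since $f$ takes values in $[0,T]$, for every $\omega\in\Omega$ we write $f(\omega) = \int_0^T \ind\{f(\omega)>s\}\,\ud s$. For each $s\in[0,T]$, the set $E_s:=\{f>s\}$ belongs to $\mathcal{F}$ because $f$ is $\mathcal{F}$-measurable. The map $(\omega,s)\mapsto \ind\{f(\omega)>s\}$ is jointly measurable and bounded, so Tonelli's theorem applies under both $\mathbb{P}$ and $\mathbb{Q}$. This gives
\begin{align*}
    \EE_{\mathbb{P}}[f] = \int_0^T \mathbb{P}(E_s)\,\ud s, \qquad \EE_{\mathbb{Q}}[f] = \int_0^T \mathbb{Q}(E_s)\,\ud s .
\end{align*}

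Next, we subtract the two identities and move the absolute value inside the integral:
\begin{align*}
    \big|\EE_{\mathbb{P}}[f]-\EE_{\mathbb{Q}}[f]\big| \leq \int_0^T \big|\mathbb{P}(E_s)-\mathbb{Q}(E_s)\big|\,\ud s \leq \int_0^T \operatorname{TV}(\mathbb{P},\mathbb{Q})\,\ud s = T\operatorname{TV}(\mathbb{P},\mathbb{Q}).
\end{align*}
The second inequality uses only that each $E_s$ is an admissible event in the supremum defining the total variation.

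The only step that needs care is the interchange of integrals. It is justified because the integrand is nonnegative and jointly measurable; joint measurability follows since $\{(\omega,s): f(\omega)>s\}$ is measurable in the product $\sigma$-algebra. The proof is otherwise routine. As a fallback, one could instead take the Hahn decomposition $\Omega=\Omega^+\cup\Omega^-$ of the signed measure $\mathbb{P}-\mathbb{Q}$. This gives $\int f\,\ud(\mathbb{P}-\mathbb{Q}) \leq T(\mathbb{P}-\mathbb{Q})(\Omega^+) \leq T\operatorname{TV}(\mathbb{P},\mathbb{Q})$, and the other sign follows by symmetry.
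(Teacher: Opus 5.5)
Your proof is correct, but it takes a different route from the paper. The paper does not prove this lemma itself; it defers to \citet[Lemma~2.1]{tsybakov2009introduction}, which is Scheff\'e's identity $\operatorname{TV}(\mathbb{P},\mathbb{Q}) = \frac{1}{2}\int|p-q|\,\ud\nu = \int_{\{p>q\}}(p-q)\,\ud\nu$, where $p,q$ are densities with respect to a common dominating measure such as $\nu=\mathbb{P}+\mathbb{Q}$. The stated bound then follows from $\EE_{\mathbb{P}}[f]-\EE_{\mathbb{Q}}[f]=\int f\,(p-q)\,\ud\nu \leq T\int_{\{p>q\}}(p-q)\,\ud\nu$ together with symmetrization. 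That is essentially your Hahn-decomposition fallback written in density form.

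Your main layer-cake argument never introduces a dominating measure, densities, or the Radon--Nikodym theorem. It rewrites the expectation gap as an integral of event-probability gaps over the level sets $\{f>s\}$. It then uses only the sup-over-events definition of total variation, exactly as the lemma states it, plus Tonelli on $[0,T]$, which is a finite measure space. So it is self-contained and arguably the more elementary proof of this specific inequality.

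What the Scheff\'e/density route offers in return is structure. It identifies the extremal event $\{p>q\}$ and gives the $L^1$ formula for total variation, which is the form that connects to Pinsker's inequality, used alongside this lemma in the paper. For the lemma alone, your argument is complete and sufficient.
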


\begin{lemma}[Pinsker's inequality]
\label{lem:pinsker}
Let $(\Omega,\mathcal{F})$ be a measurable space, and let
$\mathbb{P}$ and $\mathbb{Q}$ be probability measures on it. Then
\begin{align*}
    \operatorname{TV}(\mathbb{P},\mathbb{Q}) \leq \sqrt{ \frac{1}{2}   \operatorname{KL}(\mathbb{P}\|\mathbb{Q})}.
\end{align*}
\end{lemma}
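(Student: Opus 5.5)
The plan is to prove Pinsker's inequality, $\mathrm{TV}(\PP,\QQ)\le\sqrt{\tfrac12\KL(\PP\|\QQ)}$, by reducing it to a two-point (Bernoulli) inequality via the data-processing inequality, and then verifying the two-point case by elementary calculus.

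First, if $\PP\not\ll\QQ$ then $\KL(\PP\|\QQ)=\infty$ and there is nothing to prove, so assume $\PP\ll\QQ$. Fix any event $E\in\mathcal{F}$ and write $p=\PP(E)$, $q=\QQ(E)$. Consider the measurable map $\omega\mapsto\ind\{\omega\in E\}$. Under $\PP$ and $\QQ$ it pushes forward to $\mathrm{Bern}(p)$ and $\mathrm{Bern}(q)$, respectively. The data-processing inequality for KL gives
\begin{align*}
    \KL(\PP\|\QQ)\ \ge\ \mathrm{kl}(p,q) := p\log\frac{p}{q}+(1-p)\log\frac{1-p}{1-q}.
\end{align*}
I would justify this step directly rather than cite it. Split $\int \log(\ud\PP/\ud\QQ)\,\ud\PP$ over $E$ and $E^\complement$. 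On each piece, apply Jensen's inequality (the log-sum inequality) to the convex function $x\mapsto x\log x$ under the normalized restriction of $\QQ$. This yields $\int_E \frac{\ud\PP}{\ud\QQ}\log\frac{\ud\PP}{\ud\QQ}\,\ud\QQ\ge p\log(p/q)$, and the same bound with $1-p,1-q$ on $E^\complement$.

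Second, I would show the scalar inequality $\mathrm{kl}(p,q)\ge 2(p-q)^2$ for all $p,q\in[0,1]$. Fix $p$ and set $g(q)=\mathrm{kl}(p,q)-2(p-q)^2$. Then $g(p)=0$, and
\begin{align*}
    g'(q)=\frac{q-p}{q(1-q)}-4(q-p)=(q-p)\Big(\frac{1}{q(1-q)}-4\Big).
\end{align*}
Since $q(1-q)\le 1/4$, the bracketed factor is nonnegative. So $g$ is nonincreasing for $q<p$ and nondecreasing for $q>p$, which gives $g\ge 0$. Boundary values $q\in\{0,1\}$ either make $\mathrm{kl}=\infty$ or are handled by continuity.

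Combining the two steps gives $|\PP(E)-\QQ(E)|\le\sqrt{\KL(\PP\|\QQ)/2}$ for every $E$. Taking the supremum over $E\in\mathcal{F}$ yields the claim, with $\mathrm{TV}$ as defined in Lemma~\ref{lem:tv-change-of-measure}.

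The main obstacle is the data-processing step. One must handle the measure-theoretic details carefully: the cases where $p$ or $q$ equals $0$ or $1$, and the use of the conventions $0\log 0=0$ and $p\log(p/0)=\infty$. The calculus in the second step is routine.
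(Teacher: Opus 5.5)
Your proof is correct and complete. It is the standard textbook argument: you push forward onto $\ind\{\omega\in E\}$ by data processing, then verify $\mathrm{kl}(p,q)\ge 2(p-q)^2$ from the sign of $g'(q)=(q-p)\bigl(\frac{1}{q(1-q)}-4\bigr)$. Your edge-case handling is exactly what is needed. Under $\PP\ll\QQ$, $q=0$ forces $p=0$, so the $E$-piece contributes zero. Also, $x\log x\ge -1/e$ makes splitting the KL integral over $E$ and $E^\complement$ legitimate.

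The paper itself gives no proof of this lemma; it lists Pinsker's inequality among the auxiliary results as a standard fact. So there is no competing route to compare against, and your derivation fills in what the paper takes for granted.
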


The following lemma is standard for proof of minimax lower bound and we refer to~\citet[Theorem~14.2]{lattimore2020bandit} for the proof.

\begin{lemma}[Bretagnolle--Huber inequality]
\label{lem:bretagnolle-huber}
Let $(\Omega,\mathcal{F})$ be a measurable space, and let
$\mathbb{P}$ and $\mathbb{Q}$ be probability measures on it. Then
for every event $E\in\mathcal{F}$,
\begin{align*}
    \mathbb{P}(E) + \mathbb{Q}(E^{\complement}) \geq \frac{1}{2} \exp\big( -\operatorname{KL}(\mathbb{P}\|\mathbb{Q})\big).
\end{align*}
\end{lemma}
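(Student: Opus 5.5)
We will prove the Bretagnolle--Huber inequality by passing through the overlap $\int \min(p,q)$ and the Bhattacharyya coefficient $\int\sqrt{pq}$. If $\mathrm{KL}(\mathbb{P}\|\mathbb{Q})=\infty$ (in particular if $\mathbb{P}\not\ll\mathbb{Q}$), the right-hand side is $0$ and there is nothing to prove, so assume it is finite. Fix the dominating measure $\lambda=\mathbb{P}+\mathbb{Q}$ and write $p=\mathrm{d}\mathbb{P}/\mathrm{d}\lambda$ and $q=\mathrm{d}\mathbb{Q}/\mathrm{d}\lambda$.

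\textbf{Step 1 (overlap).} For any $E\in\mathcal{F}$,
\begin{align*}
    \mathbb{P}(E)+\mathbb{Q}(E^{\complement})=\int_E p\,\mathrm{d}\lambda+\int_{E^{\complement}} q\,\mathrm{d}\lambda\geq \int \min(p,q)\,\mathrm{d}\lambda.
\end{align*}

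\textbf{Step 2 (Cauchy--Schwarz).} Since $\sqrt{pq}=\sqrt{\min(p,q)\max(p,q)}$, Cauchy--Schwarz gives
\begin{align*}
    \bigg(\int\sqrt{pq}\,\mathrm{d}\lambda\bigg)^2\leq \int\min(p,q)\,\mathrm{d}\lambda\int\max(p,q)\,\mathrm{d}\lambda.
\end{align*}
Because $\min(p,q)+\max(p,q)=p+q$, we have $\int\max(p,q)\,\mathrm{d}\lambda=2-\int\min(p,q)\,\mathrm{d}\lambda\leq 2$. Hence $\int\min(p,q)\,\mathrm{d}\lambda\geq\frac12\big(\int\sqrt{pq}\,\mathrm{d}\lambda\big)^2$.

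\textbf{Step 3 (Jensen).} On $\{p>0\}$ we write
\begin{align*}
    \int\sqrt{pq}\,\mathrm{d}\lambda\geq\int_{\{p>0\}}p\sqrt{q/p}\,\mathrm{d}\lambda=\mathbb{E}_{\mathbb{P}}\Big[\exp\Big(\tfrac12\log(q/p)\Big)\Big].
\end{align*}
Applying Jensen's inequality to the convex function $\exp$, this is at least $\exp\big(\tfrac12\mathbb{E}_{\mathbb{P}}[\log(q/p)]\big)=\exp\big(-\tfrac12\mathrm{KL}(\mathbb{P}\|\mathbb{Q})\big)$.

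\textbf{Conclusion.} Squaring the bound from Step 3 and chaining it with Steps 1 and 2 yields
\begin{align*}
    \mathbb{P}(E)+\mathbb{Q}(E^{\complement})\geq\tfrac12\exp\big(-\mathrm{KL}(\mathbb{P}\|\mathbb{Q})\big),
\end{align*}
which is the claim.

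The only delicate point is the measure-theoretic bookkeeping in Step 3. Absolute continuity $\mathbb{P}\ll\mathbb{Q}$ ensures $q>0$ $\mathbb{P}$-a.s., so $\log(q/p)$ is well defined $\mathbb{P}$-a.s. Finiteness of the KL divergence makes $\mathbb{E}_{\mathbb{P}}[\log(q/p)]$ finite, which justifies the use of Jensen. Everything else is elementary.
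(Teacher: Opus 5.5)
Your proof is correct and is essentially the argument the paper relies on: the paper gives no proof of its own and defers to \citet[Theorem~14.2]{lattimore2020bandit}. That textbook proof is the same three-step chain you use: first the overlap $\int\min(p,q)$, then Cauchy--Schwarz with $\int\max(p,q)\le 2$, then Jensen applied to $\int\sqrt{pq}$. Your treatment of the case $\operatorname{KL}(\mathbb{P}\|\mathbb{Q})=\infty$ and of integrability for Jensen is also sound.
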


\bibliographystyle{ims}
\bibliography{reference}

\end{document}